\title{Revisit Visual Prompt Tuning: The Expressiveness of Prompt Experts}
\author{
Minh Le$^{*3\dagger}$, \ Anh Nguyen$^{*1}$, \ Huy Nguyen$^{2}$, \ Chau Nguyen$^{4\dagger}$, \ Anh Tran$^{1}$, \ Nhat Ho$^{2}$ \vspace{0.2cm} \\
$^1$ Qualcomm AI Research$^{\ddagger}$ \quad 
$^2$ The University of Texas at Austin \quad \vspace{0.1cm} \\
$^3$ Trivita AI \quad
$^4$ Hanoi National University of Education
}
\begin{document}

\maketitle

\newcommand\blfootnote[1]{%
  \begingroup
  \renewcommand\thefootnote{}\footnotetext{#1}%
  \endgroup
}

\blfootnote{$^*$ Equal contribution \quad $^\dagger$ Work done while at Qualcomm}
\blfootnote{$^\ddagger$ Qualcomm AI Research is an initiative of Qualcomm Technologies, Inc.}

\begin{abstract}
Visual Prompt Tuning (VPT) has proven effective for parameter-efficient adaptation of pre-trained vision models to downstream tasks by inserting task-specific learnable prompt tokens. Despite its empirical success, a comprehensive theoretical understanding of VPT remains an active area of research. Building on the recently established connection between Mixture of Experts (MoE) and prompt-based methods, wherein each attention head can be conceptualized as a composition of multiple MoE models, we reinterpret VPT as the introduction of new \emph{prompt experts} into these MoE structures. We identify a key limitation in existing VPT frameworks: the \emph{restricted functional expressiveness} of prompt experts, which remain static and thus limited in their adaptability. To address this, we propose \textbf{Visual Adaptive Prompt Tuning (VAPT)}, a novel method that endows prompt experts with enhanced expressiveness while preserving parameter efficiency. Empirical evaluations on VTAB-1K and FGVC demonstrate that VAPT achieves \emph{substantial performance improvements}, surpassing fully fine-tuned baselines by \textbf{7.34\%} and \textbf{1.04\%}, respectively. Moreover, VAPT consistently outperforms VPT while \emph{requiring fewer additional parameters}. Furthermore, our theoretical analysis indicates that VAPT achieves optimal sample efficiency. Collectively, these results underscore the theoretical grounding and empirical advantages of our approach. Our code is publicly available at \url{https://github.com/Minhchuyentoancbn/VAPT}.
\end{abstract}

\theoremstyle{plain}
\theoremstyle{definition}
\theoremstyle{remark}

\newcommand{\strongconvex}{\mu}
\newcommand{\smooth}{L_1}
\newcommand{\smoothprior}{L_2}
\newcommand{\subgaussian}{\sigma}
\newcommand{\barFn}{\bar{F}_n}

\newtheorem*{remark}{Remark}

\newtheorem{assumption}{Assumption}


\newtheorem{lemma}{Lemma}
\newtheorem{example}{Example}
\newtheorem{theorem}{Theorem}
\newtheorem{proposition}{Proposition}
\newtheorem{definition}{Definition}
\newtheorem{corollary}{Corollary}
\newenvironment{assumptionprime}[1]
  {\renewcommand{\theassumption}{\ref{#1}$'$}%
   \addtocounter{assumption}{-1}%
   \begin{assumption}}
  {\end{assumption}}

\def\st{\textnormal{s.t.}}
\def\sgn{\texttt{sign}}
\newcommand{\todo}[1]{\textcolor{red}{[Todo: #1]}}


\newcommand{\dbzijn}{\Delta \beta_{0ij}^{n}}
\newcommand{\dboijn}{\Delta \beta_{1ij}^{n}}
\newcommand{\daijn}{\Delta a_{ij}^{n}}
\newcommand{\dbijn}{\Delta b_{ij}^{n}}
\newcommand{\dsijn}{\Delta \sigma_{ij}^{n}}

\newcommand{\bzin}{\beta^{n}_{0i}}
\newcommand{\boin}{\beta^{n}_{1i}}
\newcommand{\ain}{a_i^n}
\newcommand{\bin}{b_i^n}
\newcommand{\sigmain}{\sigma_i^n}

\newcommand{\bzj}{\beta_{0j}^{*}}
\newcommand{\boj}{\beta_{1j}^{*}}
\newcommand{\aj}{a_{j}^{*}}
\newcommand{\bj}{b_{j}^{*}}
\newcommand{\sigmaj}{\sigma_{j}^{*}}

\newcommand{\bzjp}{\beta_{0j'}^{*}}
\newcommand{\bojp}{\beta_{1j'}^{*}}
\newcommand{\ajp}{a_{j'}^{*}}
\newcommand{\bjp}{b_{j'}^{*}}
\newcommand{\sigmajp}{\sigma_{j'}^{*}}

\newcommand{\zerod}{\mathbf{0}_d}
\newcommand{\ktilde}{\tilde{k}}
\newcommand{\Dtilde}{\widetilde{D}}

\newcommand{\brj}{\bar{r}(|\mathcal{A}_j|)}
\newcommand{\trj}{\tilde{r}(|\mathcal{A}_j|)}
\newcommand{\trjp}{\tilde{r}(|\mathcal{A}_{j'}|)}

\newcommand{\brone}{\bar{r}(|\mathcal{A}_1|)}
\newcommand{\dboione}{\Delta_{t_2} \beta_{1i1}^{n}}
\newcommand{\dbzione}{\Delta \beta_{0i1}^{n}}
\newcommand{\daione}{\Delta a_{i1}^{n}}
\newcommand{\dbione}{\Delta b_{i1}^{n}}
\newcommand{\dsione}{\Delta \sigma_{i1}^{n}}

\newcommand{\trone}{\tilde{r}(|\mathcal{A}_1|)}
\newcommand{\trs}{\tilde{r}(|\mathcal{A}_{j^*}|)}

\newcommand{\dint}{\mathrm{d}}

\newcommand{\brackets}[1]{\left[ #1 \right]}
\newcommand{\parenth}[1]{\left( #1 \right)}
\newcommand{\bigparenth}[1]{\big( #1 \big)}
\newcommand{\biggparenth}[1]{\bigg( #1 \bigg)}
\newcommand{\braces}[1]{\left\{ #1 \right \}}
\newcommand{\abss}[1]{\left| #1 \right |}
\newcommand{\angles}[1]{\left\langle #1 \right \rangle}
\newcommand{\tp}{^\top}
\newcommand{\ceil}[1]{\left\lceil #1 \right\rceil}

\def\st{\textnormal{s.t.}}
\def\sgn{\texttt{sign}}
\newcommand{\norm}[1]{\left\lVert#1\right\rVert}

\def\TM{\texttt{T}}
\def\OT{\textnormal{OT}}
\def\TW{\textnormal{TW}}
\def\TSW{\textnormal{TSW}}

\def\RR{\mathbb{R}}
\def\DD{\mathbb{D}}
\def\NN{\mathbb{N}}
\def\PP{\mathbb{P}}
\def\MM{\mathbb{M}}
\def\SS{\mathbb{S}}
\def\EE{\mathbb{E}}
\def\FF{\mathbb{F}}
\def\TT{\mathbb{T}}
\def\XX{\mathbb{X}}
\def\QQ{\mathbb{Q}}
\def\FF{\mathbb{F}}

\def\Ff{\mathcal{F}}
\def\Hh{\mathcal{H}}
\def\Gg{\mathcal{G}}
\def\Ee{\mathcal{E}}
\def\Pp{\mathcal{P}}
\def\Ss{\mathcal{S}}
\def\Ww{\mathcal{W}}
\def\Ff{\mathcal{F}}
\def\Rr{\mathcal{R}}
\def\Nn{\mathcal{N}}
\def\Xx{\mathcal{X}}
\def\Tt{\mathcal{T}}
\def\Mm{\mathcal{M}}
\def\Qq{\mathcal{Q}}

\newcommand{\xbm}{{\bm x}}
\newcommand{\xtid}{{\Tilde{\xbm}}}
\newcommand{\Xbm}{{\bm X}}
\newcommand{\Xtid}{{\Tilde{\Xbm}}}

\newcommand{\ybm}{{\bm y}}
\newcommand{\Ybm}{{\bm Y}}

\newcommand{\wbm}{{\bm w}}
\newcommand{\Wbm}{\bm{W}}
\newcommand{\Wq}{\Wbm^{\rm q}}
\newcommand{\Wk}{\Wbm^{\rm k}}
\newcommand{\Wv}{\Wbm^{\rm v}}

\newcommand{\bbm}{{\bm b}}
\newcommand{\bq}{\bbm^{\rm q}}
\newcommand{\bk}{\bbm^{\rm k}}

\newcommand{\Abm}{{\bm A}}
\newcommand{\cbm}{{\bm c}}

\newcommand{\kbm}{{\bm k}}
\newcommand{\Kbm}{{\bm K}}

\newcommand{\ubm}{{\bm u}}
\newcommand{\Ubm}{{\bm U}}

\newcommand{\vbm}{{\bm v}}
\newcommand{\Vbm}{{\bm V}}

\newcommand{\qbm}{{\bm q}}
\newcommand{\Qbm}{{\bm Q}}

\newcommand{\pbm}{{\bm p}}
\newcommand{\Pbm}{{\bm P}}

\newcommand{\abm}{{\bm \alpha}}
\newcommand{\sbm}{{\bm s}}
\newcommand{\gbm}{{\bm g}}
\newcommand{\hbm}{{\bm h}}

\newcommand{\ebm}{{\bm e}}

\newcommand{\vtheta}{{\bm \theta}}
\newcommand{\veta}{{\bm \eta}}

\newcommand{\LS}{\mathcal{LS}}
\newcommand{\NS}{\mathcal{NS}}
\newcommand{\CS}{\mathcal{CS}}
\newcommand{\NCS}{\mathcal{NCS}}
\newcommand{\CSb}{\mathcal{CS}\text{-b}}
\newcommand{\CSd}{\mathcal{CS}\text{-d}}
\newcommand{\CSs}{\mathcal{CS}\text{-s}}
\newcommand{\NCSb}{\mathcal{NCS}\text{-b}}
\newcommand{\NCSd}{\mathcal{NCS}\text{-d}}
\newcommand{\NCSs}{\mathcal{NCS}\text{-s}}
\newcommand{\CSW}{\text{CSW}}
\newcommand{\NCSW}{\text{NCSW}}
\newcommand{\NCSWb}{\mathcal{NCSW}\text{-b}}
\newcommand{\NCSWd}{\mathcal{NCSW}\text{-d}}
\newcommand{\NCSWs}{\mathcal{NCSW}\text{-s}}

\newcommand{\pop}{F}
\newcommand{\nop}{F_n}
\newcommand{\popNGD}{F^{\texttt{NGD}}}
\newcommand{\nopNGD}{F_n^{\texttt{NGD}}}

\newcommand{\xbf}{\mathbf{x}}
\newcommand{\ybf}{\mathbf{y}}
\newcommand{\wbf}{\mathbf{w}}
\newcommand{\bbf}{\mathbf{b}}

\newcommand*{\vertbar}{\rule[-1ex]{0.5pt}{2.5ex}}
\newcommand*{\horzbar}{\rule[.5ex]{2.5ex}{0.5pt}}

\newcommand{\NormGD}{NormGD}
\newcommand{\ds}{\displaystyle}

\newcommand{\argmax}{arg\,max}
\newcommand{\argmin}{arg\,min}
\newcommand{\bbP}{\mathbb{P}}
\newcommand{\bbE}{\mathbb{E}}
\newcommand{\var}{\mathrm{Var}}

\newcommand{\softmax}{\mathrm{softmax}}
\newcommand{\sigmoid}{\mathrm{sigmoid}}
\newcommand{\gelu}{\mathrm{GELU}}

\newcommand{\diag}{\mathrm{diag}}

\def\st{{\em s.t.~}}
\def\ie{{\em i.e.,~}}
\def\eg{{\em e.g.,~}}
\def\cf{{\em cf.,~}}
\def\ea{{\em et al.~}}
\newcommand{\iid}{i.i.d.}
\newcommand{\wrt}{w.r.t.}

\newcommand{\deijn}{\Delta \eta_{ij}^{n}}

\newcommand{\dboin}{\Delta \beta_{1i}^{n}}
\newcommand{\dbzin}{\Delta \beta_{0i}^{n}}

\newcommand{\dain}{\Delta a_{i}^{n}}
\newcommand{\dbin}{\Delta b_{i}^{n}}
\newcommand{\dein}{\Delta \eta_{i}^{n}}

\newcommand{\cin}{c_i^n}
\newcommand{\ein}{\eta_i^n}

\newcommand{\boonen}{\beta_{11}^n}
\newcommand{\bzonen}{\beta_{01}^n}
\newcommand{\aonen}{a_1^n}
\newcommand{\bonen}{b_1^n}
\newcommand{\eonen}{\eta_1^n}

\newcommand{\coj}{c_j^0}
\newcommand{\coi}{c_i^0}

\newcommand{\aaoj}{A_j^0}
\newcommand{\aaoi}{A_i^0}

\newcommand{\eoj}{\eta_j^0}
\newcommand{\eoi}{\eta_i^0}

\newcommand{\cj}{c_j^*}

\newcommand{\ej}{\eta_j^*}

\newcommand{\boi}{\beta_{1i}^*}
\newcommand{\bzi}{\beta_{0i}^*}
\newcommand{\ai}{a_i^*}
\newcommand{\bi}{b_i^*}
\newcommand{\ei}{\eta_i^*}

\newcommand{\boone}{\beta_{11}^*}
\newcommand{\bzone}{\beta_{01}^*}
\newcommand{\aone}{a_1^*}
\newcommand{\bone}{b_1^*}
\newcommand{\eone}{\eta_1^*}

\newcommand{\cjp}{c_{j'}^0}
\newcommand{\gjp}{\Gamma_{j'}^0}
\newcommand{\ejp}{\eta_{j'}^0}

\newcommand{\zeroq}{\mathbf{0}_q}
\newcommand{\pizeroone}{\pi_{1}^{0}}
\newcommand{\dtone}{\Delta \tau^{n}}

\newcommand{\deione}{\Delta \eta_{i1}^{n}}

\newcommand{\normf}[1]{\|#1\|_{L_2(\mu)}}

\newcommand{\bfit}[1]{\boldsymbol{#1}}

\newcommand{\prompt}{\bm p}
\newcommand{\dt}{\mathcal{D}_t}
\newcommand{\data}{\mathcal{D}}

\newcommand{\xdom}{\mathcal{X}^{(t)}}
\newcommand{\ydom}{\mathcal{Y}^{(t)}}

\newcommand{\yi}{\mathcal{Y}^{(i)}}
\newcommand{\yj}{\mathcal{Y}^{(j)}}

\newcommand{\normop}{\mathcal{S}_p}
\newcommand{\att}{\mathrm{Attention}}
\newcommand{\dv}{d_v}
\newcommand{\dk}{d_k}

\def\mmoe{\texttt{MMoE}}

\definecolor{burntorange}{RGB}{204, 85, 0}      
\definecolor{crimson}{RGB}{220, 20, 60}        
\definecolor{teal}{RGB}{0, 128, 128}           
\definecolor{indigo}{RGB}{75, 0, 130}          
\definecolor{royalblue}{RGB}{65, 105, 225}     
\definecolor{magenta}{RGB}{255, 0, 255}        

\newcommand{\anh}[1]{\textcolor{burntorange}{[Anh: #1]}}
\newcommand{\nhat}[1]{\textcolor{crimson}{[Nhat: #1]}}
\newcommand{\ducanh}[1]{\textcolor{teal}{[Duc Anh: #1]}}
\newcommand{\minh}[1]{\textcolor{indigo}{[Minh: #1]}}
\newcommand{\huy}[1]{\textcolor{royalblue}{[Huy: #1]}}
\newcommand{\chau}[1]{\textcolor{magenta}{[Chau: #1]}}

\section{Introduction} \label{sec:intro}

Foundational vision models, pre-trained on large-scale datasets~\citep{dosovitskiy2020image, radford2021learning, kirillov2023segment}, have demonstrated remarkable success and robust generalization across a wide range of computer vision tasks. As a result, fine-tuning these models for specific downstream tasks has become a widely adopted paradigm~\citep{iofinova2022well}. However, fully fine-tuning large foundational models can be computationally prohibitive, leading to growing interest in parameter-efficient fine-tuning (PEFT) techniques~\citep{cai2020tinytl, zhang2020side, hu2021lora}, which update only a small subset of parameters. Among these methods, Visual Prompt Tuning (VPT)~\citep{jia2022visual} has emerged as a simple yet powerful approach that appends learnable \emph{prompt} tokens to the input, which serve as task-specific instructions to guide the pre-trained transformer model. Despite VPT’s empirical effectiveness, its theoretical underpinnings remain an active area of research~\citep{petrov2023prompting, oymak2023role, wang2024universality}.

Recently, \citet{le2024mixture} established a formal connection between attention mechanisms~\citep{vaswani2017attention}, prompt-based methods, and Mixture of Experts (MoE) models~\citep{Jacob_Jordan-1991, Quoc-conf-2017}, yielding new insights into the design and optimization of prompting strategies. Their analysis demonstrates that \emph{each attention head} in a transformer can be equivalently interpreted as \emph{a composition of multiple MoE models} stacked together. Within this framework, VPT corresponds to fine-tuning these implicit, pre-trained MoE structures by introducing new, learnable \emph{prompt experts}. These prompt experts collaborate with the pre-trained experts to facilitate effective task adaptation. This connection opens avenues for deeper theoretical investigations and the development of advanced strategies for prompt-based learning~\citep{le2024revisiting, le2024adaptive, diep2025zero}.

Building on this MoE interpretation, we identify a key limitation in the standard formulation of VPT. While the pre-trained experts within attention heads are linear functions of the input features, the newly introduced \emph{prompt experts are modeled as constant, input-invariant vectors}. Given that effective adaptation relies on collaboration between these prompt and pre-trained experts, we hypothesize that this functional disparity, specifically, the restricted expressiveness of static prompts, may constrain VPT's adaptation efficacy. This design also deviates from standard MoE practices, where experts are typically designed to be input‑adaptive. However, increasing the expressiveness of prompt experts raises concerns about parameter and computational overhead. This leads us to the following question:


\begin{tcolorbox}[before skip=0.3cm, after skip=0.3cm, boxsep=0.0cm, middle=0.1cm, top=0.1cm, bottom=0.1cm]
\textit{\textbf{(Q)}} \textit{Can we improve model performance by increasing the expressiveness of prompt experts while maintaining parameter efficiency?}
\end{tcolorbox}

To answer this question, we propose \textbf{Visual Adaptive Prompt Tuning (VAPT)}, a novel approach that incorporates input-dependent prompt experts while \emph{maintaining the parameter efficiency} characteristic of VPT.
The VAPT design comprises two main components: \emph{token-wise projectors} and a shared \emph{feature projector}, which leverage global information from input features to generate adaptive prompt tokens. A key advantage of VAPT is its efficiency: it achieves enhanced expressiveness with minimal computational overhead, adding only \textbf{0.6\%} FLOPs relative to VPT and requiring fewer trainable parameters. Additionally, the token-wise projectors and channel-wise convolutions result in a structurally simple formulation of the prompt experts. This simplicity enables a rigorous theoretical analysis (see Section~\ref{sec:theory}), a contribution largely absent in the prior prompt-tuning literature. Our analysis demonstrates that VAPT attains optimal sample efficiency for prompt estimation. Empirical results strongly corroborate these theoretical findings. For example, in a low-data regime on the Stanford Dogs dataset~\citep{khosla2011novel} (using only \textbf{1\%} of training data), VAPT achieves \textbf{60.1\%} accuracy, a stark contrast to VPT's \textbf{3.6\%}. Moreover, on standard benchmarks VTAB-1K~\citep{zhai2019large} and FGVC~\citep{jia2022visual}, VAPT significantly improves performance over fully fine-tuned baselines by \textbf{7.34\%} and \textbf{1.04\%}, respectively. Crucially, \emph{VAPT consistently outperforms VPT} across benchmarks despite \emph{utilizing fewer parameters}, underscoring its theoretical and empirical strengths.

\textbf{Contributions.}
\textbf{1.} From MoE perspective, we identify a key limitation in the formulation of VPT: its prompt experts are input‑invariant, thereby constraining their functional expressiveness.
\textbf{2.} We introduce \textbf{VAPT}, a novel formulation that injects input‑adaptive prompt experts while preserving the parameter‑efficiency of VPT.
\textbf{3.} VAPT’s simple formulation enables a theoretical analysis to validate its effectiveness, an aspect largely missing in prior work. Our theoretical analysis demonstrates that VAPT attains optimal sample efficiency for prompt estimation, providing a rigorous foundation for its practical value.
\textbf{4.} Extensive experiments show that VAPT consistently surpasses VPT with fewer trainable parameters, validating its effectiveness and efficiency both theoretically and empirically.



\textbf{Notation.} For $n\in\mathbb{N}$, let $[n] = \{1,2,\ldots,n\}$. For a set $S$, $|S|$ denotes its cardinality. Given a vector $u=(u_1,u_2,\ldots,u_d) \in \mathbb{R}^{d}$ and $\alpha=(\alpha_1,\alpha_2,\ldots,\alpha_d)\in\mathbb{N}^d$, define $u^{\alpha}=u_{1}^{\alpha_{1}}u_{2}^{\alpha_{2}}\ldots u_{d}^{\alpha_{d}}$, $|u|=u_1+u_2+\ldots+u_d$ and $\alpha!=\alpha_{1}!\alpha_{2}!\ldots \alpha_{d}!$. Let $\|u\|$ denote the Euclidean norm of $u$. For positive sequences $(a_n)_{n\geq 1}$ and $(b_n)_{n\geq 1}$, we write $a_n = \mathcal{O}(b_n)$ or $a_{n} \lesssim b_{n}$ if $a_n \leq C b_n$ for all $ n\in\mathbb{N}$ for some $C>0$. The notation $a_{n} = \mathcal{O}_{P}(b_{n})$ indicates $a_{n}/b_{n}$ is stochastically bounded. 

\vspace{-0.9em}
\section{Background} \label{sec:background}
\vspace{-0.9em}

In this section, we first review Visual Prompt Tuning in Section~\ref{sec:background-prompt} and then the Mixture of Experts model in Section~\ref{sec:background-moe}. Additional related work is presented in Appendix~\ref{appendix:related_work}.

\vspace{-0.9em}
\subsection{Visual Prompt Tuning} \label{sec:background-prompt}
\vspace{-0.7em}

Vision Transformer (ViT) \citep{dosovitskiy2020image} has proven to be a powerful backbone architecture for visual recognition. A ViT contains $L$ blocks, each comprising a \emph{multi-head self-attention} (MSA) layer followed by a \emph{feed-forward network} (FFN). For clarity, we consider the $l$-th ViT block. Let $ \Tilde{\Xbm}^{(l)} = \left[ \xbm_1^{(l)},\dots,\xbm_N^{(l)} \right]^\top \in \RR^{N \times d}$ be the input tokens, where $N$ is the sequence length, and $d$ is the embedding dimension. The MSA layer processes $\Xbm^Q = \Xbm^K = \Xbm^V = \Tilde{\Xbm}^{(l)} \in \RR^{N \times d}$ as queries, keys, and values, producing:
\begin{align}
       &\mathrm{MSA}(\Tilde{\Xbm}^{(l)}) = \mathrm{Concat}(\hbm_1,...,\hbm_M) W^O \in \RR^{N \times d}, \nonumber \\
      &\hbm_m = \mathrm{Attention}(\Xbm^Q W_m^Q, \Xbm^K W_m^K, \Xbm^V W_m^V) \in \RR^{N \times d_v},
\end{align}
for $m \in [M]$, where $M$ is the number of attention heads, $W_m^Q \in \RR^{d \times \dk}$, $W_m^K \in \RR^{d \times \dk}$, $W_m^V \in \RR^{d \times \dv}$, and $W^O \in \RR^{M\dv \times d}$ are projection matrices with $\dk = \dv = \frac{d}{M}$.

Visual Prompt Tuning (VPT) \citep{jia2022visual} adapts a pre‑trained ViT by appending $N_p$ learnable \emph{prompt} parameters $\Pbm^{(l)} = \left[ \prompt_1^{(l)}, \dots, \prompt_{N_p}^{(l)} \right]^\top \in \RR^{N_p \times d}$, (where $N_p$ is the prompt length), to the input of each ViT block, thereby modifying the MSA layer's output as follows:
\begin{align}
    &\mathrm{MSA}(\Tilde{\Xbm}^{(l)}, \Pbm^{(l)}) = \mathrm{Concat}(\Tilde{\hbm}_1,...,\Tilde{\hbm}_M) W^O , \nonumber \\
   &\Tilde{\hbm}_m = \mathrm{Attention}\left(
        \begin{bmatrix}
            \Xbm^Q \\
            \Pbm^{(l)}
        \end{bmatrix} W_m^Q, 
        \begin{bmatrix}
            \Xbm^K \\
            \Pbm^{(l)}
        \end{bmatrix} W_m^K, 
        \begin{bmatrix}
            \Xbm^V \\
            \Pbm^{(l)}
        \end{bmatrix} W_m^V
    \right) \nonumber\\
    & \quad \ \
    = \left[ \Tilde{\hbm}_{m, 1},\dots,\Tilde{\hbm}_{m, N + N_p} \right]^\top \in \RR^{(N + N_p) \times d_v}. \label{eq:prompt_msa}
\end{align}
During training, \emph{only the prompts $\Pbm^{(l)}$ and the classification head are updated}, while all ViT weights, including $W_m^Q$, $W_m^K$, $W_m^V$, and $W_O$, remain frozen. Within the VPT framework, MSA output tokens corresponding to the input prompt locations are discarded and replaced by the prompts of the next layer $\Pbm^{(l + 1)}$, before these tokens enter the subsequent block. Consequently, in Equation~\eqref{eq:prompt_msa}, $\Tilde{\hbm}_{m, N + 1}, \dots, \Tilde{\hbm}_{m, N + N_p}$ are not utilized downstream and their computation can therefore be bypassed.

\subsection{Mixture of Experts} \label{sec:background-moe}

Mixture of Experts (MoE) is a class of statistical machine learning frameworks that combines multiple models, known as experts, to produce more expressive and accurate predictions \citep{Jacob_Jordan-1991, jordan1994hierarchical}. An MoE model typically consists of $N'$ \emph{expert functions}, $f_i: \mathbb{R}^d \rightarrow \mathbb{R}^{d_v}$ for $i \in [N']$, along with a \emph{gating function}, $G: \mathbb{R}^d \rightarrow \mathbb{R}^{N'}$ which assigns weights to experts based on learned \emph{score functions}, $s_i: \mathbb{R}^d \rightarrow \mathbb{R}$. For a given input $\hbm \in \RR^d$, the MoE model generates output as:
\begin{align*}
    \ybf = \sum_{j=1}^{N'} G(\hbm)_j \cdot f_j(\hbm) = \sum_{j=1}^{N'} \frac{\exp\left(s_j(\hbm)\right)}{\sum_{\ell=1}^{N'}\exp\left(s_\ell(\hbm)\right)} \cdot f_j(\hbm),
\end{align*}
where $G(\hbm) = \softmax(s_1(\hbm),\ldots,s_{N'}(\hbm))$. Recent studies reveal connections between the attention mechanism, prompt-based methods, and MoE frameworks \citep{le2024mixture, le2024revisiting}, presenting new promising opportunities to investigate prompt-based techniques through MoE lens.
\vspace{-0.7em}
\section{Motivation} \label{sec:motivation}
\vspace{-0.7em}

\textbf{Mixture of Experts meets Visual Prompt Tuning.} We begin by discussing the connection between VPT and MoE. Following the notation from Section~\ref{sec:background-prompt} and Equation~\eqref{eq:prompt_msa}, let $\Xbm^{(l)} = \left[ {\xbm_1^{(l)}}^\top,\dots,{\xbm_N^{(l)}}^\top \right]^\top \in \RR^{Nd}$ be the concatenation of all $N$ input tokens to the $l$-th MSA layer. For notational simplicity, the layer superscript $(l)$ is omitted in the subsequent derivations. As shown in~\citet{le2024mixture}, each output vector $\Tilde{\hbm}_{m, i}$ within the $m$-th attention head can be \emph{equivalently} expressed as the output of an MoE model with input $\Xbm$. Specifically, we define the set of experts as:
\begin{align}
    &f_j(\Xbm)= {W_m^V}^\top E_{j} \Xbm = {W_m^V}^\top \xbm_j , \label{eq:pretrained_experts} \\
    &f_{N + j'}(\Xbm) = {W_m^V}^\top \pbm_{j'}, \label{eq:prompt_experts}
\end{align}
where $j \in [N]$ and $j' \in [N_p]$. The corresponding score functions are defined as:
\begin{align}
s_{i,j}(\Xbm) 
    &= \frac{\Xbm^\top E_{i}^{\top} W_m^Q  {W_m^K}^\top E_{j} \Xbm}{\sqrt{d_{v}}}
    = \frac{\xbm_{i}^{\top} W_m^Q  {W_m^K}^\top \xbm_{j}}{\sqrt{d_{v}}},
    \label{eq:pretrained_scores}\\
s_{i, N + j'}(\Xbm) 
    &= \frac{\Xbm^\top E_{i}^{\top} W_m^Q  {W_m^K}^\top \prompt_{j'}}{\sqrt{d_{v}}}
    = \frac{\xbm_{i}^{\top} W_m^Q  {W_m^K}^\top \prompt_{j'}}{\sqrt{d_{v}}},
    \label{eq:prompt_scores}
\end{align}
for $i \in [N]$. Here, $E_j \in \RR^{d \times Nd}$ is a two-dimensional matrix such that $E_{j} \Xbm = \xbm_{j}$. Finally, the output of the $m$-th attention head for the $i$-th token can be expressed as follows:
\begin{align}
\tilde{\hbm}_{m, i}
= 
\sum_{j = 1}^N  
    \frac{\exp(s_{i, j}(\Xbm))}
    {
        \sum_{k = 1}^{N + N_p} \exp(s_{i, k}(\Xbm))
    } f_j(\Xbm)     
    + 
\sum_{j' = 1}^{N_p}  
    \frac{\exp(s_{i, N + j'}(\Xbm))}
    {
        \sum_{k = 1}^{N + N_p} \exp(s_{i, k}(\Xbm))
    } f_{N + j'}(\Xbm). 
\label{eq:prompt_moe}
\end{align}
These formulations reveal that \emph{each attention head} in ViT implicitly encodes \emph{multiple MoE models}, $\tilde{\hbm}_{m, i}$ for $i \in [N]$. This contrasts with conventional MoE layers~\citep{Quoc-conf-2017}, where experts and their gating functions typically operate on individual token embeddings $\xbm_i$. In our formulation, the expert networks and their score functions process the entire input sequence $\Xbm$. Furthermore, these initial experts $f_1, \dots, f_N$ and score functions are inherently part of the pre-trained ViT, with their parameters embedded within the model weights (see Equations~\eqref{eq:pretrained_experts} and~\eqref{eq:pretrained_scores}), thereby constituting \emph{pre-trained experts}. Meanwhile, the new \emph{prompt experts} $f_{N + 1}, \dots, f_{N + N_p}$, whose learnable parameters are contained within prompts (see Equations~\eqref{eq:prompt_experts} and~\eqref{eq:prompt_scores}), are introduced to efficiently adapt the model to downstream tasks. Consequently, VPT can be viewed as an efficient method for fine-tuning these implicit MoE models by \emph{adding new prompt experts}. These added experts effectively act as downstream task experts, enabling specialization for new tasks without retraining the entire model.

\textbf{Restricted Functional Expressiveness of Prompt Experts.} Equation~\eqref{eq:prompt_experts} reveals a key limitation of prompt experts $f_{N + 1}, \dots, f_{N + N_p}$. Although they are formally functions of $\Xbm$, they are represented by fixed prompt vectors $\pbm_1, \dots, \pbm_{N_p}$ that \emph{remain constant regardless of the input}. While their associated score functions $s_{i, N + j'}$ in Equation~\eqref{eq:prompt_scores} are input-dependent linear functions, the functional form realized by the prompt expert itself is static. \emph{This departs from typical MoE usage in the literature}, where experts are adaptive functions of the input. Furthermore, it contrasts with the pre-trained experts $f_1, \dots, f_N$, which are linear functions of $\Xbm$ (see Equation~\eqref{eq:pretrained_experts}), making them comparatively more expressive. We hypothesize that this limited flexibility may constrain the effectiveness of VPT as a fine-tuning strategy. Supporting this, prior work \citet{petrov2023prompting} demonstrates that prompt tuning can only add a bias term to the output of an attention block, thereby restricting representational capacity. This observation naturally motivates the central question: \emph{Can the performance of visual prompt tuning be improved by making prompt experts more expressive?} Addressing this, we propose a novel prompt formulation to enhance their expressiveness in Section~\ref{sec:method}.

\textbf{Balancing Expressivity and Efficiency.} Despite the limitations noted above, one of the main advantages of the current prompt design in VPT is its simplicity. As indicated by Equation~\eqref{eq:prompt_experts}, each prompt expert requires only $d$ parameters, making it highly \emph{parameter-efficient}. One might consider a naive linear design for a prompt expert $f_{N + j'}: \RR^{Nd} \rightarrow \RR^{d_v}$, where $f_{N + j'}(\Xbm) = W_{j'}^\top \Xbm$. However, this approach would necessitate up to $Nd \times d_v$ parameters, drastically increasing storage and computational overhead compared to the current approach, especially given the high dimensionality $Nd$ of the input $\Xbm$. This illustrates the core design challenge: \emph{enhancing prompt expert expressiveness without sacrificing the parameter efficiency that makes VPT attractive}. Our objective is therefore to devise a prompt mechanism that effectively balances these competing requirements.


\vspace{-0.7em}
\section{VAPT: Visual Adaptive Prompt Tuning} \label{sec:method}
\vspace{-0.7em}

To investigate adaptive prompting, this section presents our proposed method, VAPT, which integrates two key modules: token-wise projectors and a feature projector. These components are detailed in Section~\ref{sec:method-token-wise} and Section~\ref{sec:method-projector}, respectively. Figure~\ref{fig:overview_method} illustrates the overall VAPT architecture.

\vspace{-0.5em}
\subsection{Aggregating Global Information} \label{sec:method-token-wise}
\vspace{-0.5em}

To rigorously design the prompt experts, we first re-examine the formulation of the input and pre-trained experts. The input is defined as $\Xbm = \left[ \xbm_1^\top,\dots,\xbm_N^\top \right]^\top$, formed by concatenating $N$ feature tokens. These tokens can be spatially organized into the input \emph{feature map} $\Xbm_{\mathrm{img}} \in \RR^{H \times W \times d}$, where $H$ and $W$ are its height and width, respectively, and $N = H \times W$. Given that each token $x_j \in \RR^d$ corresponds to a small patch of this feature map, Equation~\eqref{eq:pretrained_experts} implies that each pre-trained expert $f_j$ process their respective patches $\xbm_j$ independently. Consequently, these experts are inherently limited to capturing only \emph{local information} within the feature map.

\begin{figure}[ht]
\begin{center}
\centerline{\includegraphics[width=0.52\columnwidth]{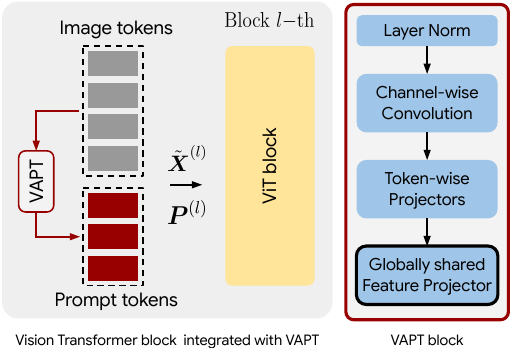}}
\caption{Overview of VAPT. Unlike VPT, where prompt tokens remain constant irrespective of the input, VAPT dynamically generates prompt tokens $\Pbm^{(l)}$ conditioned on the input $\Tilde{\Xbm}^{(l)}$ via a VAPT block. This endows prompt experts with more adaptive and expressive functional formulation.}
\label{fig:overview_method}
\end{center}
\vskip -0.35in
\end{figure}

\noindent \textbf{Token-wise Projectors.} As prompt experts are designed to collaborate with pre-trained experts for adaptation to downstream tasks, it is desirable for them to specialize in information complementary to that captured by pre-trained experts. To this end, our prompt experts are designed to extract \emph{global information} from the feature map using \emph{token-wise projectors}, defined as follows:
\begin{align}
    G_{j'}(\Xbm) 
    = ( 
        \sum_{k = 1}^N \alpha_{j', k} E_k
        ) \Xbm 
    =\sum_{k = 1}^N \alpha_{j', k} \xbm_k \in \RR^d, \label{eq:token_projector}
\end{align}
for $j' \in [N_p]$, where $\alpha_{j', k}$ are learnable scalars for $k \in [N]$. These projectors aggregate tokens across spatial locations, thereby facilitating global information exchange within the feature map.

\noindent \textbf{Channel-wise Convolution.} Token-wise projectors aggregate features through weighted averaging of feature tokens. While computationally efficient, this approach may not fully capture important \emph{spatial relationships}. For instance, the adjacency between $\xbm_1$ and $\xbm_2$, which correspond to neighboring patches in the feature map, can be overlooked by token-wise projectors that treat tokens independently of their spatial origin. To address this, we introduce a \emph{channel-wise convolution} layer, applied to the input feature map before token-wise projectors. Let $F = [w_{i, j}]_{i, j = 1}^{K}$ represent a kernel of size $K$. Unlike standard convolution kernels which utilize distinct weights for each input channel, $F$ reuses the \emph{same $K \times K$ weights across all $d$ channels}. This design significantly reduces the parameter count by a factor of $d$, without sacrificing its ability to model local spatial interactions. We show that this channel-wise convolution not only saves parameters but also improves performance in Appendix~\ref{appendix:ablation_study}. Formally, the channel-wise convolution is applied to $\Xbm_{\mathrm{img}}$ as:
\begin{align}
    \Xbm_{\mathrm{conv}} = F * \Xbm_{\mathrm{img}} \in \RR^{H' \times W' \times d},
\end{align}
where $*$ denotes the 2D convolution operation, and $H' = H - K + 1$ and $W' = W - K + 1$ are the height and width of the output feature map, respectively. By convolving neighboring patches, this operation explicitly encodes local spatial relationships into the feature map $\Xbm_{\mathrm{conv}}$. We employ a single channel-wise convolution layer within each ViT block. The resulting feature map $\Xbm_{\mathrm{conv}}$ is subsequently flattened to $\left[ \xbm^{\mathrm{conv}}_1, \dots, \xbm^{\mathrm{conv}}_{H' \cdot W'} \right]^\top \in \RR^{H' \cdot W' \times d}$ before being processed by the token-wise projectors. The final aggregated features can be expressed as:
\begin{align}
    G_{j'}(\Xbm_{\mathrm{conv}}) 
    = \sum_{k = 1}^{H'\cdot W'} \alpha_{j', k} \xbm_k^{\mathrm{conv}} 
    = W_{j'}\Xbm \in \RR^d
\end{align}
for $j' \in [N_p]$. Crucially, since both channel-wise convolution and token-wise projection are linear operations, the aggregated features also constitute a linear function of $\Xbm$, with the transformation weights $W_{j'} \in \RR^{d \times Nd}$. 
This aggregated information is then leveraged to construct our adaptive prompts in the next section.


\vspace{-0.5em}
\subsection{Adaptive Prompt} \label{sec:method-projector}
\vspace{-0.5em}

Each prompt initially aggregates its corresponding global feature token $G_{j'}(\Xbm_{\mathrm{conv}})$ through a token-wise projector. Subsequently, to generate the final adaptive prompts, we introduce a \emph{feature projector} implemented as a small MLP $g: \RR^d \rightarrow \RR^d$, defined as follows:
\begin{align}
    g(\xbm) = W^{(2)} \sigma(W^{(1)} \xbm),
    \label{eq:feature_projector}
\end{align}
where $W^{(1)} \in \RR^{r \times d}$, $W^{(2)} \in \RR^{d \times r}$, $r \ll d$, and $\sigma(\cdot)$ is a non-linear activation function (ReLU in this work). This feature projector is applied to the aggregated features to produce the final adaptive prompt tokens. Formally, the adaptive prompts at each block are given by:
\begin{align}
    \Pbm_{j'}(\Xbm)
    = g(G_{j'}(\Xbm_{\mathrm{conv}}))
    = W^{(2)} \sigma(W^{(1)} W_{j'} \Xbm) = W^{(2)} \sigma(W^{(1)}_{j'}\Xbm)
    \in \RR^d, \label{eq:adaptive_prompt}
\end{align}
for $j' \in [N_p]$, where $W^{(1)}_{j'} = W^{(1)} W_{j'}$. Our proposed adaptive prompts refine the prompt experts and their associated score functions within the MSA layers as follows:
\begin{align}
    f_{N + j'}(\Xbm) &= {W_m^V}^\top \Pbm_{j'}(\Xbm), \label{eq:vapt_experts} \\
    s_{i, N + j'}(\Xbm) &= \frac{\Xbm^\top E_{i}^{\top} W_m^Q  {W_m^K}^\top \Pbm_{j'}(\Xbm)}{\sqrt{\dv}}, \label{eq:vapt_scores}
\end{align}
where $i \in [N]$ and $j' \in [N_p]$. Equation~\eqref{eq:vapt_experts} demonstrates that the prompt experts become adaptive to the input $\Xbm$, enhancing their expressiveness and addressing the limited flexibility discussed in Section~\ref{sec:motivation}. The statistical advantages of this formulation are further analyzed in Section~\ref{sec:theory}.


\noindent \textbf{Efficiency Considerations.} Following VPT, we insert adaptive prompts into every ViT block. While the feature projector is a lightweight MLP, distinct projector for each block would incur substantial parameter and memory overhead. To mitigate this, we employ a single, \emph{shared feature projector $g(\cdot)$ reused across all ViT blocks}. This significantly reduces the overall computational cost and enhances efficiency, while preserving prompt expert expressiveness relative to VPT. Furthermore, an independent LayerNorm~\citep{lei2016layer} is incorporated, we incorporate an independent LayerNorm~\citep{lei2016layer} before the token-wise projectors in each block.
Regarding parameter complexity, VPT introduces approximately $L \times N_p \times d$ parameters across $L$ ViT blocks. In contrast, VAPT's learnable parameters include token-wise projectors, a channel-wise convolution layer per block, and a shared feature projector. The total parameter count is given by:
\begin{align}
    \underbrace{L \times N_p \times H' \times W'}_\text{token-wise projectors} 
    + \underbrace{L \times K^2}_\text{convolution} 
    + \underbrace{2 \times r \times d}_\text{feature projector},
    \label{eq:parameter_complexity}
\end{align}
where $H' \times W' < N$, with $K$, $r$ as small constants. Notably, for typical ViT configurations (\eg ViT-B/16, $N = 196$ and $d = 768$), $N$ is considerably smaller than $d$. Consequently, VAPT can achieve \emph{greater parameter efficiency} than VPT, as empirically shown in Table~\ref{table:main_vitb} and Table~\ref{table:mae_moco}. Moreover, VAPT introduces only \emph{marginal computational overhead}, up to \textbf{0.6\%} relative to VPT (see Appendix~\ref{appendix:computational_cost}).

\textbf{Novelty.} A central contribution of VAPT is its ability to achieve both flexibility and effectiveness while \emph{preserving computational efficiency}. A common assumption is that increasing functional expressiveness necessarily incurs substantial computational or parameter overhead. As outlined in the above analysis, a naive implementation of a linear prompt expert would require a prohibitively large number of parameters due to the high dimensionality ($Nd$) of the input $\Xbm$. In contrast, VAPT strikes a favorable balance between parameter efficiency and the expressivity of prompt experts. Despite the absence of a principled framework for designing adaptive prompts, VAPT’s architectural choices of token-wise projectors and channel-wise convolutions yield a straightforward implementation and a compact, interpretable prompt-expert formulation. This architectural simplicity is particularly important, as it enables the rigorous theoretical analysis in Section~\ref{sec:theory}, where we show that VAPT achieves an optimal sample-efficiency rate.
Prior work often relies on more complex architectures or heuristic mechanisms to adapt prompts across inputs~\citep{wang2022learning, huang2023diversity, kim2023one}. While these designs can perform well empirically, their functional complexity typically precludes meaningful theoretical understanding. By contrast, VAPT is deliberately designed to retain a simple yet expressive functional form, leading to the clean expert formulations in Equations~\eqref{eq:vapt_experts} and~\eqref{eq:vapt_scores}. These formulations support a thorough theoretical analysis without sacrificing empirical performance, addressing a key gap in the literature: VAPT not only performs well in practice but also enjoys provable robustness and generalization guarantees.
Importantly, our approach leverages the \emph{existing} MoE structure implicit in attention heads rather than imposing an additional, external MoE module, as is common in prior work~\citep{zeng2025mept}. Conventional MoE-based methods explicitly insert new routing components and expert modules into the Transformer architecture, introducing substantial architectural modifications. In contrast, VAPT avoids such intrusive changes to the backbone while still admitting a formulation that is amenable to formal analysis. This perspective allows us to design VAPT as a simple and practical mechanism that fully benefits from MoE theory to rigorously characterize its behavior. For a more detailed comparison with related methods, please refer to Appendix~\ref{appendix:related_work}.


\section{Statistical Advantages of VAPT} \label{sec:theory}

In this section, we explore the theoretical advantages of VAPT through its MoE connection, as detailed in Equation~\eqref{eq:prompt_moe}. This perspective provides a rigorous framework for analyzing the convergence properties of prompt estimation in MoE models~\citep{le2024mixture, le2024revisiting}. Recalling from Section~\ref{sec:motivation} that the MoE models $\tilde{\hbm}_{m, 1}, \dots, \tilde{\hbm}_{m, N}$ in each attention head share a common structure of experts and score functions. Furthermore, as noted in Section~\ref{sec:background-prompt}, omitting $\tilde{\hbm}_{m, N + 1}, \dots, \tilde{\hbm}_{m, N + N_p}$  does not affect ViT block output. Thus, to simplify our analysis while maintaining rigor, we focus on the first head (\ie $m = 1$), and specifically the first row of its attention matrix (\ie $i = 1$) in Equation~\eqref{eq:prompt_moe}. Within this simplified setting, we consider a regression framework for MoE models as follows.

\noindent \textbf{Problem Setup.}  Assume that the i.i.d. samples of size $n$: $(\Xbm_1,Y_1), (\Xbm_2,Y_2),\ldots,(\Xbm_n,Y_n)\in\mathbb{R}^{d} \times\mathbb{R}^{d'}$ are generated from the following regression model:
\begin{align}
    Y_i=f_{G_*}(\Xbm_i)+\varepsilon_i, \quad i=1,2,\ldots,n, 
    \label{eq:regression_model}
\end{align}
where $\varepsilon_1,\varepsilon_2,\ldots,\varepsilon_n$ are independent Gaussian noise variables with $\bbE[{\varepsilon_{i}}|\Xbm_i] = 0$ and $\var(\varepsilon_{i}|\Xbm_i) = \nu^2I_{d'}$ for $i \in [n]$. Furthermore, $\Xbm_{1}, \Xbm_{2}, \ldots, \Xbm_{n}$ are assumed to be i.i.d. samples from a distribution $\mu$.
The ground-truth regression function $f_{G_{*}}(\cdot)$ is an MoE model of the form
\begin{align}
 &f_{G_{*}}(\Xbm)  := \sum_{j=1}^{N} \frac{\exp(\Xbm^{\top} A^0_j\Xbm+a^0_j)}{D_{f,G_*}(\Xbm)}\cdot h(\Xbm,\eta^0_j)  \nonumber\\
 & \hspace{ 6 em} +\sum_{j' = 1}^{L} \frac{\exp((BW_{*,2}\sigma(W_{*,1j'}\Xbm))^{\top}\Xbm+b_{*,j'})}{D_{f}(\Xbm)}
 \cdot CW_{*,2}\sigma(W_{*,1j'}\Xbm), \label{eq:true_regression_function}
\end{align}
where $D_{f,G_*}(\Xbm) = \sum_{k = 1}^{N}\exp(\Xbm^{\top}A^0_{k}\Xbm+a^0_{k})+\sum_{j'=1}^{L}\exp((BW_{*,2}\sigma(W_{*,1j'}\Xbm))^{\top}\Xbm+b_{*,j'})$. Here, $G_{*} = \sum_{j' = 1}^{L} \exp(b_{*,j'}) \delta_{(W_{*,1j'},W_{*,2})}$ denotes the true \emph{mixing measure}, which is a weighted sum of Dirac measures $\delta$ associated with the unknown parameters $(b_{*,j'},W_{*,1j'},W_{*,2})_{j'=1}^{L}$ in the parameter space $\Theta\subset\mathbb{R} \times\mathbb{R}^{r\times d}\times\mathbb{R}^{d\times r}$. The matrix $A^0_j$, the expert parameter $\eta^0_j$, and the bias parameter $a^0_j$ are known for $j \in [N]$. Finally, the matrices $B \in \mathbb{R}^{d \times d}$ and $C \in \mathbb{R}^{d' \times d}$ are given; these play the role of pre-trained projection matrices in the context of prompt tuning.

\noindent \textbf{Least-Squares Estimator:} To estimate the unknown prompt parameters or, equivalently, the ground-truth mixing measure $G_*$, we use the least-squares method~\citep{vandeGeer-00}. In particular, we consider the estimator defined as follows:
\begin{align}
    \label{eq:least_squared_estimator}
    \widehat{G}_n:=\argmin_{G\in\mathcal{G}_{L'}(\Theta)}\sum_{i=1}^{n}\Big\|Y_i-f_{G}(\Xbm_i)\Big\|^2,
\end{align}
where $\mathcal{G}_{L'}(\Theta):=\{G=\sum_{i=1}^{\ell}\exp(b_{i})\delta_{(W_{1,i},W_2)}:\ell \in [L'], \  (b_{i},W_{1,i},W_2)\in\Theta\}$ is the set of all mixing measures with at most $L'$ atoms. Since the true number of experts $L$ is generally unknown, we assume the number of fitted experts $L'$ is sufficiently large, \ie $L'>L$. To analyze prompt estimation convergence, it is essential to define a suitable loss function on the prompt parameters. In this work, we propose the \emph{Voronoi loss function}, derived from the concept of Voronoi cells~\citep{manole22refined}.

\noindent \textbf{Voronoi Loss.} Given a mixing measure $G\in\mathcal{G}_{L'}(\Theta)$, we consider a Voronoi cell set $\{\mathcal{V}_j\equiv
    \mathcal{V}_j(G),j\in[L]\}$ generated by the atoms of $G_*$, where
\begin{align*}
    \hspace{-0.5em}\mathcal{V}_j:=
    \left\{
    i\in[L']:
    \| Z_i-Z_{*,j} \|
    \leq
    \| Z_i-Z_{*,\ell} \|,
    \forall \ell\neq j
    \right\},
\end{align*}
where $Z_i:=(W_{1,i},W_{2})$.
The Voronoi loss tailored to the setting in Equation~\eqref{eq:true_regression_function} is defined as:
\begin{align*}     
\mathcal{D}_1({G},{G}_*)  :=\sum_{j'=1}^{L}\Big|\sum_{i\in\mathcal{V}_{j'}}\exp(b_{i})-\exp(b_{*,j'})\Big| &+ {\sum_{j'\in[L]:|\mathcal{V}_{j'}|=1}\sum_{i\in\mathcal{V}_{j'}}\exp(b_{i}) (\|\Delta W_{1ij'}\|} +\|\Delta W_2\|)\\ 
& +\sum_{j'\in[L]:|\mathcal{V}_{j'}|>1}\sum_{i\in\mathcal{V}_{j'}}\exp(b_{i}) (\|\Delta W_{1ij'}\|^2+\|\Delta W_2\|^2) ,
\end{align*}
where we denote $\Delta W_{1ij'}:=W_{1i}-W_{*,1j'}$  for any $i, j'$, and $\Delta W_2:=W_2-W_{*,2}$. 

\noindent Equipped with this loss function, we wrap up the setting in Equation~\eqref{eq:true_regression_function} by providing the convergence rate of prompt estimation in Theorem~\ref{theorem:param_rate_nonlinear}. For that purpose, it is necessary to make essential assumptions on the activation function $\sigma$. However, due to the space limitations, we defer these assumptions to the proof of Theorem~\ref{theorem:param_rate_nonlinear} in Appendix~\ref{appendix:param_rate_nonlinear}.

\begin{theorem}
    \label{theorem:param_rate_nonlinear}
    Let $\widehat{G}_n$ be the least-squares estimator defined in Equation~(\ref{eq:least_squared_estimator}) and assume that the activation function $\sigma$ satisfies the Assumptions (A.1)-(A.3) specified in Appendix~\ref{appendix:param_rate_nonlinear}, we obtain that
    \begin{align*}
        \mathcal{D}_1(\widehat{G}_n,{G}_*)=\mathcal{O}_{P}([\log(n)/n]^{\frac{1}{2}}).
    \end{align*}
\end{theorem}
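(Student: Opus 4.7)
The plan is to prove Theorem~\ref{theorem:param_rate_nonlinear} in the classical two-stage style used for identifiability/convergence results in MoE estimation: first establish a convergence rate on the regression function itself, then transfer that rate to the parameter-level Voronoi loss via an inverse inequality. Since $\widehat{G}_n$ is a least-squares minimizer under sub-Gaussian noise, I would first bound the Hellinger/$L^2(\mu)$ distance $\|f_{\widehat{G}_n}-f_{G_*}\|_{L^2(\mu)}$ by empirical-process arguments \`a la van de Geer: control the bracketing entropy $\mathcal{H}_B(\varepsilon,\mathcal{F}_{L'}(\Theta),\|\cdot\|_{L^2(\mu)})$ of the function class $\mathcal{F}_{L'}(\Theta)=\{f_G:G\in\mathcal{G}_{L'}(\Theta)\}$, use the fact that $B,C$ and the pre-trained atoms $(A_j^0,a_j^0,\eta_j^0)$ are fixed so only $(b_{i},W_{1,i},W_2)$ vary over a bounded $\Theta$, and verify that the softmax-over-exponents structure yields an entropy bound of order $\log(1/\varepsilon)$. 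A standard peeling/Bernstein argument then yields
\begin{align*}
\|f_{\widehat{G}_n}-f_{G_*}\|_{L^2(\mu)}=\mathcal{O}_P\bigl([\log(n)/n]^{1/2}\bigr).
\end{align*}

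The main work is the inverse inequality: for every $G\in\mathcal{G}_{L'}(\Theta)$ sufficiently close to $G_*$,
\begin{align*}
\|f_G-f_{G_*}\|_{L^2(\mu)}\;\gtrsim\;\mathcal{D}_1(G,G_*).
\end{align*}
I would prove it by contradiction: assume a sequence $G_n\to G_*$ with $\|f_{G_n}-f_{G_*}\|_{L^2(\mu)}/\mathcal{D}_1(G_n,G_*)\to 0$. Decompose the numerator according to the Voronoi partition $\{\mathcal{V}_j\}_{j=1}^L$, apply a Taylor expansion of the prompt-expert contribution $\pi_b(\Xbm;W_1,W_2):=\exp((BW_2\sigma(W_1\Xbm))^\top\Xbm+b)\cdot CW_2\sigma(W_1\Xbm)$ (and of its denominator counterpart) around each $(W_{*,1j'},W_{*,2},b_{*,j'})$. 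For exact-fitted cells $|\mathcal{V}_{j'}|=1$ only first-order terms in $\Delta b$, $\Delta W_{1}$, $\Delta W_2$ appear, matching the linear terms in $\mathcal{D}_1$; for over-fitted cells $|\mathcal{V}_{j'}|>1$ the first-order terms vanish after summation, forcing second-order terms in $\|\Delta W_{1ij'}\|^2$ and $\|\Delta W_2\|^2$, which is exactly why $\mathcal{D}_1$ uses squared distances for those cells. Dividing the Taylor expansion by $\mathcal{D}_1(G_n,G_*)$ and passing to the limit would yield a non-trivial linear combination of the partial-derivative functions (in $\Xbm$) that equals zero in $L^2(\mu)$; the contradiction is obtained by showing those derivative functions are linearly independent.

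The central obstacle is precisely this algebraic-independence step, because the prompt experts share the outer projection $W_2$ across all atoms, so the derivatives $\partial_{W_2}\pi_b$ for different $j'$ are coupled, and the quadratic form $(BW_2\sigma(W_1\Xbm))^\top\Xbm$ inside the gating creates cross-terms between score and expert derivatives. This is where the assumptions (A.1)--(A.3) on $\sigma$ (presumably: smoothness, non-polynomial growth, and algebraic independence of $\{\sigma(W_{*,1j'}\cdot),\sigma'(W_{*,1j'}\cdot)\cdot,\ldots\}$ across distinct $W_{*,1j'}$) will be essential: they let me conclude that if the limiting linear combination vanishes, all coefficients, including those multiplying $\Delta b$, $\Delta W_{1ij'}$ and $\Delta W_2$, must be zero, contradicting $\mathcal{D}_1(G_n,G_*)/\mathcal{D}_1(G_n,G_*)=1$. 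I would handle the shared-$W_2$ coupling by grouping derivative terms so that the coefficient of $\Delta W_2$ is the aggregate $\sum_{j'}\sum_{i\in\mathcal{V}_{j'}}\exp(b_i)$, and treating it as a separate linearly independent direction.

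Finally, combining the empirical-process bound with the inverse inequality and a routine localization argument (restricting attention to a neighborhood of $G_*$, which holds with probability tending to one because $\widehat{G}_n$ is consistent in $L^2(\mu)$ and hence in $\mathcal{D}_1$) yields
\begin{align*}
\mathcal{D}_1(\widehat{G}_n,G_*)\;\lesssim\;\|f_{\widehat{G}_n}-f_{G_*}\|_{L^2(\mu)}=\mathcal{O}_P\bigl([\log(n)/n]^{1/2}\bigr),
\end{align*}
which is the desired rate. The empirical-process step is largely mechanical given boundedness of $\Theta$; the algebraic-independence step is where the theorem's content lives, and I expect it to consume the bulk of the appendix proof.
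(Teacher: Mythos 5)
Your proposal follows essentially the same two-stage architecture as the paper's proof: first obtain the parametric $L^2(\mu)$ rate for $f_{\widehat G_n}$ (the paper does this via van de Geer's Theorem 7.4 for MLE plus the closed-form Gaussian Hellinger distance, whereas you sketch a bracketing-entropy argument for least squares, but these are the same machinery); then establish the inverse inequality $\normf{f_G-f_{G_*}}\gtrsim\mathcal D_1(G,G_*)$ by contradiction, Taylor-expanding on the Voronoi partition with first-order terms for singleton cells and second-order terms for over-fitted cells, and closing via the linear-independence conditions on derivatives of $\sigma$ that Assumption (A.3) provides. Your ``routine localization'' step is in fact the paper's global part of the inverse inequality (compactness of $\Theta$, Fatou, and the identifiability Assumption (A.1)) dressed as a consistency argument for $\widehat G_n$, so the content matches; your identification of the shared-$W_2$ coupling and of the role of (A.1)--(A.3) as the crux is also exactly where the paper spends its effort.
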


\noindent \textbf{Implications for Prompt Estimation.} Given the formulation of the Voronoi loss function $\mathcal{D}_{1}$, Theorem~\ref{theorem:param_rate_nonlinear} indicates that the estimation rates for the true parameters $(W_{*,1j}, W_{*,2})$ for indices $j$ such that $|\mathcal{V}_{j}| = 1$ are of parametric order $\mathcal{O}_{P}([\log(n)/n]^{\frac{1}{2}})$. Due to the Lipschitz property of the activation function $\sigma$, this directly leads to an estimation rate of $\mathcal{O}_{P}([\log(n)/n]^{\frac{1}{2}})$ for the true prompt $\Pbm_{j}^{*}(\Xbm) = W_{*,2} \sigma(W_{*,1j}\Xbm)$. On the other hand, for true parameters $(W_{*,1j}, W_{*,2})$ where $|\mathcal{V}_{j}| > 1$, their estimation rates are of the order $\mathcal{O}_{P}([\log(n)/n]^{1/4})$, which yields an estimation rate of $\mathcal{O}_{P}([\log(n)/n]^{1/4})$ for true prompt $\Pbm_{j}^{*}(\Xbm) = W_{*,2} \sigma(W_{*,1j}\Xbm)$. Finally, all these rates are optimal, up to the logarithmic factor, demonstrating the statistical benefits of visual adaptive prompt tuning for the non-linear setting of the activation function $\sigma$.

\noindent \textbf{Linear Activation Setting.} For completeness, we also show that the visual adaptive prompt tuning achieves optimal sample efficiency when the activation function $\sigma(\cdot)$ is linear identity in Appendix~\ref{appendix:additional_results}.

\begin{table*}[t]
\caption{\small\textbf{Overall Comparison for ViT-B/16 Supervised Pre-trained on ImageNet-21K}. Following \citet{jia2022visual}, we report average accuracy (3 runs) on FGVC and VTAB-1K, ``Number of Wins'' [$\cdot$] compared to full fine-tuning, and ``Number of Wins over VPT'' $\left\{\cdot\right\}$. ``Tuned/Total'' denotes the average percentage of parameters tuned (24 tasks), 
``Scope'' specifies the tuning scope, and ``Additional parameters'' indicates if parameters beyond pre-trained backbone/head are introduced. Per-task results are in Appendix~\ref{appendix:per_task_results}.}
\label{table:main_vitb}
\vspace{-0.4em}
\begin{center}
\begin{small}
\resizebox{\textwidth}{!}{
\begin{tabular}{l|r|cc|c|r|rrrr}
\toprule
\multicolumn{1}{c|}{ViT-B/16~\citep{dosovitskiy2020image} }    & \multicolumn{1}{c|}{Tuned/} &\multicolumn{2}{c|}{Scope}   & Extra    &   &  \multicolumn{4}{c}{VTAB-1K~\citep{zhai2019large} [19]}  \\ 
  \multicolumn{1}{c|}{(85.8M)}   & \multicolumn{1}{c|}{Total (\%)} & Input & Backbone  & params  &  \multirow{-2}{*}{FGVC [5]}    &  \textit{Natural} [7] & \textit{Specialized} [4] & \textit{Structured} [8]  & Mean Total \\ 
\midrule
Full \citep{iofinova2022well} & 100.00  & & \checkmark & & 88.54  & 75.88  & 83.36  & 47.64  & 65.57 \\
\midrule
Linear \citep{iofinova2022well} & 0.08  & & & & 79.32  [0] & 68.93  [1] & 77.16  [1] & 26.84  [0] & 52.94 \\
Partial-1 \citep{yosinski2014transferable} & 8.34  & & & & 82.63  [0] & 69.44  [2] & 78.53  [0] & 34.17  [0] & 56.52 \\
MLP-3 \citep{chen2020improved} & 1.44  & & & \checkmark & 79.80  [0] & 67.80  [2] & 72.83  [0] & 30.62  [0] & 53.21 \\
\midrule
Sidetune \citep{zhang2020side} & 10.08  & & \checkmark & \checkmark & 78.35  [0] & 58.21  [0] & 68.12  [0] & 23.41  [0] & 45.65 \\
Bias \citep{rebuffi2017learning} & 0.80  & & \checkmark & & 88.41  [3] & 73.30  [3] & 78.25  [0] & 44.09  [2] & 62.05 \\
Adapter \citep{cai2020tinytl} & 1.02  & & \checkmark & \checkmark & 85.46  [1] & 70.67  [4] & 77.80  [0] & 33.09  [0] & 62.41 \\
LoRA \citep{hu2021lora} & 0.73 & & \checkmark & \checkmark & 89.46  [3] & 78.26  [5] & 83.78  [2] & 56.20  [7] & 72.25 \\
\midrule
VPT-Shallow \citep{jia2022visual} & 0.16  & \checkmark &  & \checkmark & 84.62  [1] & 76.81  [4] & 79.66  [0] & 46.98  [4] & 64.85 \\
VPT-Deep \citep{jia2022visual} & 0.73  & \checkmark &  & \checkmark & 89.11  [4] & 78.48  [6] & 82.43  [2] & 54.98  [8]& 69.43 \\
 E2VPT \citep{han20232vpt} & 0.39  & \checkmark & \checkmark & \checkmark & 89.22 [4] & 80.01  [6] & 84.43  [3] & 57.39  [8] & 71.42  \\
  { SA2VP~\citep{pei2024sa2vp}} & {0.65}  & \checkmark & & \checkmark & {90.08 [4]} & {80.97 [6]} & {85.73  [4]} & {{60.80}  [8]} & {{73.47}}  \\
 { ViaPT~\citep{xiao2025visual}} & {0.66}  & \checkmark & & \checkmark & {91.40 [4]} & {82.62 [6]} & {85.22  [2]} & {{61.25}  [8]} & {{73.70}}  \\
{ VFPT~\citep{zeng2024visual}} & {0.66}  & \checkmark & & \checkmark & {89.24 [4]} & {81.35  [6]} & {84.93  [4]} & {{60.19}  [8]} & {{73.20}}  \\
 \midrule
\textbf{VAPT (Ours)} & \textbf{0.36} & \textbf{\checkmark} & \textbf{} & \textbf{\checkmark} & \textbf{89.58 [4] \{4\}} & \textbf{81.43 [6] \{7\}} & \textbf{85.13 [4] \{4\}} & \textbf{59.34 [8] \{8\}} & \textbf{72.91} \\
\bottomrule
\end{tabular}
}
\end{small}
\end{center}
\vskip -0.2in
\end{table*}

\vspace{-0.7em}
\section{Experiment} \label{sec:experiment}
\vspace{-0.7em}

In this section, we compare VAPT with VPT and other widely used PEFT methods. We also examine the robustness of VAPT under different pre-training objectives and present our findings on its sample efficiency. For additional results, including ablation studies, semantic segmentation results, statistical significance tests, computational cost, and interpretive visualizations, please refer to Appendix~\ref{appendix:add_experiment}.

\vspace{-0.5em}
\subsection{Experimental Setup} \label{sec:exp_setup}
\vspace{-0.5em}

\textbf{Datasets.} We evaluate VAPT on two benchmarks: FGVC and VTAB-1K~\citep{zhai2019large}. FGVC includes five fine-grained classification datasets: CUB~\citep{wah2011caltech}, Oxford Flowers~\citep{nilsback2008automated}, Stanford Cars~\citep{gebru2017fine}, Stanford Dogs~\citep{khosla2011novel}, and NABirds~\citep{van2015building} that require distinguishing between visually similar classes. VTAB-1K comprises 19 datasets, each with 1,000 training examples, grouped into: \textit{Natural} (images captured by standard cameras), \textit{Specialized} (images collected via specialized equipment), and \textit{Structured} (tasks requiring structural understanding, such as 3D depth prediction). Beyond classification, we also assess performance on semantic segmentation using ADE20K~\citep{zhou2019semantic}.

\textbf{Baselines.} In line with prior work~\citep{jia2022visual, han20232vpt}, we compare VAPT against commonly used PEFT methods. All classification experiments use standard Vision Transformer (ViT)~\citep{dosovitskiy2020image} that are supervised pre-trained on ImageNet-21K~\citep{deng2009imagenet}. For semantic segmentation, we employ SETR-PUP~\citep{zheng2021rethinking}, which uses ViT as the backbone encoder. Additionally, we evaluate VAPT with backbones pre-trained using two self-supervised learning: MAE~\citep{he2022masked} and MoCo v3~\citep{chen2021empirical}.

\textbf{Training.} We perform a grid search on \texttt{val} set of each task to determine the optimal learning rate, weight decay, kernel size $K$, and projector dimension $r$. We schedule the learning rate using a cosine decay schedule and train all models for 100 epochs. Following \citet{jia2022visual}, we use batch sizes of 64 and 128. Additional implementation details can be found in Appendix~\ref{appendix:implementation_details}.

\vspace{-0.5em}
\subsection{Empirical Results}
\vspace{-0.5em}

\begin{table*}[t]
\caption{\small\textbf{Comparison of Different Pre-training Objectives.} We consider MAE~\citep{he2022masked} and MoCo v3~\citep{chen2021empirical} using ViT-B/16. We report test accuracy on VTAB-1K, the ``Number of Wins'' [$\cdot$] relative to full fine-tuning and the ``Number of Wins over VPT'' $\left\{\cdot\right\}$. ``Tuned/Total'' denotes the average percentage of parameters tuned. Per-task results are in Appendix~\ref{appendix:per_task_results_mae_moco}.}
\label{table:mae_moco}
\vspace{-0.4em}
\begin{center}
\begin{small}
\tabcolsep=0.10cm
\resizebox{\textwidth}{!}{
\begin{tabular}{l|r|rrr|r|rrr}
\toprule
Pre-trained objectives & \multicolumn{4}{c|}{MAE~\citep{he2022masked}} &  \multicolumn{4}{c}{MoCo v3~\citep{chen2021empirical}} \\
\midrule 
& \multicolumn{1}{c|}{Tuned/} &  \multicolumn{3}{c|}{VTAB-1K~\citep{zhai2019large} [19]} & \multicolumn{1}{c|}{Tuned/} &  \multicolumn{3}{c}{VTAB-1K~\citep{zhai2019large} [19]}  \\ 
  \multirow{-2}{*}{\diagbox{Methods~}{Params \& Data}} & \multicolumn{1}{c|}{Total (\%)} &  \textit{Natural} [7] & \textit{Specialized} [4] & \textit{Structured} [8] & \multicolumn{1}{c|}{Total (\%)} &  \textit{Natural} [7] & \textit{Specialized} [4] & \textit{Structured} [8]  \\ 
\midrule
Full \citep{iofinova2022well} & 100.00  &  59.31  & 79.68  & 53.82  & 100.00  & 71.95  & 84.72  & 51.98  \\
\midrule
Linear \citep{iofinova2022well} & 0.04  &  18.87  [0] & 53.72  [0] & 23.70  [0] & 0.04  &  67.46  [4] & 81.08  [0] & 30.33  [0] \\
Partial-1 \citep{yosinski2014transferable} & 8.30  & 58.44 [5] & 78.28  [1] & 47.64 [1] & 8.30  &  72.31  [5] & {84.58} [2] & 47.89  [1] \\
\midrule
Bias \citep{rebuffi2017learning} & 0.16  &  54.55  [1] & 75.68  [1] & {47.70} [0] & 0.16  & 72.89  [3] & 81.14  [0] & 53.43 [4] \\
Adapter \citep{cai2020tinytl} & 0.87  &  54.90 [3] & 75.19  [1] & 38.98  [0] & 1.12  & 74.19  [4] & 82.66  [1] & 47.69  [2] \\
\midrule
VPT-Shallow \citep{jia2022visual} & 0.05  &  39.96  [1] & 69.65  [0] & 27.50  [0] & 0.06  & 67.34  [3] & 82.26  [0] & 37.55  [0]  \\
VPT-Deep \citep{jia2022visual} & 0.31  &  36.02  [0] & 60.61  [1] & 26.57  [0] & 0.22  &  70.27  [4] & 83.04  [0] & 42.38  [0]\\
GateVPT \citep{yoo2023improving} & 0.05  &  47.61  [2] & 76.86  [1] & 36.80  [1] & 0.06  &  74.84 [4] & 83.38  [1] & 49.10  [3]\\
{E2VPT~\citep{han20232vpt}} & {0.07} & {59.52 [4]} & {77.80 [1]} & {44.65 [3]} & {0.13} & {76.47 [4]} & {87.28 [2]} & {54.91 [6]} \\
{ ViaPT~\citep{xiao2025visual}} & {0.36} &  {54.26 [-]} & {78.01 [-]} & {37.52 [-]} & {0.30}  &  {79.12 [-]} & {86.81 [-]} & {60.05 [-]}\\
{ VFPT~\citep{zeng2024visual}} & {0.38} &  {53.59  [6]} & {77.75  [1]} & {36.15  [1}] & {0.22}  &  {77.47 [5]} & {85.76  [3]} & {58.74  [6]}\\
\midrule
\textbf{VAPT (Ours)} & \textbf{0.28} & \textbf{59.23 [5] \{7\}} & \textbf{80.73 [2] \{3\}} & \textbf{47.24 [2] \{7\}} & \textbf{0.27} & \textbf{77.69 [6] \{7\}} & \textbf{83.95 [2] \{3\}} & \textbf{60.74 [7] \{8\}} \\
\bottomrule
\end{tabular}
}
\end{small}
\end{center}
\vskip -0.2in
\end{table*}

\textbf{Overall Comparison.} Table~\ref{table:main_vitb} compares VAPT against full fine-tuning and other prominent PEFT methods on VTAB-1K and FGVC.
Full denotes full fine-tuning, which updates all model parameters, while methods such as Linear, Partial-1 (top layer), and MLP-3 (3 MLP layers) modify only a subset of parameters. Sidetune~\citep{zhang2020side}, Bias~\citep{rebuffi2017learning}, Adapter~\citep{cai2020tinytl}, and LoRA~\citep{hu2021lora} introduce trainable modules for adaptation. 
Concurrent visual prompt tuning approaches,
VPT~\citep{jia2022visual}, E2VPT~\citep{han20232vpt}, {SA2VP~\citep{pei2024sa2vp}, ViaPT~\citep{xiao2025visual} and VFPT~\citep{zeng2024visual}}, are also included for comparison (see Appendix~\ref{appendix:related_work} for further details).
Our results indicate that VAPT surpasses full fine-tuning in \textbf{22 out of 24} tasks. Specifically, VAPT achieves a notable \textbf{1.04\%} accuracy increase on FGVC and a substantial \textbf{11.70\%} improvement on VTAB-1K \textit{Structured}. Across the entire VTAB-1K benchmark, VAPT demonstrates an average gain of \textbf{7.34\%} over full fine-tuning, while updating merely \textbf{0.36\%} of the backbone parameters. These findings highlight VAPT's significant \emph{effectiveness} and \emph{efficiency} as an innovative PEFT method. Additionally, VAPT achieves state-of-the-art performance compared to other PEFT approaches. Among prompt tuning methods, VAPT consistently outperforms VPT in \textbf{23 out of 24 tasks} {and attains competitive performance with VFPT, a recent state-of-the-art approach, despite \emph{utilizing nearly \textbf{50\%} fewer parameters}}. We attribute these improvements to VAPT's design, which enhances the expressiveness of prompt experts. These results underscore VAPT's potential as \emph{a powerful tool for improving performance with significantly reduced parameter overhead}.

\noindent \textbf{Different Pre-training Methods.} We investigate VAPT's performance when initialized with backbones pre-trained using different self-supervised learning (SSL) objectives, specifically MAE~\citep{he2022masked} and MoCo v3~\citep{chen2021empirical}, with detailed results in Table~\ref{table:mae_moco}. Previous studies~\citep{jia2022visual, yoo2023improving} have indicated that VPT can exhibit suboptimal performance with SSL pre-trained backbones. In contrast, VAPT achieves significant performance improvements by effectively leveraging the rich information present in the input features. For example, VAPT demonstrates a remarkable \textbf{23.21\%} accuracy improvement under MAE on VTAB-1K \textit{Natural} tasks and an \textbf{18.36\%} improvement with MoCo v3 on VTAB-1K \textit{Structured}. Furthermore, when compared to other PEFT methods, VAPT consistently outperforms them, achieving the \textbf{highest} “Number of Wins” relative to full fine-tuning, with \textbf{9 out of 19} tasks under MAE and \textbf{15 out of 19} tasks under MoCo v3. {In addition, VAPT outperforms state-of-the-art approaches such as VFPT under MAE and remains competitive under MoCo v3}. These results underscore the \emph{generality} and \emph{robustness} of our method across different pre-training objectives, supported by both theoretical analysis (Section~\ref{sec:theory}) and empirical evidence.

\begin{wrapfigure}{r}{0.5\textwidth}
  \vskip -0.2in
    \centering
    \includegraphics[width=\linewidth]{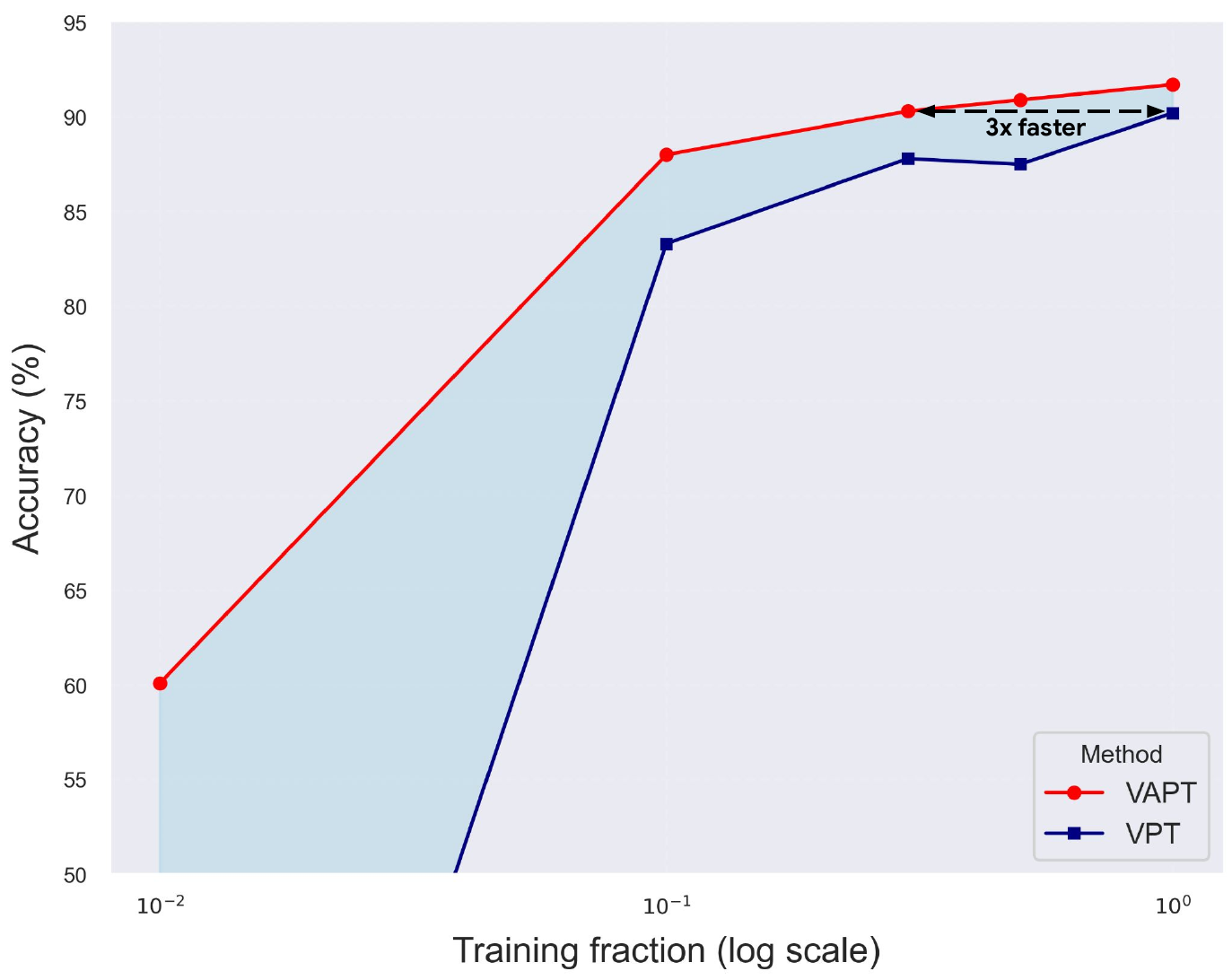}
    \captionof{figure}{\small Comparison of VAPT and VPT across varying fractions of training data on Stanford Dogs.}
    \label{fig:sample_efficiency}
    
    \centering
    \captionof{table}{\small Classification accuracy on Stanford Dogs~\citep{khosla2011novel} using varying fractions of training data. \textbf{Bold} indicates the best performance.}
    \label{table:sample_efficency}
    \begin{center}
    \begin{small}
    \resizebox{0.975\linewidth}{!}{
    \begin{tabular}{@{}c|c|c|c@{}}
    \toprule
    Training fraction & VPT  & VAPT & Gap \\ \midrule
    1\%        & 3.6  & \textbf{60.1} & \textbf{56.5\%} \\
    10\%       & 83.3 & \textbf{88.0} & 4.7\% \\
    30\%       & 87.8 & \textbf{90.3} & 2.5\% \\
    50\%       & 87.5 & \textbf{90.9} & 3.4\% \\
    100\%      & 90.2 & \textbf{91.7} & 1.5\% \\ \bottomrule
    \end{tabular}
    }
    \end{small}
    \end{center}
    \vspace{-2.0em}
\end{wrapfigure}

\textbf{Sample Efficiency.} To empirically validate the theoretical sample efficiency from Section~\ref{sec:theory}, we conducted experiments on the Stanford Dogs dataset~\citep{khosla2011novel}. Following \citet{d2021convit}, we subsampled each class by fraction $f = \{ 0.01, 0.1, 0.3, 0.5, 1.0 \}$ and scaled the number of training epochs by $1 / f$, thereby ensuring a constant total number of image presentations to the model. 
The results, presented in Figure~\ref{fig:sample_efficiency} and Table~\ref{table:sample_efficency}, demonstrate that VAPT consistently outperforms VPT across all evaluated training set sizes. Notably, with only 1\% of the data, VPT achieves an accuracy of \textbf{3.6\%}, whereas VAPT attains a substantially higher accuracy of \textbf{60.1\%}. Furthermore, VAPT requires only 30\% of the data to match the performance of VPT trained on the full 100\% dataset, signifying an approximate $3\times$ reduction in data requirements. This underscores the superior sample efficiency of our approach.

{It is important to note that the primary objective of this work is not to establish a new state-of-the-art on every dataset. As outlined in the abstract and introduction, our focus is on scholarly value rather than purely incremental methodological gains: we aim to provide the community with clear insights into the benefits of increasing the functional expressiveness of prompt experts within visual prompt tuning frameworks. VAPT serves as a concrete instantiation of this idea. Its adaptive formulation illustrates the potential of adaptive prompt experts to deliver \emph{improved performance}, \emph{stronger parameter efficiency}, and \emph{enhanced sample efficiency} in both theory and practice.

Within this perspective, the most relevant comparison is between VAPT’s adaptive prompt-expert formulation and VPT, where prompt experts are constant functions. Our experiments consistently show that VAPT outperforms VPT on the majority of benchmarks while using a more compact set of trainable parameters. Moreover, our empirical findings quantify the statistical advantages of VAPT over VPT, and these observations are rigorously supported by our theoretical analysis in Section~\ref{sec:theory} and Appendix~\ref{appendix:proof}. Together, these results reinforce our central claim that increasing the expressiveness of prompt experts is a principled and effective direction for advancing visual prompt tuning.}

\vspace{-0.85em}
\section{Conclusion} \label{sec:conclusion}
\vspace{-0.85em}

In this paper, we highlight the limited functional expressiveness of prompt experts in existing VPT formulation and introduce VAPT, a novel adaptive prompt design. Our approach achieves superior performance compared to VPT while using fewer parameters and demonstrates optimal sample efficiency both theoretically and empirically. Our theoretical analysis is grounded in interpreting self-attention as a composition of multiple MoE models. Given that self-attention is central to most Transformer architectures, we believe our analysis can naturally extend to other Transformer variants. While our experiments were conducted on Vision Transformers, the potential demonstrated by adaptive prompts suggests that future work could explore alternative designs across diverse Transformer architectures and broaden their applications to a wider range of tasks. Furthermore, future research could investigate the potential synergies between VAPT and VPT, for instance, by exploring their integration.

\section*{Reproducibility Statement}

In order to facilitate the reproduction of our empirical results, we provide detailed descriptions of the experimental setup in Section~\ref{sec:exp_setup} and Appendix~\ref{appendix:implementation_details}. All datasets used in this study are publicly available, enabling full replication of our experiments.

\bibliography{references}
\bibliographystyle{iclr2026_conference}

\newpage
\appendix

\begin{center}
{}\textbf{\Large{Supplement to
``Revisit Visual Prompt Tuning: The Expressiveness of Prompt Experts''}}
\end{center}

In this supplementary material, we provide detailed proofs of the main results in Appendix~\ref{appendix:proof} and additional theoretical findings in Appendix~\ref{appendix:additional_results}. A discussion of related work is presented in Appendix~\ref{appendix:related_work}. Implementation details of our experiments are described in Appendix~\ref{appendix:implementation_details}, while additional experimental results are included in Appendix~\ref{appendix:add_experiment}. Specifically, these include detailed per-task results corresponding to the main experiments in Section~\ref{sec:experiment} (Appendices~\ref{appendix:per_task_results}, \ref{appendix:per_task_results_mae_moco}), statistical significance tests (Appendix~\ref{appendix:stats_test}), an evaluation of VAPT performance across different backbone scales (Appendix~\ref{appendix:backbone_scales}), and semantic segmentation results (Appendix~\ref{appendix:semantic_segmentation}). Finally, ablation studies, computational cost analyses, adversarial robustness, and interpretive visualizations are presented in Appendix~\ref{appendix:ablation_study}, Appendix~\ref{appendix:computational_cost}, Appendix~\ref{appendix:adversarial}, and Appendix~\ref{appendix:attention_map}, respectively.

\vspace{-0.3em}
\section{Proofs} \label{appendix:proof}
\vspace{-0.3em}

In this appendix, we provide proofs for key theoretical results on prompt estimation presented in the main text.

\vspace{-0.3em}
\subsection{Proof of Theorem~\ref{theorem:param_rate_nonlinear}}
\label{appendix:param_rate_nonlinear}
\vspace{-0.3em}

\begin{proposition}
\label{theorem:regression_estimation_learnable}
     Given the least-squares estimator $\widehat{G}_{n}$ defined in Equation~(\ref{eq:least_squared_estimator}), under the $L_2(\mu)$ norm, the estimator $f_{\widehat{G}_n}(\cdot)$ converges to the true model $f_{G_*}(\cdot)$ at a parametric rate with respect to the sample size. That is,
    \begin{align}
        \label{eq:model_bound_2}
        \normf{f_{\widehat{G}_n}-f_{G_*}}=\mathcal{O}_{P}([\log(n)/n]^{\frac{1}{2}}).
    \end{align}
\end{proposition}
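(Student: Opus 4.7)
The plan is to apply the standard empirical-process framework for nonparametric least-squares estimation, in the spirit of van de Geer~\cite{vandeGeer-00}. Starting from the basic inequality satisfied by $\widehat{G}_n$, namely $\sum_i \|Y_i - f_{\widehat{G}_n}(\Xbm_i)\|^2 \leq \sum_i \|Y_i - f_{G_*}(\Xbm_i)\|^2$, substituting $Y_i = f_{G_*}(\Xbm_i) + \varepsilon_i$ yields
\begin{align*}
    \|f_{\widehat{G}_n} - f_{G_*}\|_n^2 \leq \frac{2}{n}\sum_{i=1}^n \langle \varepsilon_i,\, (f_{\widehat{G}_n} - f_{G_*})(\Xbm_i)\rangle.
\end{align*}
Controlling the right-hand side uniformly over the class $\mathcal{F}_{L'}(\Theta) := \{f_G : G \in \mathcal{G}_{L'}(\Theta)\}$ is the empirical-process step, and the resulting rate is governed by the bracketing entropy of $\mathcal{F}_{L'}(\Theta)$.

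The key technical input is therefore a bracketing-entropy bound of the form $H_B(\epsilon, \mathcal{F}_{L'}(\Theta), \|\cdot\|_{L_2(\mu)}) \lesssim \log(1/\epsilon)$. Since $\Theta$ is a compact subset of a finite-dimensional space and $f_G$ depends on the parameters $(b_{j'}, W_{1j'}, W_2)_{j' \leq L'}$ via compositions of softmax, $\sigma$, and bilinear/bilinear-in-exp terms, I would first establish a uniform Lipschitz bound $\|f_{G_1} - f_{G_2}\|_{L_2(\mu)} \leq C \|G_1 - G_2\|_{\Theta^{L'}}$ using the assumptions on $\sigma$ (Lipschitzness plus controlled growth) spelled out in Appendix~\ref{appendix:param_rate_nonlinear}. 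A standard $\epsilon$-covering of $\Theta^{L'}$, having cardinality $\mathcal{O}(\epsilon^{-D})$ for a fixed $D$, then lifts to an $\epsilon$-bracketing of $\mathcal{F}_{L'}(\Theta)$ by using $f_G \pm C\epsilon$ as upper/lower envelopes, yielding the claimed logarithmic bracketing entropy.

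Combining this entropy bound with a standard peeling argument (e.g., Theorem~9.2 of \cite{vandeGeer-00}) and Gaussian concentration for the noise terms $\varepsilon_i$ produces $\|f_{\widehat{G}_n} - f_{G_*}\|_{L_2(\mu)} = \mathcal{O}_{P}(\sqrt{\log n / n})$, which is exactly the claimed parametric-up-to-log rate. The main obstacle I anticipate is verifying the global Lipschitz property of $G \mapsto f_G$ in the presence of the softmax denominator $D_{f,G_*}(\Xbm)$, which must be bounded away from zero uniformly in $\Xbm \in \mathrm{supp}(\mu)$ and $G \in \mathcal{G}_{L'}(\Theta)$. This requires care because $\Xbm^\top A_j^0 \Xbm$ and $(BW_2\sigma(W_{1j'}\Xbm))^\top \Xbm$ sit inside exponentials, so one needs the compactness of $\Theta$, the boundedness of $\mathrm{supp}(\mu)$, and the growth condition on $\sigma$ to ensure that both the denominator and the individual expert outputs stay in a controlled range; once those uniform bounds are in place the usual quotient-rule and chain-rule manipulations give finite Lipschitz constants and the standard machinery applies without further incident.
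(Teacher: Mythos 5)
Your proposal is correct but proceeds by a genuinely different route than the paper. The paper exploits the Gaussianity of the noise to reinterpret the least-squares estimator as a \emph{maximum likelihood} estimator, invokes Theorem~7.4 of van de Geer (a general Hellinger-rate theorem for MLE) to get $h\bigl(p(\cdot|f_{\widehat{G}_n}),p(\cdot|f_{G_*})\bigr)=\mathcal{O}_P(\sqrt{\log n/n})$ directly, and then converts the Hellinger bound into an $L_2(\mu)$ bound on $\|f_{\widehat{G}_n}-f_{G_*}\|$ via the closed form $h^2=1-\exp\bigl(-\tfrac{1}{8\sigma^2}\|f_{\widehat{G}_n}(\Xbm)-f_{G_*}(\Xbm)\|^2\bigr)$ for two Gaussians with shared covariance. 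You instead go through the basic least-squares inequality, an empirical-process bound uniform over $\mathcal{F}_{L'}(\Theta)$, an explicit bracketing-entropy calculation, and a peeling argument (essentially van de Geer's Theorem~9.2). Both ultimately rest on a logarithmic entropy bound for the function class coming from the compactness of $\Theta$; the paper simply defers verification of that condition to the cited theorem, while you sketch how to discharge it by hand. The trade-offs are as expected: the paper's detour through Hellinger distance is shorter here because the Hellinger--$L_2$ conversion is trivial for Gaussian noise and because Theorem~7.4 delivers a \emph{population}-level bound directly, whereas your basic-inequality route naturally produces a bound in the empirical norm $\|\cdot\|_n$ and requires an additional (standard but real) norm-comparison step to upgrade to $\|\cdot\|_{L_2(\mu)}$. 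On the other hand, your argument does not hinge on exact Gaussianity and would extend unchanged to sub-Gaussian noise, and it makes the entropy verification — including the lower bound on the softmax denominator $D_{f,G}(\Xbm)$ that you rightly flag — fully explicit rather than implicit in the reference.
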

\noindent The proof of Proposition~\ref{theorem:regression_estimation_learnable} is provided in Appendix~\ref{appendix:regression_estimation_learnable}. Building on the convergence rate established therein, we now aim to demonstrate the following inequality:
\begin{align}
\inf_{G\in\mathcal{G}_{L'}(\Theta)} \normf{f_{G}-f_{G_*}}/\mathcal{D}_1(G,G_*) >0. \label{eq:key_inequality_nonlinear}
\end{align}
We divide the proof of the above inequality into local and global parts presented in Appendices~\ref{subsec:local_part_nonlinear} and~\ref{subsec:global_part_nonlinear}, respectively. Before presenting the proof details, let us introduce some essential assumptions on the activation function $\sigma$.

\noindent \textbf{Assumptions.} We impose the following assumptions on the activation function $\sigma$:

\noindent \emph{(A.1) (Identifiability)} If there exist parameters $(W_{1},W_{2})$ and $(W_{1}',W_{2}')$ such that $W_{2}\sigma(W_{1}\Xbm)=W_{2}'\sigma(W_{1}'\Xbm)$ holds for almost surely $\Xbm$, then $(W_{1},W_{2})=(W_{1}',W_{2}')$. 

\noindent \emph{(A.2) (Uniform Lipschitz)} Let $F(\Xbm;W_1,W_2):=\exp((BW_{2}\sigma(W_{1}\Xbm))^{\top}\Xbm)CW_{2}\sigma(W_{1}\Xbm)$. Then, for any $\tau\in\{1,2\}$, we have
    \begin{align*}
        \sum_{|\alpha|=\tau}\Bigg|\Big(\frac{\partial^{|\alpha|}F}{\partial W_1^{\alpha_1}\partial W_2^{\alpha_2}}(\Xbm;W_1,W_2)&-\frac{\partial^{|\alpha|}F}{\partial W_1^{\alpha_1}\partial W_2^{\alpha_2}}(\Xbm;W_1',W_2')\Big)\gamma^{\alpha}\Bigg|\leq C\|(W_1,W_2)-(W_1',W_2')\|^{\zeta}\|\gamma\|^{\tau},
    \end{align*}
    for any vector $\gamma\in\mathbb{R}^{2dr}$ and for some positive constants $\zeta$ and $C$ that are independent of $\Xbm$ and $(W_1,W_2),(W_1',W_2')$. Here, $\alpha=(\alpha_1,\alpha_2)\in\mathbb{N}^{r\times d}\times\mathbb{N}^{d\times r}$.
    
\noindent \emph{(A.3) (Strong identifiability)} The function $\sigma$ is twice differentiable almost surely. For any natural number $\ell$ and distinct parameters $\{W_{1,j}:j\in[\ell]\}$, the functions in the set
\begin{align*}
    &\Big\{\sigma(W_{1,j}\Xbm), \sigma^2(W_{1,j}\Xbm)\Xbm^{(u)}, \sigma^{(1)}(W_{1,j}\Xbm)\Xbm^{(u)}, \sigma^{(1)}(W_{1,j}\Xbm)\sigma(W_{1,j}\Xbm)\Xbm^{(u)}\Xbm^{(v)},  \\
    & [\sigma^{(1)}(W_{1,j}\Xbm)]^2\sigma(W_{1,j}\Xbm)\Xbm^{(u)}\Xbm^{(v)}\Xbm^{(w)},\sigma^{(1)}(W_{1,j}\Xbm)\sigma^2(W_{1,j}\Xbm)\Xbm^{(u)}\Xbm^{(v)}\Xbm^{(w)}, \\
    &[\sigma^{(1)}(W_{1,j}\Xbm)]^2\Xbm^{(u)}\Xbm^{(v)}\Xbm^{(w)}, \sigma^{(2)}(W_{1,j}\Xbm)\Xbm^{(u)},\sigma^{(2)}(W_{1,j}\Xbm)\Xbm^{(u)}\Xbm^{(v)}, \\
    &\sigma^{(2)}(W_{1,j}\Xbm)\Xbm^{(u)}\Xbm^{(v)}\Xbm^{(w)}:j\in[\ell], u,v,w\in[d]\Big\}
\end{align*}
are linearly independent for almost surely $\Xbm$. Here, $\sigma^{(1)}$ and $\sigma^{(2)}$ denote the first and second derivatives of $\sigma$, respectively, which are applied element-wise to the matrices $W_{1,j}\Xbm$.


\vspace{-0.3em}
\subsubsection{Local Part} 
\label{subsec:local_part_nonlinear}
\vspace{-0.3em}

The local part of the inequality~\eqref{eq:key_inequality_nonlinear} corresponds to the following inequality:
\begin{align*}
    \lim_{\varepsilon\to0} \inf_{G\in\mathcal{G}_{L'}(\Theta): \mathcal{D}_1(G,G_*)\leq \varepsilon} 
\frac{\normf{f_{G}-f_{G_*}}}{\mathcal{D}_1(G,G_*)} >0.
\end{align*}
We assume, for the sake of contradiction, that the above inequality does not hold. Then, we can find a sequence of measures $G_{n} := \sum_{j' = 1}^{L'} \exp(b_{n,j'}) \delta_{(W_{n,1j'}, W_{n,2})}$ in $\mathcal{G}_{L'}(\Theta)$ such that
$$\left\{\begin{matrix}
 \mathcal{D}_{1n}:=\mathcal{D}_1(G_n,G_*) \to 0, \\
 \normf{f_{G_n}-f_{G_*}}/\mathcal{D}_{1n} \to 0.
\end{matrix}\right.$$
For the sake of the presentation, $\mathcal{V}_j^n:= \mathcal{V}_j(G_n)$ is denoted as a Voronoi cell of $G_n$ generated by the $j$-th components of the true measure $G_*$. Given that the ensuing arguments are asymptotic, without loss of generality we assume that those Voronoi cells do not depend on the sample size, \ie we have $\mathcal{V}_j = \mathcal{V}_j^n$ for all $n$ and $1 \leq j \leq L$. Hence, the Voronoi loss $\mathcal{D}_{1n}$ can be rewritten as follows:
\begin{align*}
    \mathcal{D}_{1n}  : =\sum_{j'=1}^{L}\Big|\sum_{i\in\mathcal{V}_{j'}}\exp(b_{n,i})-\exp(b_{*,j'})\Big| &+\sum_{j'\in[L]:|\mathcal{V}_{j'}|=1}\sum_{i\in\mathcal{V}_{j'}}\exp(b_{n,i})(\|\Delta W_{n,1ij'}\| + \|\Delta W_{n,2}\|) \nonumber\\
    &+\sum_{j'\in[L]:|\mathcal{V}_{j'}|>1}\sum_{i\in\mathcal{V}_{j'}}\exp(b_{n,i})(\|\Delta W_{n,1ij'}\|^{2} + \|\Delta W_{n,2}\|^{2}) 
\end{align*}
where we define $\Delta W_{n,1ij'}= W_{n,1i}- W_{*,1j'}$ and $\Delta W_{n,2}=W_{n,2}- W_{*,2}$ for all $i \in \mathcal{V}_{j'}$.

\noindent From the hypothesis, since $\mathcal{D}_{1n} \to 0$ as $n\to\infty$, we have $\sum_{i\in\mathcal{V}_{j}}\exp(b_{n,i})\to\exp(b_{*,j})$, $W_{n,1i} \to W_{*,1j'}$, and $W_{n,2} \to W_{*,2}$ for any $i \in \mathcal{V}_{j}, j \in [L]$. Throughout this proof, for simplicity of argument we assume without loss of generality that $B=I_{d}$, $C=I_{d}$, and $r=1$. We note that our techniques can be generalized to the general case of these given matrices.
Now, we divide the proof of the local part into the following three main substeps:

\paragraph{Step 1 - Taylor expansion.} We first define the following function:
$$Q_n(\Xbm):=\Big[\sum_{j = 1}^{N}\exp(\Xbm^{\top}A^0_{j}\Xbm+a^0_{j})+\sum_{j'=1}^{L}\exp((W_{*,2}\sigma(W_{*,1j'}\Xbm))^{\top}\Xbm+b_{*,j'})\Big]\cdot[f_{G_n}(\Xbm)-f_{G_*}(\Xbm)].$$
Then, we can decompose the function $Q_{n}(\Xbm)$ as follows:
\begin{align}
Q_n(\Xbm)&=\sum_{j=1}^{L}\sum_{i\in\mathcal{V}_j}\exp(b_{n,i})\Big[\exp((W_{n,2}\sigma(W_{n,1i}\Xbm))^{\top}\Xbm)W_{n,2}\sigma(W_{n,1i}\Xbm)\nonumber \\
    & \hspace{8 em} -\exp((W_{*,2}\sigma(W_{*,1j}\Xbm))^{\top}\Xbm)W_{*,2}\sigma(W_{*,1j}\Xbm)\Big] \nonumber \\
    &-\sum_{j=1}^{L}\sum_{i\in\mathcal{V}_j}\exp(b_{n,i})\Big[\exp((W_{n,2}\sigma(W_{n,1i}\Xbm))^{\top}\Xbm)-\exp((W_{*,2}\sigma(W_{*,1j}\Xbm))^{\top}\Xbm)\Big]f_{G_n}(\Xbm) \nonumber \\
    &+\sum_{j=1}^{L}\Big(\sum_{i\in\mathcal{V}_j}\exp(b_{n,i})-\exp(b_{*,j})\Big)\exp((W_{*,2}\sigma(W_{*,1j}\Xbm))^{\top}\Xbm)\Big[W_{*,2}\sigma(W_{*,1j}\Xbm)-f_{G_n}(\Xbm)\Big] \nonumber \\
    &:=A_n(\Xbm)-B_n(\Xbm)+ C_n(\Xbm). \label{eq:main_Equation}
\end{align}

\paragraph{Decomposition of the function $A_n(\Xbm)$.} To streamline the argument, we define the following functions: $\bar{E}(\Xbm; W_{1},W_{2}) : = \exp((W_{2}\sigma(W_{1}\Xbm))^{\top}\Xbm)$ and $\bar{H}(\Xbm;W_{1},W_{2}) = W_{2}\sigma(W_{1}\Xbm)$. Furthermore, the product of these functions is defined as $\bar{F}(\Xbm;W_{1}, W_{2})= \bar{E}(\Xbm; W_{1},W_{2}) \bar{H}(\Xbm;W_{1},W_{2})$. To account for the difference in the number of elements among Voronoi cells, we further decompose the function $A_n(\Xbm)$ as follows:
\begin{align*}
    A_n(\Xbm)&=\sum_{j:|\mathcal{V}_j|=1}\sum_{i\in\mathcal{V}_j}\exp(b_{n,i})\Big[\bar{F}(\Xbm;W_{n,1i}, W_{n,2})-\bar{F}(\Xbm;W_{*,1j}, W_{*,2})\Big]\\
    & \hspace{3 em} + \sum_{j:|\mathcal{V}_j|>1}\sum_{i\in\mathcal{V}_j}\exp(b_{n,i})\Big[\bar{F}(\Xbm;W_{n,1i}, W_{n,2})-\bar{F}(\Xbm;W_{*,1j}, W_{*,2})\Big]\\
    &:= A_{n,1}(\Xbm) + A_{n,2}(\Xbm)
\end{align*}
An application of the first-order Taylor expansion leads to:
\begin{align*}
    \bar{E}(\Xbm; W_{n,1i},W_{n,2}) & = \bar{E}(\Xbm; W_{*,1j},W_{*,2}) + \sum_{|\alpha|=1} (\Delta W_{n,1ij})^{\alpha_1}(\Delta W_{n,2})^{\alpha_2} \dfrac{\partial^{|\alpha_1|+|\alpha_2|}\bar{E}}{\partial{W_{1}^{\alpha_1}}\partial{W_{2}^{\alpha_2}}}(\Xbm;W_{*,1j},W_{*,2}) \\
    & + \bar{R}_{ij,1}(\Xbm), \\
    \bar{H}(\Xbm;W_{n,1i},W_{n,2}) & = \bar{H}(\Xbm;W_{*,1j},W_{*,2}) + \sum_{|\alpha|=1} (\Delta W_{n,1ij})^{\alpha_1}(\Delta W_{n,2})^{\alpha_2} \dfrac{\partial^{|\alpha_1|+|\alpha_2|} \bar{H}}{\partial{W_{1}^{\alpha_1}}\partial{ W_{2}^{\alpha_2}}}(\Xbm;W_{*,1j},W_{*,2}) \\
    & + \bar{R}_{ij,2}(\Xbm).
\end{align*}
Here, the indices $i, j$ in the above equations satisfy $|\mathcal{V}_{j}| = 1$, \ie Voronoi cells with exactly one element and $i \in \mathcal{V}_{j}$. Furthermore, the functions $\bar{R}_{ij,1}(\Xbm)$ and $\bar{R}_{ij, 2}(\Xbm)$ in these expressions represent the Taylor remainders from the expansions of the functions $\bar{E}$ and $\bar{H}$. Combining these results leads to:
\begin{align*}
    A_{n,1}(\Xbm) &= \sum_{j:|\mathcal{V}_j|=1}\sum_{i\in\mathcal{V}_j} \dfrac{\exp(b_{n,i})}{\alpha!} \sum_{|\alpha|=1} \biggr\{(\Delta W_{n,1ij})^{\alpha_1}(\Delta W_{n,2})^{\alpha_2}\dfrac{\partial^{|\alpha_1|+|\alpha_2|}\bar{E}}{\partial {W_{1}^{\alpha_1}}\partial{W_2^{\alpha_2}}}(\Xbm;W_{*,1j},W_{*,2}) \bar{H}(\Xbm;W_{*,1j},W_{*,2}) \\
    & + (\Delta W_{n,1ij})^{\alpha_1}(\Delta W_{n,2})^{\alpha_2} \dfrac{\partial^{|\alpha_1|+|\alpha_2|}\bar{H}}{\partial {W_{1}^{\alpha_1}}\partial{W_2^{\alpha_2}}}(\Xbm;W_{*,1j},W_{*,2}) \bar{E}(\Xbm; W_{*,1j},W_{*,2})\biggr\} + \tilde{R}_{n,1}(\Xbm)\\
    &=\sum_{j:|\mathcal{V}_j|=1}\sum_{|\alpha|=1} \biggr\{ \bar{M}_{n,j,\alpha_1,\alpha_2}\dfrac{\partial^{|\alpha_1|+|\alpha_2|}\bar{E}}{\partial{W_{1}^{\alpha_1}}\partial{W_{2}^{\alpha_2}}}(\Xbm;W_{*,1j},W_{*,2}) \bar{H}(\Xbm;W_{*,1j},W_{*,2}) \\
    & + \bar{M}_{n,j,\alpha_1,\alpha_2}\dfrac{\partial^{|\alpha_1|+|\alpha_2|}\bar{H}}{\partial {W_{1}^{\alpha_1}}\partial{W_2^{\alpha_2}}}(\Xbm;W_{*,1j},W_{*,2}) \bar{E}(\Xbm; W_{*,1j},W_{*,2})\biggr\} + \tilde{R}_{n,1}(\Xbm)
\end{align*}
where $\alpha = (\alpha_{1}, \alpha_{2})$. Furthermore, due to the uniform smoothness of the functions $\bar{E}$ and $\bar{H}$, the function $\tilde{R}_{n,1}(\Xbm)$ in the above equation satisfies $\bar{R}_{n,1}(\Xbm)/\mathcal{D}_{1n} \to 0$ when $n$ approaches infinity. Finally, the terms $M_{n,j,\alpha_1,\alpha_2}$ in this equation admit the following forms:
\begin{align*}
\bar{M}_{n,j,\alpha_1,\alpha_2}=\sum_{i\in\mathcal{V}_j} \dfrac{\exp(b_{n,i})}{\alpha!} (\Delta W_{n,1ij})^{\alpha_1}(\Delta W_{n,2})^{\alpha_2}, 
\end{align*}
for any $|\alpha| = 1$.

\noindent We now proceed to decompose the function $A_{n,2}(\Xbm)$. Unlike the function $A_{n,1}(\Xbm)$ for which we utilized only a first-order Taylor expansion,  the analysis of $A_{n,2}(\Xbm)$ must account for Voronoi cells potentially containing more than one element. Consequently, we employ second-order Taylor expansions of the functions $\bar{E}$ and $\bar{H}$. In particular, an application of the second-order Taylor expansion leads to:
\begin{align*}
A_{n,2}(\Xbm) & = \sum_{j:|\mathcal{V}_j|>1}\sum_{1\leq |\alpha|\leq 2} \biggr\{\bar{M}_{n,j,\alpha_1,\alpha_2}\dfrac{\partial^{|\alpha_1|+|\alpha_2|} \bar{E}}{\partial {W_{1}^{\alpha_1}}\partial{W_2^{\alpha_2}}}(\Xbm;W_{*,1j},W_{*,2}) \bar{H}(\Xbm;W_{*,1j},W_{*,2}) \\
& + \bar{M}_{n,j,\alpha_1,\alpha_2}\dfrac{\partial^{|\alpha_1|+|\alpha_2|} \bar{H}}{\partial {W_{1}^{\alpha_1}}\partial{W_2^{\alpha_2}}}(\Xbm;W_{*,1j},W_{*,2}) \bar{E}(\Xbm; W_{*,1j},W_{*,2}) \biggr\} \\
& + \sum_{|\alpha| = 1, |\beta| = 1} \bar{M}_{n,j,\alpha, \beta} \dfrac{\partial^{|\alpha_1|+|\alpha_2|} \bar{E}}{\partial {W_{1}^{\alpha_1}}\partial{W_2^{\alpha_2}}}(\Xbm;W_{*,1j},W_{*,2}) \dfrac{\partial^{|\beta_1|+|\beta_2|}\bar{H}}{\partial{W_{1}^{\beta_1}}\partial{ W_{2}^{\beta_2}}}(\Xbm;W_{*,1j},W_{*,2})  + \tilde{R}_{n,2}(\Xbm)
\end{align*}
where $\alpha = (\alpha_{1}, \alpha_{2})$, $\beta = (\beta_{1}, \beta_{2})$. Furthermore, due to the uniform smoothness of the functions $\bar{E}$ and $\bar{H}$, the function $\tilde{R}_{n,2}(\Xbm)$, which represents the combination of Taylor remainders from the second-order Taylor expansion, satisfies $\bar{R}_{n,2}(\Xbm)/\mathcal{D}_{1n} \to 0$ as $n$ goes to infinity. Additionally, the coefficients $\bar{M}_{n,j,\alpha_1,\alpha_2}$ and $\bar{M}_{n,j,\alpha_1,\alpha_2,\beta_1,\beta_2}$ in the above formulation take the following forms:
\begin{align*}   \bar{M}_{n,j,\alpha_1,\alpha_2}=\sum_{i\in\mathcal{V}_j} \dfrac{\exp(b_{n,i})}{\alpha!}(\Delta W_{n,1ij})^{\alpha_1}(\Delta W_{n,2})^{\alpha_2}, 
\end{align*}
for any multi-index $\alpha$ such that $|\alpha| = 2$ and
\begin{align*}
    \bar{M}_{n,j,\alpha_1,\alpha_2,\beta_1,\beta_2} = \sum_{i\in\mathcal{V}_j} \dfrac{\exp(b_{n,i})}{\alpha! \beta!} (\Delta W_{n,1ij})^{\alpha_1+\beta_1}(\Delta W_{n,2})^{\alpha_2+\beta_2},  
\end{align*}
for any coefficients $\alpha$ and $\beta$ such that $|\alpha| = |\beta| = 1$. Given the expressions for the functions $\bar{E}(\Xbm; W_{1},W_{2})$ and $\bar{H}(\Xbm;W_{1},W_{2})$, their partial derivatives take the following forms:
\begin{align*}
    \dfrac{\partial \bar{E}}{\partial {W_{1}^{(u)}}}(\Xbm;W_{1},W_{2}) & = \exp((W_{2}\sigma(W_{1}\Xbm))^{\top}\Xbm) \Xbm^{\top}W_2\sigma^{(1)}(W_1\Xbm)\Xbm^{(u)}, \\
    \dfrac{\partial \bar{E}}{\partial {W_{2}^{(v)}}}(\Xbm;W_{1},W_{2}) & = \exp((W_{2}\sigma(W_{1}\Xbm))^{\top}\Xbm) \Xbm^{(v)}\sigma(W_1\Xbm),\\
     \dfrac{\partial^{2} \bar{E}}{\partial {W_{1}^{(u)}}\partial {W_{1}^{(v)}}}(\Xbm;W_{1},W_{2}) & = \exp((W_{2}\sigma(W_{1}\Xbm))^{\top}\Xbm) [\Xbm^{\top}W_2\sigma^{(1)}(W_1\Xbm)]^2\Xbm^{(u)}\Xbm^{(v)}\\
     & + \exp((W_{2}\sigma(W_{1}\Xbm))^{\top}\Xbm) \Xbm^{\top}W_2\sigma^{(2)}(W_1\Xbm)\Xbm^{(u)}\Xbm^{(v)}, \\
      \dfrac{\partial^{2} \bar{E}}{\partial {W_{2}^{(u)}}\partial {W_{2}^{(v)}}}(\Xbm;W_{1},W_{2}) & = \exp((W_{2}\sigma(W_{1}\Xbm))^{\top}\Xbm) \Xbm^{(u)}\Xbm^{(v)}\sigma^2(W_1\Xbm), \\
      \dfrac{\partial^{2} \bar{E}}{\partial {W_{1}^{(u)}}\partial {W_{2}^{(v)}}}(\Xbm;W_{1},W_{2}) & = \exp((W_{2}\sigma(W_{1}\Xbm))^{\top}\Xbm) \Xbm^{\top}W_2\sigma^{(1)}(W_1\Xbm)\Xbm^{(u)}\Xbm^{(v)}\sigma(W_1\Xbm)\\
      &+\exp((W_{2}\sigma(W_{1}\Xbm))^{\top}\Xbm) \sigma^{(1)}(W_1\Xbm)\Xbm^{(u)}\Xbm^{(v)}, \\
    \dfrac{\partial \bar{H}}{\partial {W_{1}^{(u)}}}(\Xbm;W_{1},W_{2}) & = W_2\sigma^{(1)}(W_1\Xbm)\Xbm^{(u)}, \\
    \dfrac{\partial \bar{H}}{\partial {W_{2}}}(\Xbm;W_{1},W_{2}) & =\sigma(W_1\Xbm)I_{d}, \\
    \dfrac{\partial^2 \bar{H}}{\partial {W_{1}^{(u)}}\partial{W_{1}^{(v)}}}(\Xbm;W_{1},W_{2}) & = W_2\sigma^{(2)}(W_1\Xbm)\Xbm^{(u)}\Xbm^{(v)}, \\
    \dfrac{\partial^2 \bar{H}}{\partial {W_{1}^{(u)}}\partial{W_{2}}}(\Xbm;W_{1},W_{2}) & = \sigma^{(2)}(W_1\Xbm)\Xbm^{(u)}I_{d}, \\
    \dfrac{\partial^2 \bar{H}}{\partial {W_{2}^{(u)}}\partial{W_{2}^{(v)}}}(\Xbm;W_{1},W_{2}) & = \mathbf{0}.
\end{align*}
Putting the above formulations together, the functions $A_{n, 1}(\Xbm)$ and $A_{n,2}(\Xbm)$ can be rewritten as follows:
\begin{align*}
& A_{n, 1}(\Xbm) = \sum_{j:|\mathcal{V}_{j}| = 1} \exp((W_{*,2}\sigma(W_{*,1j}\Xbm))^{\top}\Xbm) \big[\sigma^{(1)}(W_{*,1j}\Xbm)\sigma(W_{*,1j}\Xbm)\Xbm^{\top}L_{n,1,j}\Xbm W_{*,2}\\
&\hspace{2cm}+\sigma^2(W_{*,1j}\Xbm)L_{n,2,j}^{\top}\Xbm W_{*,2}+\sigma^{(1)}(W_{*,1j}\Xbm)L_{n,3,j}^{\top}\Xbm W_{*,2}+\sigma(W_{*,1j}\Xbm)L_{n,2,j}^{\top}\big] + \bar{R}_{n,1}(\Xbm), \\
& A_{n, 2}(\Xbm) = \sum_{j:|\mathcal{V}_{j}| > 1} \exp((W_{*,2}\sigma(W_{*,1j}\Xbm))^{\top}\Xbm) \big[\sigma^{(1)}(W_{*,1j}\Xbm)\sigma(W_{*,1j}\Xbm)\Xbm^{\top}L_{n,1,j}\Xbm W_{*,2}\\
&\hspace{2cm}+\sigma^2(W_{*,1j}\Xbm)L_{n,2,j}^{\top}\Xbm W_{*,2}+\sigma^{(1)}(W_{*,1j}\Xbm)L_{n,3,j}^{\top}\Xbm W_{*,2}+\sigma(W_{*,1j}\Xbm)L_{n,2,j}^{\top}\\
&\hspace{2cm}+ \big([\sigma^{(1)}(W_{*,1j}\Xbm)]^2\sigma(W_{*,1j}\Xbm)+\sigma^{(2)}(W_{*,1j}\Xbm)\sigma(W_{*,1j}\Xbm)\big)\Xbm^{\top}L_{n,4,j}\Xbm \Xbm^{\top}W_{*,2} W_{*,2}\\
&\hspace{2cm}+\sigma^2(W_{*,1j}\Xbm)\sigma(W_{*,1j}\Xbm)\Xbm^{\top}L_{n,5,j}\Xbm W_{*,2}+\sigma^{(2)}(W_{*,1j}\Xbm)\Xbm^{\top}L_{n,4,j}\Xbm W_{*,2}\\
&\hspace{2cm}+\big(\sigma^{(1)}(W_{*,1j}\Xbm)[\sigma(W_{*,1j}\Xbm)]^2 \Xbm^{\top}W_{*,2}+\sigma^{(1)}(W_{*,1j}\Xbm)\sigma(W_{*,1j}\Xbm)\big)\Xbm^{\top}L_{n,6,j}\Xbm W_{*,2} \\
&\hspace{2cm}+\sigma^{(2)}(W_{*,1j}\Xbm)\Xbm^{\top}L_{n,6,j}+[\sigma^{(1)}(W_{*,1j}\Xbm)]^2\Xbm^{\top}L_{n,4,j}\Xbm \Xbm^{\top}W_{*,2} W_{*,2}\\
&\hspace{2cm}+\sigma^{(1)}(W_{*,1j}\Xbm)\sigma(W_{*,1j}\Xbm)\Xbm^{\top}W_{*,2}\Xbm^{\top}L_{n,6,j}+\sigma^{(1)}(W_{*,1j}\Xbm)\sigma(W_{*,1j}\Xbm)\Xbm^{\top}L_{n,6,j}\Xbm W_{*,2}\\
&\hspace{2cm}+[\sigma(W_{*,1j}\Xbm)]^2\Xbm^{\top}L_{n,5,j}\big]+\bar{R}_{n,2}(\Xbm),
\end{align*}
where the formulations of $L_{n,1,j},L_{n,2,j},\ldots,L_{n,6,j}$ in these equations are given by:
\begin{align*}
    L_{1,n} & := M_{n,j,e_{1:d},\zerod}W_{*,2}^{\top}, \quad M_{n,j,e_{1:d},\zerod}:=(M_{n,j,e_u,\zerod})_{u=1}^{d} \\
    L_{n,2,j}& := (M_{n,j,\zerod,e_u,})_{u=1}^{d}, \\
    L_{n,3,j}&:=M_{n,j,e_{1:d},\zerod},\\
    L_{n,4,j}&:=(M_{n,j,e_{u}+e_{v},\zerod})_{u,v=1}^{d},\\
    L_{n,5,j}&:=(M_{n,j,\zerod,e_{u}+e_{v}})_{u,v=1}^{d},\\
    L_{n,6,j}&:=(M_{n,j,e_{u},e_{v}})_{u,v=1}^{d},
\end{align*}

Here, for each $1 \leq u \leq d$, $e_{u}$ denotes the standard basis vector in $\RR^d$ with 1 in the $u$-th position and 0 in all other positions.

\paragraph{Decomposition of the function $B_n(\Xbm)$.}  Similar to our decomposition of the function $A_{n}(\Xbm)$, in decomposing the function 
$B_{n}(\Xbm)$, we also distinguish between Voronoi cells containing exactly one element and those containing more than one element. Therefore, we obtain:
\begin{align*}
    B_n(\Xbm) &=\sum_{j:|\mathcal{V}_j|=1}\sum_{i\in\mathcal{V}_j}\exp(b_{n,i})\Big[\bar{E}(\Xbm;W_{n,1i},W_{n,2})-\bar{E}(\Xbm;W_{*,1j},W_{*,2})\Big]f_{G_n}(\Xbm) \\
    &  +\sum_{j:|\mathcal{V}_j|>1}\sum_{i\in\mathcal{V}_j}\exp(b_{n,i})\Big[\bar{E}(\Xbm;W_{n,1i},W_{n,2})-\bar{E}(\Xbm;W_{*,1j},W_{*,2})\Big]f_{G_n}(\Xbm) \\
    &:= B_{n,1}(\Xbm) + B_{n,2}(\Xbm).
\end{align*}
For Voronoi cells with exactly one element, we utilize the first-order Taylor expansion, while for those with more than one element, the second-order Taylor expansion is employed. This strategy leads to the following representations:
\begin{align*}
    B_{n,1}(\Xbm)&= \sum_{j:|\mathcal{V}_j|=1}\sum_{|\alpha|=1} \bar{M}_{n,j,\alpha_1,\alpha_2} \dfrac{\partial^{|\alpha_1|+|\alpha_2|} \bar{E}}{\partial {W_{1}^{\alpha_1}}\partial{W_2^{\alpha_2}}}(\Xbm;W_{*,1j},W_{*,2})f_{G_n}(\Xbm)+ \bar{R}_{n,3}(\Xbm),
    \\
     B_{n,2}(\Xbm)&=\sum_{j:|\mathcal{V}_j|=1}\sum_{1 \leq |\alpha|\leq 2} \bar{M}_{n,j,\alpha_1,\alpha_2} \dfrac{\partial^{|\alpha_1|+|\alpha_2|} \bar{E}}{\partial {W_{1}^{\alpha_1}}\partial{W_2^{\alpha_2}}}(\Xbm;W_{*,1j},W_{*,2})f_{G_n}(\Xbm)+ \bar{R}_{n,4}(\Xbm).
\end{align*}
In these expressions, the functions $\bar{R}_{n,3}(\Xbm)$ and $ \bar{R}_{n,4}(\Xbm)$ correspond to the Taylor remainders. Due to the uniform smoothness of the functions $\bar{E}$, we obtain  $R_{n,3}(\Xbm)/\mathcal{D}_{1n} \to 0$ and  $R_{n,4}(\Xbm)/\mathcal{D}_{1n} \to 0$ as $n$ approaches infinity. By computing the closed-form expressions for the partial derivatives of the function $\bar{E}$, both functions $B_{n,1}(\Xbm)$ and $B_{n,2}(\Xbm)$ can be rewritten as follows:
\begin{align*}
    &B_{n,1}(\Xbm)  = \sum_{j:|\mathcal{V}_{j}| = 1} \exp((W_{*,2}\sigma(W_{*,1j}\Xbm))^{\top}\Xbm) \big[\sigma^{(1)}(W_{*,1j}\Xbm)\Xbm^{\top}L_{n,1,j}\Xbm +\sigma(W_{*,1j}\Xbm)L_{n,2,j}^{\top}\Xbm\big]f_{G_n}(\Xbm) \\
    & \hspace{32 em} + \bar{R}_{n,3}(\Xbm), \\
    &B_{n,2}(\Xbm)  = \sum_{j:|\mathcal{V}_{j}| > 1} \exp((W_{*,2}\sigma(W_{*,1j}\Xbm))^{\top}\Xbm) \big[\sigma^{(1)}(W_{*,1j}\Xbm)\Xbm^{\top}L_{n,1,j}\Xbm +\sigma(W_{*,1j}\Xbm)L_{n,2,j}^{\top}\Xbm\\
    &\hspace{2cm}+\big([\sigma^{(1)}(W_{*,1j}\Xbm)]^2+\sigma^{(2)}(W_{*,1j}\Xbm)\big)\Xbm^{\top}L_{n,4,j}\Xbm \Xbm^{\top}W_{*,2}+\sigma^2(W_{*,1j}\Xbm)\Xbm^{\top}L_{n,5,j}\Xbm \\
    &\hspace{2cm}+\big(\sigma^{(1)}(W_{*,1j}\Xbm)\sigma(W_{*,1j}\Xbm) \Xbm^{\top}W_{*,2}+\sigma^{(1)}(W_{*,1j}\Xbm)\big)\Xbm^{\top}L_{n,6,j}\Xbm \big]f_{G_n}(\Xbm) + \bar{R}_{n,4}(\Xbm),
\end{align*}

\noindent Combining these results, the function $Q_n(\Xbm)$ can be rewritten as follows:
\begin{align}
    Q_n(\Xbm)&= \sum_{j:|\mathcal{V}_{j}| = 1} \exp((W_{*,2}\sigma(W_{*,1j}\Xbm))^{\top}\Xbm) \big[\sigma^{(1)}(W_{*,1j}\Xbm)\sigma(W_{*,1j}\Xbm)\Xbm^{\top}L_{n,1,j}\Xbm W_{*,2}\nonumber\\
    &\hspace{2cm}+\sigma^2(W_{*,1j}\Xbm)L_{n,2,j}^{\top}\Xbm W_{*,2}+\sigma^{(1)}(W_{*,1j}\Xbm)L_{n,3,j}^{\top}\Xbm W_{*,2}+\sigma(W_{*,1j}\Xbm)L_{n,2,j}^{\top}\big] \nonumber \\
    & \hspace{- 4 em} + \sum_{j:|\mathcal{V}_{j}| > 1} \exp((W_{*,2}\sigma(W_{*,1j}\Xbm))^{\top}\Xbm) \big[\sigma^{(1)}(W_{*,1j}\Xbm)\sigma(W_{*,1j}\Xbm)\Xbm^{\top}L_{n,1,j}\Xbm W_{*,2}\nonumber\\
&\hspace{2cm}+\sigma^2(W_{*,1j}\Xbm)L_{n,2,j}^{\top}\Xbm W_{*,2}+\sigma^{(1)}(W_{*,1j}\Xbm)L_{n,3,j}^{\top}\Xbm W_{*,2}+\sigma(W_{*,1j}\Xbm)L_{n,2,j}^{\top}\nonumber\\
&\hspace{2cm}+ \big([\sigma^{(1)}(W_{*,1j}\Xbm)]^2\sigma(W_{*,1j}\Xbm)+\sigma^{(2)}(W_{*,1j}\Xbm)\sigma(W_{*,1j}\Xbm)\big)\Xbm^{\top}L_{n,4,j}\Xbm \Xbm^{\top}W_{*,2} W_{*,2}\nonumber\\
&\hspace{2cm}+\sigma^2(W_{*,1j}\Xbm)\sigma(W_{*,1j}\Xbm)\Xbm^{\top}L_{n,5,j}\Xbm W_{*,2}+\sigma^{(2)}(W_{*,1j}\Xbm)\Xbm^{\top}L_{n,4,j}\Xbm W_{*,2}\nonumber\\
&\hspace{2cm}+\big(\sigma^{(1)}(W_{*,1j}\Xbm)[\sigma(W_{*,1j}\Xbm)]^2 \Xbm^{\top}W_{*,2}+\sigma^{(1)}(W_{*,1j}\Xbm)\sigma(W_{*,1j}\Xbm)\big)\Xbm^{\top}L_{n,6,j}\Xbm W_{*,2} \nonumber\\
&\hspace{2cm}+\sigma^{(2)}(W_{*,1j}\Xbm)\Xbm^{\top}L_{n,6,j}+[\sigma^{(1)}(W_{*,1j}\Xbm)]^2\Xbm^{\top}L_{n,4,j}\Xbm \Xbm^{\top}W_{*,2} W_{*,2}\nonumber\\
&\hspace{2cm}+\sigma^{(1)}(W_{*,1j}\Xbm)\sigma(W_{*,1j}\Xbm)\Xbm^{\top}W_{*,2}\Xbm^{\top}L_{n,6,j}+\sigma^{(1)}(W_{*,1j}\Xbm)\sigma(W_{*,1j}\Xbm)\Xbm^{\top}L_{n,6,j}\Xbm W_{*,2}\nonumber\\
&\hspace{2cm}+[\sigma(W_{*,1j}\Xbm)]^2\Xbm^{\top}L_{n,5,j}\big] \nonumber 
\end{align}
\begin{align}
    & \hspace{- 4 em} - \sum_{j:|\mathcal{V}_{j}| = 1} \exp((W_{*,2}\sigma(W_{*,1j}\Xbm))^{\top}\Xbm) \big[\sigma^{(1)}(W_{*,1j}\Xbm)\Xbm^{\top}L_{n,1,j}\Xbm +\sigma(W_{*,1j}\Xbm)L_{n,2,j}^{\top}\Xbm\big]f_{G_n}(\Xbm) \nonumber \\
    & \hspace{- 4 em} - \sum_{j:|\mathcal{V}_{j}| > 1} \exp((W_{*,2}\sigma(W_{*,1j}\Xbm))^{\top}\Xbm) \big[\sigma^{(1)}(W_{*,1j}\Xbm)\Xbm^{\top}L_{n,1,j}\Xbm +\sigma(W_{*,1j}\Xbm)L_{n,2,j}^{\top}\Xbm\nonumber\\
    &\hspace{2cm}+\big([\sigma^{(1)}(W_{*,1j}\Xbm)]^2+\sigma^{(2)}(W_{*,1j}\Xbm)\big)\Xbm^{\top}L_{n,4,j}\Xbm \Xbm^{\top}W_{*,2}+\sigma^2(W_{*,1j}\Xbm)\Xbm^{\top}L_{n,5,j}\Xbm \nonumber\\
    &\hspace{2cm}+\big(\sigma^{(1)}(W_{*,1j}\Xbm)\sigma(W_{*,1j}\Xbm) \Xbm^{\top}W_{*,2}+\sigma^{(1)}(W_{*,1j}\Xbm)\big)\Xbm^{\top}L_{n,6,j}\Xbm \big]f_{G_n}(\Xbm)\nonumber \\
    & \hspace{- 4 em} - \sum_{j = 1}^{L} M_{n,j,\zerod,\zerod} \exp((W_{*,2}\sigma(W_{*,1j}\Xbm))^{\top}\Xbm) f_{G_{n}}(\Xbm) \nonumber \\
    & \hspace{- 4 em} + \sum_{j = 1}^{L} M_{n,j,\zerod,\zerod} \exp((W_{*,2}\sigma(W_{*,1j}\Xbm))^{\top}\Xbm) {\sigma}(W_{*,1j}\Xbm)W_{*,2}  \nonumber \\
    & \hspace{- 4 em} + \tilde{R}_{n,1}(\Xbm) + \tilde{R}_{n,2}(\Xbm) - \bar{R}_{n,3}(\Xbm) - \bar{R}_{n,4}(\Xbm) 
 \label{eq:main_Equation_expression}
\end{align}   
where the coefficient $\bar{M}_{n,j,\zerod,\zerod}:=\sum_{i\in\mathcal{V}_j}\exp(b_{n,i})-\exp(b_{*,j})$ for any $j \in [L]$. 

\paragraph{Step 2 - Non-vanishing coefficients.} 
 An important insight from Equation~(\ref{eq:main_Equation_expression}) is that the ratio $Q_{n}(\Xbm)/ \mathcal{D}_{1n}$ can be expressed as a linear combination of the following independent functions:
 \begin{align*}
& E(\Xbm;W_{*,1j},W_{*,2})\sigma(W_{*,1j}\Xbm)W_{*,2}, \ E(\Xbm;W_{*,1j},W_{*,2})\sigma^{(1)}(W_{*,1j}\Xbm)\sigma(W_{*,1j}\Xbm)\Xbm^{(u)}\Xbm^{(v)} W_{*,2},\\
&E(\Xbm;W_{*,1j},W_{*,2})\sigma^2(W_{*,1j}\Xbm)\Xbm^{(u)} W_{*,2}, \ E(\Xbm;W_{*,1j},W_{*,2})\sigma^{(1)}(W_{*,1j}\Xbm)\Xbm^{(u)}W_{*,2},\\ 
&E(\Xbm;W_{*,1j},W_{*,2})\sigma(W_{*,1j}\Xbm)e_{u}, \  E(\Xbm;W_{*,1j},W_{*,2})[\sigma^{(1)}(W_{*,1j}\Xbm)]^2\sigma(W_{*,1j}\Xbm)\Xbm^{(u)}\Xbm^{(v)}\Xbm^{(w)}W_{*,2},  \\
& E(\Xbm;W_{*,1j},W_{*,2})\sigma^{(2)}(W_{*,1j}\Xbm)\sigma(W_{*,1j}\Xbm)\Xbm^{(u)}\Xbm^{(v)}\Xbm^{(w)}W_{*,2}, \ E(\Xbm;W_{*,1j},W_{*,2})\sigma^{(2)}(W_{*,1j}\Xbm)\Xbm^{(u)}\Xbm^{(v)}W_{*,2},\\
&E(\Xbm;W_{*,1j},W_{*,2})\sigma^{2}(W_{*,1j}\Xbm)\sigma(W_{*,1j}\Xbm)\Xbm^{(u)}\Xbm^{(v)}W_{*,2},\\
&E(\Xbm;W_{*,1j},W_{*,2})\sigma^{(1)}(W_{*,1j}\Xbm)[\sigma(W_{*,1j}\Xbm)]^2\Xbm^{(u)}\Xbm^{(v)}\Xbm^{(w)}W_{*,2}, \ E(\Xbm;W_{*,1j},W_{*,2})\sigma^{(2)}(W_{*,1j}\Xbm)\Xbm^{(u)}e_{u},\\
&E(\Xbm;W_{*,1j},W_{*,2})[\sigma^{(1)}(W_{*,1j}\Xbm)]^2\Xbm^{(u)}\Xbm^{(v)}\Xbm^{(w)}W_{*,2}, \ E(\Xbm;W_{*,1j},W_{*,2})\sigma^{(1)}(W_{*,1j}\Xbm)\sigma(W_{*,1j}\Xbm)\Xbm^{(u)}\Xbm^{(v)}e_{v},\\
& E(\Xbm;W_{*,1j},W_{*,2})[\sigma(W_{*,1j}\Xbm)]^2\Xbm^{(u)}e_{u},\\
&E(\Xbm;W_{*,1j},W_{*,2})\sigma^{(1)}(W_{*,1j}\Xbm)\Xbm^{(u)}\Xbm^{(v)}f_{G_n}(\Xbm), \ E(\Xbm;W_{*,1j},W_{*,2})\sigma(W_{*,1j}\Xbm)\Xbm^{(u)}f_{G_n}(\Xbm),\\
&E(\Xbm;W_{*,1j},W_{*,2})[\sigma^{(1)}(W_{*,1j}\Xbm)]^2\Xbm^{(u)}\Xbm^{(v)}\Xbm^{(w)}f_{G_n}(\Xbm), \ E(\Xbm;W_{*,1j},W_{*,2})\sigma^{(2)}(W_{*,1j}\Xbm)\Xbm^{(u)}\Xbm^{(v)}\Xbm^{(w)}f_{G_n}(\Xbm),\\
&E(\Xbm;W_{*,1j},W_{*,2})\sigma^{2}(W_{*,1j}\Xbm)\Xbm^{(u)}\Xbm^{(v)}f_{G_n}(\Xbm), \ E(\Xbm;W_{*,1j},W_{*,2})\sigma^{(1)}(W_{*,1j}\Xbm)\Xbm^{(u)}\Xbm^{(v)}f_{G_n}(\Xbm),\\
&E(\Xbm;W_{*,1j},W_{*,2})\sigma^{(1)}(W_{*,1j}\Xbm)\sigma(W_{*,1j}\Xbm)\Xbm^{(u)}\Xbm^{(v)}\Xbm^{(w)}f_{G_n}(\Xbm), \ E(\Xbm;W_{*,1j},W_{*,2})f_{G_n}(\Xbm),
\end{align*}
for any $1 \leq j \leq L$ and $1 \leq u, v,w \leq d$. 
 
\noindent We now demonstrate that not all of the coefficients of these linearly independent functions converge to 0 as $n$ goes to infinity. To prove this by contradiction, assume that all these coefficients do converge to 0 as $n$ goes to $\infty$. From the representation of the ratio $Q_{n}(\Xbm)/ \mathcal{D}_{1n}$ in terms of these linearly independent functions, it follows that these ratios $L_{n,1,j}^{(u)}/\mathcal{D}_{1n}$, $L_{n,2,j}^{(u)}/\mathcal{D}_{1n}$, $L_{n,3,j}^{(u)}/\mathcal{D}_{1n}$, $L_{n,4,j}^{(uv)}/\mathcal{D}_{1n}$, $L_{n,5,j}^{(uv)}/\mathcal{D}_{1n}$, $L_{n,6,j}^{(uv)}/\mathcal{D}_{1n}$, and $M_{n,j,\zerod,\zerod}/\mathcal{D}_{1n}$ all converge to 0 as $n$ approaches infinity for all indices $u,v \in [d]$ and $j \in [L]$. 
 
\noindent By first considering the vanishing of the ratio $\bar{M}_{n,j,\zerod,\zerod}/\mathcal{D}_{1n}$ to 0 for any $j \in [L]$ and then taking the absolute value of this ratio, we find that
\begin{align*}
     \frac{|M_{n,j,\zerod,\zerod}|}{\mathcal{D}_{1n}} = \frac{|\sum_{i\in\mathcal{V}_j}\exp(b_{n,i})-\exp(b_{*,j})|}{\mathcal{D}_{1n}}  \to 0,
\end{align*}
By varying the index $j$ from 1 to $L$ and summing the corresponding limits, we find that
\begin{align}
\label{eq:key_limits_first_2}
\frac{\sum_{j = 1}^{L} |\sum_{i\in\mathcal{V}_j}\exp(b_{n,i})-\exp(b_{*,j})|}{\mathcal{D}_{1n}} \to 0. 
\end{align}
Our strategy is now to consider separately the terms corresponding to Voronoi cells with exactly one element and those with more than one element. In particular, for Voronoi cells with exactly one element, \ie for indices $j \in [L]$ such that $|\mathcal{V}_j | = 1$, as the ratio $L_{n,3,j}^{(u)}/\mathcal{D}_{1n}$ approaches 0, we have
\begin{align*}
   \frac{\sum_{i \in \mathcal{V}_{j}} \exp(b_{n,i}) \|\Delta W_{n,1ij}\|_1}{\mathcal{D}_{1n}}=\frac{\sum_{u = 1}^{d} |L_{n,3,j}^{(u)}|}{\mathcal{D}_{1n}} \to 0. 
\end{align*}
Similarly, the condition $L_{n,2,j}/ \mathcal{D}_{1n} \to 0$ implies that $\dfrac{\sum_{i \in \mathcal{V}_{j}} \exp(b_{n,i}) \|\Delta W_{n,2ij}\|_1}{\mathcal{D}_{1n}} \to 0$. Combining these results, we obtain 
\begin{align*}
    \frac{\sum_{j: |\mathcal{V}_{j}| = 1} \sum_{i \in \mathcal{V}_{j}} \exp(b_{n,i})(\|\Delta W_{n,1ij}\|_1 + \|\Delta W_{n,2ij}\|_1)}{\mathcal{D}_{1n}} \to 0. 
\end{align*}
Due to the equivalence of the $\ell_1$ and $\ell_2$ norms, we deduce that
\begin{align}
    \label{eq:linear_loss}
    \frac{\sum_{j: |\mathcal{V}_{j}| = 1} \sum_{i \in \mathcal{V}_{j}} \exp(b_{n,i})(\|\Delta W_{n,1ij}\| + \|\Delta W_{n,2ij}\|)}{\mathcal{D}_{1n}} \to 0. 
\end{align}
We now turn to Voronoi cells with more than one element, namely, indices $j \in [L]$ such that $|\mathcal{V}_{j}| > 1$. As the ratios ${L}_{n,4,j}^{(uu)}/ \mathcal{D}_{1n} \to 0$, we find that 
\begin{align*}
    \frac{\sum_{u = 1}^{d} {L}_{n,4,j}^{(uu)}}{\mathcal{D}_{1n}} = \frac{\sum_{i \in \mathcal{V}_{j}} \exp(b_{n,i})\|\Delta W_{n,1ij}\|^2}{\mathcal{D}_{1n}} \to 0. 
\end{align*}
Likewise, as ${L}_{n,5,j}^{(uu)}/ \mathcal{D}_{1n} \to 0$, we arrive at $\dfrac{\sum_{i \in \mathcal{V}_{j}} \exp(b_{n,i})\|\Delta W_{n,2ij}\|^2}{\mathcal{D}_{1n}} \to 0$. These results together imply that
\begin{align}
    \label{eq:squared_loss}
    \frac{\sum_{j: |\mathcal{V}_{j}| > 1} \sum_{i \in \mathcal{V}_{j}} \exp(b_{n,i})(\|\Delta W_{n,1ij}\|^2 + \|\Delta W_{n,2ij}\|^2)}{\mathcal{D}_{1n}} \to 0. 
\end{align}
Combining the results of Equations~\eqref{eq:key_limits_first_2}, \eqref{eq:linear_loss}, and \eqref{eq:squared_loss}, we achieve that
\begin{align*}
     \frac{\mathcal{D}_{1n}}{\mathcal{D}_{1n}} = 1 \to 0, \mathrm{as} \ n \to \infty,
\end{align*}
which is a contradiction. Consequently, at least one of the coefficients of the linearly independent functions in the expression for the ratio $Q_{n}(\Xbm)/ \mathcal{D}_{1n}$ does not converge to 0 as $n$ approaches infinity.

\paragraph{Step 3 - Application of Fatou’s lemma.} In this step, we divide all coefficients of the linearly independent terms in the expression of the ratio $Q_{n}(\Xbm)/ \mathcal{D}_{1n}$, namely, $L_{n,1,j}^{(u)}/\mathcal{D}_{1n}$, $L_{n,2,j}^{(u)}/\mathcal{D}_{1n}$, $L_{n,3,j}^{(u)}/\mathcal{D}_{1n}$, $L_{n,4,j}^{(uv)}/\mathcal{D}_{1n}$, $L_{n,5,j}^{(uv)}/\mathcal{D}_{1n}$, $L_{n,6,j}^{(uv)}/\mathcal{D}_{1n}$, and $M_{n,j,\zerod,\zerod}/\mathcal{D}_{1n}$ for all $u, v \in [d]$, by the maximum of their absolute values. Specifically, we denote by $m_n$ the maximum of the absolute values of these coefficients. Since not all of these coefficients approach 0, it follows that $1/m_n$ does not approach infinity as $n \to \infty$. 

\noindent From the hypothesis, we have $\normf{f_{G_n}-f_{G_*}}/\mathcal{D}_{1n} \to 0$ as $n \to \infty$. As $1/m_{n} \not \to \infty$, it follows that $\normf{f_{G_n}-f_{G_*}}/(m_{n} \mathcal{D}_{1n}) \to 0$. An application of Fatou's lemma  yields 
\begin{align*}
    \lim_{n \to \infty} \dfrac{\normf{f_{G_n}-f_{G_*}}}{m_n\mathcal{D}_{1n}} \geq  \int \liminf_{n \to \infty} \dfrac{\left| f_{G_n}(\Xbm)-f_{G_*}(\Xbm)\right|}{m_n\mathcal{D}_{1n}}d\mu(\Xbm).
\end{align*}
Combining these results, we obtain $\liminf_{n \to \infty} \dfrac{\left| f_{G_n}(\Xbm)-f_{G_*}(\Xbm)\right|}{m_n\mathcal{D}_{2n}} = 0$ for almost surely $\Xbm$. To simplify the presentation, we define the following limits:
\begin{align*}
    \dfrac{L_{n,\tau,j}}{m_{n}\mathcal{D}_{1n}} \to \lambda_{\tau,j}, \quad \dfrac{M_{n,j,\zerod,\zerod}}{\mathcal{D}_{1n}} \to \lambda_{0,j},
\end{align*}
for any $1\leq \tau\leq 6$ and $1 \leq j \leq L$. By the definition of $m_{n}$, at least one coefficient in the sets $\{\lambda_{0,j},\lambda_{1,j},\lambda_{2,j},\lambda_{3,j}\}_{j: |\mathcal{V}_{j}| = 1}$, $\{\lambda_{0,j},\lambda_{1,j},\lambda_{2,j},\lambda_{3,j},\lambda_{4,j},\lambda_{5,j},\lambda_{6,j}\}_{j: |\mathcal{V}_{j}| > 1}$ must be non-zero.
The condition $\liminf_{n \to \infty} \dfrac{\left| f_{G_n}(\Xbm)-f_{G_*}(\Xbm)\right|}{m_n\mathcal{D}_{1n}} = 0$, or equivalently, $\liminf_{n \to \infty} \dfrac{\left| Q_n(\Xbm)\right|}{m_n\mathcal{D}_{1n}} = 0$ implies that
\begin{align}
    &\sum_{j:|\mathcal{V}_{j}| = 1} \exp((W_{*,2}\sigma(W_{*,1j}\Xbm))^{\top}\Xbm) \big[\sigma^{(1)}(W_{*,1j}\Xbm)\sigma(W_{*,1j}\Xbm)\Xbm^{\top}\lambda_{1,j}\Xbm W_{*,2}\nonumber\\
    &\hspace{2cm}+\sigma^2(W_{*,1j}\Xbm)\lambda_{2,j}^{\top}\Xbm W_{*,2}+\sigma^{(1)}(W_{*,1j}\Xbm)\lambda_{3,j}^{\top}\Xbm W_{*,2}+\sigma(W_{*,1j}\Xbm)\lambda_{2,j}^{\top}\big] \nonumber \\
    & + \sum_{j:|\mathcal{V}_{j}| > 1} \exp((W_{*,2}\sigma(W_{*,1j}\Xbm))^{\top}\Xbm) \big[\sigma^{(1)}(W_{*,1j}\Xbm)\sigma(W_{*,1j}\Xbm)\Xbm^{\top}\lambda_{1,j}\Xbm W_{*,2}\nonumber\\
&\hspace{2cm}+\sigma^2(W_{*,1j}\Xbm)\lambda_{2,j}^{\top}\Xbm W_{*,2}+\sigma^{(1)}(W_{*,1j}\Xbm)\lambda_{3,j}^{\top}\Xbm W_{*,2}+\sigma(W_{*,1j}\Xbm)\lambda_{2,j}^{\top}\nonumber\\
&\hspace{2cm}+ \big([\sigma^{(1)}(W_{*,1j}\Xbm)]^2\sigma(W_{*,1j}\Xbm)+\sigma^{(2)}(W_{*,1j}\Xbm)\sigma(W_{*,1j}\Xbm)\big)\Xbm^{\top}\lambda_{4,j}\Xbm \Xbm^{\top}W_{*,2} W_{*,2}\nonumber\\
&\hspace{2cm}+\sigma^2(W_{*,1j}\Xbm)\sigma(W_{*,1j}\Xbm)\Xbm^{\top}\lambda_{5,j}\Xbm W_{*,2}+\sigma^{(2)}(W_{*,1j}\Xbm)\Xbm^{\top}\lambda_{4,j}\Xbm W_{*,2}\nonumber\\
&\hspace{2cm}+\big(\sigma^{(1)}(W_{*,1j}\Xbm)[\sigma(W_{*,1j}\Xbm)]^2 \Xbm^{\top}W_{*,2}+\sigma^{(1)}(W_{*,1j}\Xbm)\sigma(W_{*,1j}\Xbm)\big)\Xbm^{\top}\lambda_{6,j}\Xbm W_{*,2} \nonumber\\
&\hspace{2cm}+\sigma^{(2)}(W_{*,1j}\Xbm)\Xbm^{\top}\lambda_{6,j}+[\sigma^{(1)}(W_{*,1j}\Xbm)]^2\Xbm^{\top}\lambda_{4,j}\Xbm \Xbm^{\top}W_{*,2} W_{*,2}\nonumber\\
&\hspace{2cm}+\sigma^{(1)}(W_{*,1j}\Xbm)\sigma(W_{*,1j}\Xbm)\Xbm^{\top}W_{*,2}\Xbm^{\top}\lambda_{6,j}+\sigma^{(1)}(W_{*,1j}\Xbm)\sigma(W_{*,1j}\Xbm)\Xbm^{\top}\lambda_{6,j}\Xbm W_{*,2}\nonumber\\
&\hspace{2cm}+[\sigma(W_{*,1j}\Xbm)]^2\Xbm^{\top}\lambda_{5,j}\big] \nonumber
\end{align}
\begin{align}
    & - \sum_{j:|\mathcal{V}_{j}| = 1} \exp((W_{*,2}\sigma(W_{*,1j}\Xbm))^{\top}\Xbm) \big[\sigma^{(1)}(W_{*,1j}\Xbm)\Xbm^{\top}\lambda_{1,j}\Xbm +\sigma(W_{*,1j}\Xbm)\lambda_{2,j}^{\top}\Xbm\big]f_{G_n}(\Xbm) \nonumber \\
    &  - \sum_{j:|\mathcal{V}_{j}| > 1} \exp((W_{*,2}\sigma(W_{*,1j}\Xbm))^{\top}\Xbm) \big[\sigma^{(1)}(W_{*,1j}\Xbm)\Xbm^{\top}\lambda_{1,j}\Xbm +\sigma(W_{*,1j}\Xbm)\lambda_{2,j}^{\top}\Xbm\nonumber\\
    &\hspace{2cm}+\big([\sigma^{(1)}(W_{*,1j}\Xbm)]^2+\sigma^{(2)}(W_{*,1j}\Xbm)\big)\Xbm^{\top}\lambda_{4,j}\Xbm \Xbm^{\top}W_{*,2}+\sigma^2(W_{*,1j}\Xbm)\Xbm^{\top}\lambda_{5,j}\Xbm \nonumber\\
    &\hspace{2cm}+\big(\sigma^{(1)}(W_{*,1j}\Xbm)\sigma(W_{*,1j}\Xbm) \Xbm^{\top}W_{*,2}+\sigma^{(1)}(W_{*,1j}\Xbm)\big)\Xbm^{\top}\lambda_{6,j}\Xbm \big]f_{G_n}(\Xbm)\nonumber \\
    & - \sum_{j = 1}^{L} \lambda_{0,j} \exp((W_{*,2}\sigma(W_{*,1j}\Xbm))^{\top}\Xbm) f_{G_{n}}(\Xbm) \nonumber \\
    &+ \sum_{j = 1}^{L} \lambda_{0,j} \exp((W_{*,2}\sigma(W_{*,1j}\Xbm))^{\top}\Xbm) {\sigma}(W_{*,1j}\Xbm)W_{*,2} =0,
\end{align}   
for almost surely $\Xbm$. 
However, that equation implies that all the coefficients $\{\lambda_{0,j},\lambda_{1,j},\lambda_{2,j},\lambda_{3,j}\}_{j: |\mathcal{V}_{j}| = 1}$, $\{\lambda_{0,j},\lambda_{1,j},\lambda_{2,j},\lambda_{3,j},\lambda_{4,j},\lambda_{5,j},\lambda_{6,j}\}_{j: |\mathcal{V}_{j}| > 1}$ are 0. It is a contradiction to the hypothesis that at least one coefficient among these coefficients is different from 0. 

\noindent Consequently, we deduce that $$\lim_{\varepsilon\to0} \inf_{G\in\mathcal{G}_{L'}(\Theta): \mathcal{D}_1(G,G_*)\leq \varepsilon} \normf{f_{G}-f_{G_*}}/\mathcal{D}_1(G,G_*) >0,$$
which proves the local part of inequality~\eqref{eq:key_inequality_nonlinear}. 

\vspace{-0.3em}
\subsubsection{Global Part}
\label{subsec:global_part_nonlinear}
\vspace{-0.3em}

From the local part of inequality~\eqref{eq:key_inequality_nonlinear}, we can find a positive constant $\varepsilon'$ such that the following inequality holds:
$$\inf_{G\in\mathcal{G}_{L'}(\Theta): \mathcal{D}_1(G,G_*)\leq \varepsilon'} \normf{f_{G}-f_{G_*}}/\mathcal{D}_1(G,G_*) >0.$$
To obtain the conclusion of the theorem, we only need to demonstrate that
$$ \inf_{G\in\mathcal{G}_{L'}(\Theta): \mathcal{D}_1(G,G_*)> \varepsilon'} \normf{f_{G}-f_{G_*}}/\mathcal{D}_1(G,G_*) >0.$$
We prove the claim by contradiction. Assuming the claim does not hold implies that there exists a sequence of $G'_{n} := \sum_{j' = 1}^{L'} \exp(b_{n,j'}) \delta_{(W_{n,1j'}, W_{n,2})}$ in the set $\mathcal{G}_{L'}(\Theta)$ such that
$$\left\{\begin{matrix}
 \mathcal{D}_1(G'_n,G_*) > \varepsilon'\\
 \normf{f_{G'_n}-f_{G_*}}/\mathcal{D}_1(G'_n,G_*) \to 0,
\end{matrix}\right.$$
as $n$ approaches the infinity. This implies that $\normf{f_{G'_n}-f_{G_*}} \to 0$  as $n$ approaches infinity.

By hypothesis, the parameter space $\Theta$ is compact. Therefore, there exists a subsequence of $G'_n$'s, that converges to a mixing measure $G'$ where $G'$ lies in the space $\mathcal{G}_{L'}(\Theta)$. From the hypothesis, we have $\mathcal{D}_1(G'_n,G_*)>\varepsilon'$. By taking the limit of both sides as $n \to \infty$, we obtain $\mathcal{D}_1(G',G_*) \geq \varepsilon'$.

An application of Fatou’s lemma leads to the following result:
$$0=\lim_{n \to \infty} \normf{f_{G'_n}-f_{G_*}} \geq  \int \liminf_{n \to \infty} \left\| f_{G'_n}(\Xbm)-f_{G_*}(\Xbm)\right\|^2 d\mu(\Xbm).$$
This inequality is only possible if $f_{G'}=f_{G_*}$ for almost surely $\Xbm$. 

By the identifiability of the function $f_{G}(\Xbm)$, this equation only holds when $G'\equiv G_*$. Consequently, $\mathcal{D}_1(G',G_*)=0$. This contradicts the assumption that $\mathcal{D}_1(G',G_*) \geq \varepsilon'>0$. Hence, the proof of the global part is complete, thereby establishing the conclusion of the theorem.

\paragraph{Proof for the Identifiability Property.} The key claim we aim to show is that if the equation $f_{G}(\Xbm) = f_{G_*}(\Xbm)$ holds for almost surely $\Xbm$, then $G \equiv  G_*$, namely, that the two mixing measures are identical.

\noindent From the hypothesis that $f_{G}(\Xbm) = f_{G_*}(\Xbm)$ for almost all $\Xbm$, it follows that
\begin{align}
    & \sum_{j=1}^{N} \frac{\exp(\Xbm^{\top} A^0_j\Xbm+a^0_j))}{D_{f,G}(\Xbm)}h(\Xbm,\eta^0_j) + \sum_{j' = 1}^{\tilde{L}} \frac{\exp((BW_{2}\sigma(W_{1j'}\Xbm))^{\top}\Xbm+b_{j'})}{D_{f,G}(\Xbm)} CW_{2}\sigma(W_{1j'}\Xbm)  \nonumber \\
& = \sum_{j=1}^{N} \frac{\exp(\Xbm^{\top} A^0_j\Xbm+a^0_j))}{D_{f,G_*}(\Xbm)} h(\Xbm,\eta^0_j)  + \sum_{j' = 1}^{L} \frac{\exp((BW_{*,2}\sigma(W_{*,1j'}\Xbm))^{\top}\Xbm+b_{*,j'})}{D_{f,G_{*}}(\Xbm)} CW_{*,2}\sigma(W_{*,1j'}\Xbm),
\label{eq:identify_setting_first_neuralnet}
\end{align}
where $G = \sum_{j = 1}^{\tilde{L}} \exp(b_{j}) \delta_{(W_{1j}, W_{2})}$. Furthermore, we define
\begin{align*}
D_{f, G_{*}}(\Xbm) & = \sum_{k = 1}^{N}\exp(\Xbm^{\top}A^0_{k}\Xbm+a^0_{k})+\sum_{j'=1}^{L}\exp((BW_{*,2}\sigma(W_{*,1j'}\Xbm))^{\top}\Xbm+b_{*,j'}), \\
D_{f,G}(\Xbm) & = \sum_{k = 1}^{N}\exp(\Xbm^{\top}A^0_{k}\Xbm+a^0_{k})+\sum_{j'=1}^{\tilde{L}}\exp((BW_{2}\sigma(W_{1j'}\Xbm))^{\top}\Xbm+b_{j'}).
\end{align*}
This equation implies that the number of atoms of $G$ and $G_{*}$ should be identical, namely, $L = \tilde{L}$. Therefore, the following equality holds
\begin{align*}  \Bigg\{\frac{\exp((BW_{2}\sigma(W_{1j'}\Xbm))^{\top}\Xbm+b_{j'})}{D_{f,G}(\Xbm)}:j' \in [L]\Bigg\} &  =\Bigg\{\frac{\exp((BW_{*,2}\sigma(W_{*,1j'}\Xbm))^{\top}\Xbm+b_{*,j'})}{D_{f,G_{*}}(\Xbm)}:j' \in [L]\Bigg\},
\end{align*}
for almost surely $\Xbm$. By relabelling the indices of these two sets, we can assume without loss of generality that 
\begin{align*} \frac{\exp((BW_{2}\sigma(W_{1j'}\Xbm))^{\top}\Xbm+b_{j'})}{D_{f,G}(\Xbm)} &  =\frac{\exp((BW_{*,2}\sigma(W_{*,1j'}\Xbm))^{\top}\Xbm+b_{*,j'})}{D_{f,G_{*}}(\Xbm)},
\end{align*}
for any index $j' \in [L]$ and for almost surely $\Xbm$. From the translation invariance property of the softmax function, the Equation~(\ref{eq:identify_setting_first_neuralnet}) becomes
\begin{align}
     \sum_{j = 1}^{L}\exp{(b_{j})}\exp((BW_{2}\sigma(W_{1j}\Xbm))^{\top}\Xbm)CW_{2}\sigma(W_{1j}\Xbm) & \nonumber \\
     & \hspace{- 20 em} = \sum_{j' = 1}^{L}\exp{(b_{*,j})}\exp((BW_{*,2}\sigma(W_{*,1j'}\Xbm))^{\top}\Xbm)CW_{*,2}\sigma(W_{*,1j'}\Xbm),\label{eq:identify_setting_second_neuralnet}
\end{align}
for almost surely $\Xbm$. This equation suggests that there exists a partition $K_{1},K_{2}, \ldots, K_{m}$ of the set $[L]$ for some $m$ such that we have $\exp(b_{j_{1}}) = \exp(b_{*,j_{2}})$ for any $j_{1}, j_{2} \in K_{i}$ and for any $i \in [m]$. Based on this result, the Equation~(\ref{eq:identify_setting_first_neuralnet}) can be rewritten as follows: 
\begin{align*}
    \sum_{i = 1}^{m}\sum_{j_{1} \in K_i}\exp{(b_{j_{1}})}\exp{((BW_{2}\sigma(W_{1j_{1}}\Xbm) )^{\top}\Xbm)}CW_{2}\sigma(W_{1j_{1}}\Xbm) & \nonumber \\
& \hspace{-20 em} = \sum_{i = 1}^{m}\sum_{j_{2} \in K_i}\exp{(b_{*,j_{2}})}\exp{((BW_{*,2}\sigma(W_{*,1j_{2}}\Xbm))^{\top}\Xbm)}CW_{*,2}\sigma(W_{*,1j_{2}}\Xbm),
\end{align*}
for almost surely $\Xbm$. This equation proves that
\begin{align*}
    \{W_{2}\sigma(W_{1j_{1}}\Xbm): j_{1} \in K_i\} = \{W_{*,2}\sigma(W_{*,1j_{2}}\Xbm): j_{2} \in K_i\}, 
\end{align*}
for any $i \in [m]$. Since the activation function $\sigma$ is identifiable, this result indicates that
\begin{align*}
    \{(W_{1j_{1}},W_{2}): j_{1} \in K_i\} = \{(W_{*,1j_{2}},W_{2}): j_{2} \in K_i\}.
\end{align*}
Consequently, we arrive at the following equality:
\begin{align*}
    \sum_{i = 1}^{m}\sum_{j_{1} \in K_i}\exp{(b_{j_{1}})}\delta_{(W_{1j_{1}}, W_{2})} = \sum_{i = 1}^{m}\sum_{j_{2} \in K_i}\exp{(b_{*,j_{2}})}\delta_{(W_{*,1j_{2}}, W_{*,2})}.
\end{align*}
This is equivalent to  $G \equiv G_*$. Thus, we obtain the conclusion of the identifiability property of the function $f_{G}$.

\vspace{-0.3em}
\subsection{Proof of Proposition~\ref{theorem:regression_estimation_learnable}}
\label{appendix:regression_estimation_learnable}
\vspace{-0.3em}

Recall that $(\Xbm_1,Y_1), (\Xbm_2,Y_2),\ldots,(\Xbm_n,Y_n)\in\mathbb{R}^{d} \times\mathbb{R}^{d'}$ follow the regression model:
\begin{align*}
    Y_i=f_{G_*}(\Xbm_i)+\varepsilon_i, \quad i=1,2,\ldots,n, 
\end{align*}
where the independent Gaussian noises $\varepsilon_1,\ldots,\varepsilon_n$ satisfy that $\bbE[{\varepsilon_{i}}|\Xbm_i] = 0$ and $\var(\varepsilon_{i}|\Xbm_i) = \sigma^2 I_{d'}$ for all $i \in [n]$. The true regression function takes the following form:
\begin{align*}
f_{G_{*}}(\Xbm)  := \sum_{j=1}^{N} \frac{\exp(\Xbm^{\top} A^0_j\Xbm+a^0_j)}{D_{f}(\Xbm)}\cdot h(\Xbm,\eta^0_j)  \nonumber\\
 & \hspace{-4 em} +\sum_{j' = 1}^{L} \frac{\exp((BW^{(2)}_{*}\sigma(W^{(1)}_{*,j'}\Xbm))^{\top}\Xbm+b_{*,j'})}{D_{f}(\Xbm)} \cdot CW^{(2)}_{*}\sigma(W^{(1)}_{*,j'}\Xbm)
 \end{align*}
 where $D_{f}(\Xbm) := \sum_{k = 1}^{N}\exp(\Xbm^{\top}A^0_{k}\Xbm+a^0_{k})+\sum_{j'=1}^{L}\exp((BW^{(2)}_{*}\sigma(W^{(1)}_{*,j'}\Xbm))^{\top}\Xbm+b_{*,j'})$. The least-squares estimator $\widehat{G}_n$ is defined as
\begin{align*}
    \widehat{G}_n :=\argmin_{G\in\mathcal{G}_{L'}(\Theta)}\sum_{i=1}^{n}\|Y_i-f_{G}(\Xbm_i)\|^2,
\end{align*}
The assumption that $\varepsilon_i|\Xbm_i\sim\mathcal{N}(\zerod,\sigma^2I_{d'})$ indicates that the least-squares estimator $\widehat{G}_{n}$ is indeed a maximum likelihood estimator, given by:
\begin{align*}
    \widehat{G}_n \in\argmax_{G\in\mathcal{G}_{L'}(\Theta)}\frac{1}{n}\sum_{i=1}^{n}\log(p(Y_i|f_{G}(\Xbm_i),\sigma^2I_{d'})),
\end{align*}
where $p(Y_i|f_{G}(\Xbm_i),\sigma^2I_{d'})$ is the multivariate Gaussian distribution with mean $f_{G}(\Xbm)$ and covariance matrix $\sigma^2I_{d'}$. From the result of Theorem 7.4 in~\citet{vandeGeer-00}, it follows that
\begin{align*}
    h(p(Y|f_{\widehat{G}_n}(\Xbm),\sigma^2I_{d'}),p(Y|f_{G_*}(\Xbm),\sigma^2I_{d'}))=\mathcal{O}_P(\sqrt{\log(n)/n}).
\end{align*}
Given the closed-form expression for the Hellinger distance between two multivariate Gaussian distributions, we have
\begin{align*}
    h^2(p(Y|f_{\widehat{G}_n}(\Xbm),\sigma^2I_{d'}),p(Y|f_{G_*}(\Xbm),\sigma^2I_{d'}))=1-\exp\Bigg\{-\frac{1}{8\sigma^2}\|f_{\widehat{G}_n}(\Xbm)-f_{G_*}(\Xbm)\|^2\Bigg\}.
\end{align*}
Putting the above results together leads to
\begin{align*}
    1-\exp\Bigg\{-\frac{1}{8\sigma^2}\|f_{\widehat{G}_n}(\Xbm)-f_{G_*}(\Xbm)\|^2\Bigg\}=\mathcal{O}_P(\log(n)/n).
\end{align*}
Hence, for sufficiently large $n$, for some universal constant $C$ the above equality implies 
\begin{align*}
    \|f_{\widehat{G}_n}(\Xbm)-f_{G_*}(\Xbm)\|^2&\leq 8\sigma^2\log\Big(\dfrac{1}{1-C\log(n)/n}\Big)\\
    &=8\sigma^2\log\Big(1+\dfrac{C\log(n)/n}{1-C\log(n)/n}\Big)\\
    &\leq 8\sigma^2\cdot\dfrac{C\log(n)/n}{1-C\log(n)/n}\\
    &\leq 16\sigma^2C\log(n)/n.
\end{align*}
This is equivalent to
\begin{align*}
    \|f_{\widehat{G}_n}(\Xbm)-f_{G_*}(\Xbm)\|=\mathcal{O}_P(\sqrt{\log(n)/n}).
\end{align*}
Consequently
\begin{align*}
    \normf{f_{\widehat{G}_n}-f_{G_*}}=\Big(\int_{\mathcal{X}}\|f_{\widehat{G}_n}(\Xbm)-f_{G_*}(\Xbm)\|^2\dint \mu(\Xbm)\Big)^{1/2}=\mathcal{O}_P(\sqrt{\log(n)/n}).
\end{align*}
The proof of the proposition is completed.

\vspace{-0.3em}
\section{Additional Theoretical Results for Visual Adaptive Prompt Tuning}
\label{appendix:additional_results}
\vspace{-0.3em}

Thus far, in Section~\ref{sec:theory}, we have provided theoretical benefits of visual adaptive prompt tuning when the function $\sigma$ in Equation~\eqref{eq:adaptive_prompt} is nonlinear. In this appendix, we demonstrate that the visual adaptive prompt tuning also has appealing theoretical properties when the function $\sigma$ is linear. 

\noindent Similar to the setting considered in Section~\ref{sec:theory}, we provide a theoretical guarantee for the linear setting of the visual adaptive prompt tuning via the regression framework. For empirical results, please refer to Appendix~\ref{appendix:ablation_study}.

\noindent \textbf{Problem Setup.}  We assume that the i.i.d. samples of size $n$: $(\Xbm_1,Y_1), (\Xbm_2,Y_2),\ldots,(\Xbm_n,Y_n)\in\mathbb{R}^{d} \times\mathbb{R}^{d'}$ are generated from the model:
\begin{align}
    Y_i=f_{\widetilde{G}_*}(\Xbm_i)+\varepsilon_i, \quad i=1,2,\ldots,n, 
    \label{eq:regression_model_linear}
\end{align}
where $\varepsilon_1,\ldots,\varepsilon_n$ are independent Gaussian noise variables such that $\bbE[{\varepsilon_{i}}|\Xbm_i] = 0$ and $\var(\varepsilon_{i}|\Xbm_i) = \nu^2 I_{d'}$ for all $1 \leq i \leq n$. Additionally, we assume that $\Xbm_{1}, \Xbm_{2}, \ldots, \Xbm_{n}$ are i.i.d. samples from some probability distribution $\mu$.
The regression function $f_{\widetilde{G}_{*}}(\cdot)$ in Equation~\eqref{eq:regression_model_linear} then takes the form
\begin{align}
f_{\widetilde{G}_{*}}(\Xbm)  := \sum_{j=1}^{N} &\frac{\exp(\Xbm^{\top} A^0_j\Xbm+a^0_j)}{\tilde{D}_{f}(\Xbm)}\cdot h(\Xbm,\eta^0_j)\nonumber\\
&+\sum_{j' = 1}^{L} \frac{\exp((BW_{*,2}W_{*,1j'}\Xbm)^{\top}\Xbm+b_{*,j'})}{\tilde{D}_{f}(\Xbm)}\cdot CW_{*,2}W_{*,1j'}\Xbm, \label{eq:true_regression_function_linear}
\end{align}
with $N$ pre-trained experts and $L$ unknown experts, where $\tilde{D}_{f,\widetilde{G}_*}(\Xbm) := \sum_{k = 1}^{N}\exp(\Xbm^{\top}A^0_{k}\Xbm+a^0_{k})+\sum_{j'=1}^{L}\exp((BW_{*,2}W_{*,1j'}\Xbm)^{\top}\Xbm+b_{*,j'})$, while $\widetilde{G}_{*} := \sum_{j' = 1}^{L} \exp(b_{*,j'}) \delta_{W_{*,2}W_{*,1j'}}$ denotes a \emph{mixing measure,} \ie a weighted sum of Dirac measures $\delta$, associated with unknown parameters $(b_{*,j'},W_{*,2}W_{*,1j'})_{j'=1}^{L}$ in the parameter space $\Theta\subset\mathbb{R} \times\mathbb{R}^{d\times d}$. At the same time, the values of the matrix $A^0_j$, the expert parameter $\eta^0_j$, and the bias parameter $a^0_j$ are known for all $1\leq j\leq N$. Additionally, the matrices $B \in \mathbb{R}^{d \times d}$ and $C \in \mathbb{R}^{d' \times d}$ are given.

\noindent
\noindent \textbf{Least-Squares Estimator.} In particular, we consider the estimator
\begin{align}
    \label{eq:least_squared_estimator_linear}
    \widetilde{G}_n:=\argmin_{G\in\mathcal{G}_{L'}(\Theta)}\sum_{i=1}^{n}\Big(Y_i-f_{G}(\Xbm_i)\Big)^2,
\end{align}
\textbf{Voronoi Loss.} The Voronoi loss tailored to the setting in Equation~\eqref{eq:true_regression_function_linear} is defined as
\begin{align*}     
\mathcal{D}_2({G},\widetilde{G}_*)  :=\sum_{j'=1}^{L}\Big|\sum_{i\in\mathcal{V}_{j'}}\exp(b_{i})-\exp(b_{*,j'})\Big| &+ \sum_{j'\in[L]:|\mathcal{V}_{j'}|=1}\sum_{i\in\mathcal{V}_{j'}}\exp(b_{i}) \|\Delta W_{2}W_{1ij'}\| \\ 
& +\sum_{j'\in[L]:|\mathcal{V}_{j'}|>1}\sum_{i\in\mathcal{V}_{j'}}\exp(b_{i}) \|\Delta W_{2}W_{1ij'}\|^2 ,
\end{align*}
where we denote $\Delta W_{2}W_{1ij'}:=W_{2}W_{1i}-W_{*,2}W_{*,1j'}$  for any $i, j'$. Equipped with this loss function, we now provide the convergence rate of prompt estimation for the setting in Equation~\eqref{eq:true_regression_function_linear} in Theorem~\ref{theorem:param_rate_linear}.
\begin{theorem}
    \label{theorem:param_rate_linear}
    Given the least-squares estimator $\widetilde{G}_n$ defined in Equation~\eqref{eq:least_squared_estimator_linear}, we have that
    \begin{align*}
        \mathcal{D}_2(\widetilde{G}_n,\widetilde{G}_*)=\mathcal{O}_{P}([\log(n)/n]^{\frac{1}{2}}).
    \end{align*}
\end{theorem}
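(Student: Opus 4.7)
The plan is to follow the same two-stage strategy as Theorem~\ref{theorem:param_rate_nonlinear}. First, I would derive a rate on the regression function in $L_2(\mu)$, then reduce the parameter convergence to an inequality of the form $\normf{f_G - f_{\widetilde{G}_*}} \gtrsim \mathcal{D}_2(G, \widetilde{G}_*)$ uniformly over $G \in \mathcal{G}_{L'}(\Theta)$. The regression-function bound follows exactly as in Proposition~\ref{theorem:regression_estimation_learnable}: under the Gaussian noise assumption the least-squares estimator $\widetilde{G}_n$ is also the MLE, so Theorem 7.4 in van de Geer yields a Hellinger bound of order $\sqrt{\log(n)/n}$, which translates (via the closed form for the Hellinger distance between Gaussians with common covariance) into $\normf{f_{\widetilde{G}_n} - f_{\widetilde{G}_*}} = \mathcal{O}_P([\log(n)/n]^{1/2})$.

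The crucial simplification in the linear setting is that the unknown parameters enter the model only through the products $\Theta_i := W_{2}W_{1i}$ and $\Theta_{*,j} := W_{*,2}W_{*,1j}$, and the Voronoi loss $\mathcal{D}_2$ depends solely on the differences $\Delta\Theta_{ij'} = \Theta_i - \Theta_{*,j'}$. Accordingly, the expert and gating terms reduce to $\exp((B\Theta_i \Xbm)^\top \Xbm) C \Theta_i \Xbm$ and $\exp((B\Theta_i \Xbm)^\top \Xbm)$, which are smooth polynomial-times-exponential functions of $\Theta_i$. To establish the local part of the inequality, I would argue by contradiction: assume a sequence $G_n \to \widetilde{G}_*$ with $\mathcal{D}_{2n}:=\mathcal{D}_2(G_n,\widetilde{G}_*) \to 0$ and $\normf{f_{G_n} - f_{\widetilde{G}_*}}/\mathcal{D}_{2n} \to 0$. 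Multiplying through by the common denominator $\tilde{D}_{f,\widetilde{G}_*}(\Xbm)$, I would Taylor-expand the expert and gating terms around each $\Theta_{*,j}$, using first order for Voronoi cells $\mathcal{V}_j$ with $|\mathcal{V}_j|=1$ and second order for $|\mathcal{V}_j|>1$; this matches exactly the decomposition $A_n - B_n + C_n$ carried out in Appendix~\ref{subsec:local_part_nonlinear}, but with far fewer surviving derivative terms because no $\sigma^{(1)}, \sigma^{(2)}$ factors appear.

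The main obstacle, and the heart of the proof, is verifying the linear independence of the resulting family of functions of $\Xbm$. After the expansion the ratio $Q_n(\Xbm)/\mathcal{D}_{2n}$ becomes a linear combination of terms of the form $\exp((B\Theta_{*,j}\Xbm)^\top \Xbm) \cdot p(\Xbm)$, where $p$ ranges over a finite collection of monomial or matrix-valued polynomial factors in $\Xbm$ (arising from differentiating $\Xbm \mapsto \Theta \Xbm$ and $\Xbm \mapsto \exp((B\Theta\Xbm)^\top \Xbm)$), with coefficients given by the aggregated parameter deviations $\sum_{i \in \mathcal{V}_j}\exp(b_{n,i})(\Delta \Theta_{n,ij})^\alpha$. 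I would first show, as in Step~2 of Appendix~\ref{subsec:local_part_nonlinear}, that these coefficients cannot all vanish when divided by $\mathcal{D}_{2n}$ — otherwise summing the individual limits would reconstruct $\mathcal{D}_{2n}/\mathcal{D}_{2n}=1 \to 0$, a contradiction. Then, applying Fatou's lemma to $\normf{f_{G_n}-f_{\widetilde{G}_*}}/(m_n\mathcal{D}_{2n}) \to 0$ (where $m_n$ is the maximum absolute coefficient) yields a nontrivial vanishing linear combination of the listed functions almost surely. Invoking the linear independence of $\{\exp((B\Theta_{*,j}\Xbm)^\top \Xbm) \cdot p(\Xbm)\}$ across distinct atoms $\Theta_{*,j}$ and distinct monomials $p$ then forces every coefficient to be zero, contradicting the previous step.

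Finally, the global part ($\mathcal{D}_2 > \varepsilon'$) proceeds as in Appendix~\ref{subsec:global_part_nonlinear}: a contradicting sequence $G'_n$ with $\mathcal{D}_2(G'_n,\widetilde{G}_*)>\varepsilon'$ and $\normf{f_{G'_n}-f_{\widetilde{G}_*}}\to 0$ admits, by compactness of $\Theta$, a subsequential limit $G'$ with $\mathcal{D}_2(G',\widetilde{G}_*)\geq \varepsilon'>0$; Fatou's lemma forces $f_{G'} = f_{\widetilde{G}_*}$ almost surely, and identifiability of $f_G$ (provable by exactly the three-step argument of the nonlinear case — translation invariance of softmax, matching cluster sums, and uniqueness of atoms — where identifiability of atoms is now immediate because $\Theta$ itself is the single atomic parameter) forces $G' \equiv \widetilde{G}_*$, a contradiction. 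Combining the local and global inequalities with the regression-function bound yields $\mathcal{D}_2(\widetilde{G}_n,\widetilde{G}_*)=\mathcal{O}_P([\log(n)/n]^{1/2})$.
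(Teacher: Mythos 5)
Your proposal follows essentially the same route as the paper: the regression-function rate via the MLE/Hellinger argument of Proposition~\ref{theorem:regression_estimation_learnable}, the reduction to the lower bound $\normf{f_G-f_{\widetilde{G}_*}}\gtrsim\mathcal{D}_2(G,\widetilde{G}_*)$, the local part via Taylor expansion in the product parameter $\Theta=W_2W_1$ (first order for singleton Voronoi cells, second order otherwise) followed by the non-vanishing-coefficient contradiction and Fatou, and the global part via compactness and identifiability. You also correctly identify the key simplification the paper exploits — that all dependence is through $\Theta$ and that no strong-identifiability assumption on $\sigma$ is needed because the independent functions are just monomials times exponentials — so the argument is sound and matches the paper's Appendix B proof step for step.
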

\begin{proof}[Proof of Theorem~\ref{theorem:param_rate_linear}]
    Based on the convergence rate of regression function estimation presented in Proposition~\ref{theorem:regression_estimation_learnable}, our objective is to establish the following inequality:
\begin{align}
\label{eq:param_rate_2}
\inf_{G\in\mathcal{G}_{L'}(\Theta)} \normf{f_{G}-f_{G_*}}/\mathcal{D}_2(G,G_*) >0.
\end{align}
We partition the proof of the above inequality into local and global parts.

\vspace{-0.3em}
\subsubsection{Local Part}
\vspace{-0.3em}

The local part of the inequality~\eqref{eq:param_rate_2} corresponds to the following condition:
\begin{align*}
    \lim_{\varepsilon\to0} \inf_{G\in\mathcal{G}_{L'}(\Theta): \mathcal{D}_2(G,\widetilde{G}_*)\leq \varepsilon} 
    \frac{\normf{f_{G}-f_{\widetilde{G}_*}}}{\mathcal{D}_2(G,\widetilde{G}_*)} >0.
\end{align*}
We assume that the above inequality does not hold. Then, we can find a sequence of measures $G_{n} := \sum_{j' = 1}^{L'} \exp(b_{n,j'}) \delta_{W_{n,2}W_{n,1j'}}$ in $\mathcal{G}_{L'}(\Theta)$ such that as $n \to \infty$, we have
$$\left\{\begin{matrix}
 \mathcal{D}_{2n}:=\mathcal{D}_2(G_n,\widetilde{G}_*) \to 0, \\
 \normf{f_{G_n}-f_{\widetilde{G}_*}}/\mathcal{D}_{2n} \to 0.
\end{matrix}\right.$$
Similar to the proof of Theorem~\ref{theorem:param_rate_nonlinear}, for clarity of presentation, $\mathcal{V}_j^n:= \mathcal{V}_j(G_n)$ denote the Voronoi cell of $G_n$ generated by the $j$-th component of the true measure $\widetilde{G}_*$. Since the ensuing arguments are asymptotic, without loss of generality we assume that these Voronoi cells do not depend on the sample size, \ie we have $\mathcal{V}_j = \mathcal{V}_j^n$ for all $n$ and $1 \leq j \leq L$. Therefore, the Voronoi loss $\mathcal{D}_{2n}$ can be rewritten as follows:
\begin{align*}
    \mathcal{D}_{2n}  : =\sum_{j'=1}^{L}\Big|\sum_{i\in\mathcal{V}_{j'}}\exp(b_{n,i})-\exp(b_{*,j'})\Big| &+\sum_{j'\in[L]:|\mathcal{V}_{j'}|=1}\sum_{i\in\mathcal{V}_{j'}}\exp(b_{n,i})\|\Delta W_{n,2}W_{n,1ij'}\|  \nonumber\\
    &+\sum_{j'\in[L]:|\mathcal{V}_{j'}|>1}\sum_{i\in\mathcal{V}_{j'}}\exp(b_{n,i})\|\Delta W_{n,2}W_{n,1ij'}\|^{2} ,
\end{align*}
where we define $\Delta W_{n,2}W_{n,1ij'}:= W_{n,2}W_{n,1i}- W_{*,2}W_{*,1j'}$ for all $i \in \mathcal{V}_{j'}$.

\noindent From the hypothesis that $\mathcal{D}_{2n} \to 0$ as $n$ approaches $\infty$, we find that $\sum_{i\in\mathcal{V}_{j}}\exp(b_{n,i})\to\exp(b_{*,j})$ and $W_{n,2}W_{n,1i} \to W_{*,2}W_{*,1j'}$ for any $i \in \mathcal{V}_{j}$ and $j \in [L]$. Throughout this proof, we assume WLOG that $B=Id_{d}$, $C=Id_{d}$ and $r=1$ for simplicity; we note that our techniques can be generalized to the general case.
We partition the proof for the local part into three steps, as detailed below:

\paragraph{Step 1 - Taylor expansion.} We first define the following function:
$$Q_n(\Xbm):=\Big[\sum_{j = 1}^{N}\exp(\Xbm^{\top}A^0_{j}\Xbm+a^0_{j})+\sum_{j'=1}^{L}\exp((W_{*,2}W_{*,1j'}\Xbm)^{\top}\Xbm+b_{*,j'})\Big]\cdot[f_{G_n}(\Xbm)-f_{\widetilde{G}_*}(\Xbm)].$$  
Then, we can decompose the function $Q_{n}(\Xbm)$ as follows:
\begin{align}
    Q_n(\Xbm)&=\sum_{j=1}^{L}\sum_{i\in\mathcal{V}_j}\exp(b_{n,i})\Big[\exp((W_{n,2}W_{n,1i}\Xbm)^{\top}\Xbm)W_{n,2}W_{n,1i}\Xbm -\exp((W_{*,2}W_{*,1j}\Xbm)^{\top}\Xbm)W_{*,2}W_{*,1j}\Xbm\Big] \nonumber \\
    &-\sum_{j=1}^{L}\sum_{i\in\mathcal{V}_j}\exp(b_{n,i})\Big[\exp((W_{n,2}W_{n,1i}\Xbm)^{\top}\Xbm)-\exp((W_{*,2}W_{*,1j}\Xbm)^{\top}\Xbm)\Big]f_{G_n}(\Xbm) \nonumber \\
    &+\sum_{j=1}^{L}\Big(\sum_{i\in\mathcal{V}_j}\exp(b_{n,i})-\exp(b_{*,j})\Big)\exp((W_{*,2}W_{*,1j}\Xbm)^{\top}\Xbm)\Big[W_{*,2}W_{*,1j}\Xbm-f_{G_n}(\Xbm)\Big] \nonumber \\
    &:=\tilde{A}_n(\Xbm)-\tilde{B}_n(\Xbm)+ \tilde{C}_n(\Xbm). \label{eq:main_Equation_linear}
\end{align}
\paragraph{Decomposition of the function $\tilde{A}_n(\Xbm)$.} To streamline the argument, we define the following functions: $\tilde{E}(\Xbm; W_{2}W_{1}) : = \exp((W_{2}W_{1}\Xbm)^{\top}\Xbm)$ and $\tilde{H}(\Xbm;W_{2}W_{1}) = W_{2}W_{1}\Xbm$. The product of the functions $\tilde{E}$ and $\tilde{H}$ is defined as $\tilde{F}(\Xbm;W_{2}W_{1})= \tilde{E}(\Xbm; W_{2}W_{1}) \tilde{H}(\Xbm;W_{2}W_{1})$. To account for the differences in the number of components within each Voronoi cells, we further decompose the function $A_n(\Xbm)$ as follows:
\begin{align*}
    \tilde{A}_n(\Xbm)&=\sum_{j:|\mathcal{V}_j|=1}\sum_{i\in\mathcal{V}_j}\exp(b_{n,i})\Big[\tilde{F}(\Xbm;W_{n,2}W_{n,1i})-\tilde{F}(\Xbm;W_{*,2}W_{*,1j})\Big]\\
    & \hspace{3 em} + \sum_{j:|\mathcal{V}_j|>1}\sum_{i\in\mathcal{V}_j}\exp(b_{n,i})\Big[\tilde{F}(\Xbm;W_{n,2}W_{n,1i})-\tilde{F}(\Xbm;W_{*,2}W_{*,1j})\Big]\\
    &:= \tilde{A}_{n,1}(\Xbm) + \tilde{A}_{n,2}(\Xbm)
\end{align*}
An application of the first-order Taylor expansion to the functions $\tilde{E}$ and $\tilde{H}$ leads to:
\begin{align*}
    \tilde{E}(\Xbm;W_{n,2}W_{n,1i}) & = \tilde{E}(\Xbm;W_{*,2}W_{*,1j}) + \sum_{|\alpha|=1} (\Delta W_{n,2}W_{n,1ij})^{\alpha} \dfrac{\partial^{|\alpha|}\tilde{E}}{\partial{(W_{2}W_{1})^{\alpha}}}(\Xbm;W_{*,2}W_{*,1j}) + \tilde{R}_{ij,1}(\Xbm), \\
    \tilde{H}(\Xbm;W_{n,2}W_{n,1i}) & = \tilde{H}(\Xbm;W_{*,2}W_{*,1j}) + \sum_{|\alpha|=1} (\Delta W_{n,2}W_{n,1ij})^{\alpha} \dfrac{\partial^{|\alpha|}\tilde{H}}{\partial{(W_{2}W_{1})^{\alpha}}}(\Xbm;W_{*,2}W_{*,1j}) + \tilde{R}_{ij,2}(\Xbm),
\end{align*}
Here, the indices $i, j$ in these equations satisfy that $|\mathcal{V}_{j}| = 1$, i.e., Voronoi cells with exactly one element and $i \in \mathcal{V}_{j}$. Furthermore, the functions $\tilde{R}_{ij,1}(\Xbm)$ and $\tilde{R}_{ij, 2}(\Xbm)$ in these equations correspond to Taylor remainders when expanding the functions $\tilde{E}$ and $\tilde{H}$. Putting the above results together leads to:
\begin{align*}
    \tilde{A}_{n,1}(\Xbm) &= \sum_{j:|\mathcal{V}_j|=1}\sum_{i\in\mathcal{V}_j} \dfrac{\exp(b_{n,i})}{\alpha!} \sum_{|\alpha|=1} \biggr\{(\Delta W_{n,2}W_{n,1ij})^{\alpha}\dfrac{\partial^{|\alpha|}\tilde{E}}{\partial {(W_{2}W_{1})^{\alpha}}}(\Xbm;W_{*,2}W_{*,1j}) \tilde{H}(\Xbm;W_{*,2}W_{*,1j}) \\
    & + (\Delta W_{n,2}W_{n,1ij})^{\alpha} \dfrac{\partial^{|\alpha|}\tilde{H}}{\partial {(W_{2}W_{1})^{\alpha}}}(\Xbm;W_{*,2}W_{*,1j}) \tilde{E}(\Xbm;W_{*,2}W_{*,1j})\biggr\} + \hat{R}_{n,1}(\Xbm)\\
    &=\sum_{j:|\mathcal{V}_j|=1}\sum_{|\alpha|=1} \biggr\{ \tilde{M}_{n,j,\alpha}\dfrac{\partial^{|\alpha|}\tilde{E}}{\partial{(W_{2}W_{1})^{\alpha}}}(\Xbm;W_{*,2}W_{*,1j}) \tilde{H}(\Xbm;W_{*,2}W_{*,1j}) \\
    & + \tilde{M}_{n,j,\alpha}\dfrac{\partial^{|\alpha|}\tilde{H}}{\partial {(W_{2}W_{1})^{\alpha}}}(\Xbm;W_{*,2}W_{*,1j}) \tilde{E}(\Xbm;W_{*,2}W_{*,1j})\biggr\} + \hat{R}_{n,1}(\Xbm)
\end{align*}
where $\alpha = (\alpha_{1}, \alpha_{2})$. Furthermore, due to the uniform smoothness of the functions $\tilde{E}$ and $\tilde{H}$, the remainder term $\hat{R}_{n,1}(\Xbm)$ from the preceding expansion satisfies $\hat{R}_{n,1}(\Xbm)/\mathcal{D}_{2n} \to 0$ when $n$ approaches infinity. Finally, for any $|\alpha| = 1$, the terms $\tilde{M}_{n,j,\alpha}$ in the same expansion are given by:
\begin{align*}
\tilde{M}_{n,j,\alpha}=\sum_{i\in\mathcal{V}_j} \dfrac{\exp(b_{n,i})}{\alpha!} (\Delta W_{n,2}W_{n,1ij})^{\alpha}.
\end{align*}

\noindent We now turn to the decomposition of $\tilde{A}_{n,2}(\Xbm)$. Unlike the analysis of $\tilde{A}_{n,1}(\Xbm)$, which involved a first-order Taylor expansion, to account for more than one element in the Voronoi cells in $\tilde{A}_{n,2}(\Xbm)$, we resort to the second-order Taylor expansions to the functions $\tilde{E}$ and $\tilde{H}$. Specifically, applying a second-order Taylor expansion yields:
\begin{align*}
\tilde{A}_{n,2}(\Xbm) & = \sum_{j:|\mathcal{V}_j|>1}\sum_{1\leq |\alpha|\leq 2} \biggr\{\tilde{M}_{n,j,\alpha}\dfrac{\partial^{|\alpha|}\tilde{E}}{\partial {(W_{2}W_{1})^{\alpha}}}(\Xbm;W_{*,2}W_{*,1j}) \tilde{H}(\Xbm;W_{*,2}W_{*,1j}) \\
& + \tilde{M}_{n,j,\alpha}\dfrac{\partial^{|\alpha|} \tilde{H}}{\partial {(W_{2}W_{1})^{\alpha}}}(\Xbm;W_{*,2}W_{*,1j}) \tilde{E}(\Xbm;W_{*,2}W_{*,1j}) \biggr\} \\
& + \sum_{|\alpha| = 1, |\beta| = 1} \tilde{M}_{n,j,\alpha, \beta} \dfrac{\partial^{|\alpha|} \tilde{E}}{\partial {(W_{2}W_{1})^{\alpha}}}(\Xbm;W_{*,2}W_{*,1j}) \dfrac{\partial^{|\beta|} \tilde{H}}{\partial{(W_{2}W_{1})^{\beta}}}(\Xbm;W_{*,2}W_{*,1j})  + \hat{R}_{n,2}(\Xbm)
\end{align*}
where $\alpha = (\alpha_{1}, \alpha_{2})$, $\beta = (\beta_{1}, \beta_{2})$. Due to the uniform smoothness of the functions $\tilde{E}$ and $\tilde{H}$, the term $\hat{R}_{n,2}(\Xbm)$, representing the combined Taylor remainders from this second-order expansion, satisfies $\hat{R}_{n,2}(\Xbm)/\mathcal{D}_{2n} \to 0$ as $n$ goes to infinity. The coefficients $\tilde{M}_{n,j,\alpha}$ and $\tilde{M}_{n,j,\alpha,\beta}$ in the expansion above are defined as follows:
\begin{align*}   \tilde{M}_{n,j,\alpha}=\sum_{i\in\mathcal{V}_j} \dfrac{\exp(b_{n,i})}{\alpha!}(\Delta W_{n,2}W_{n,1ij})^{\alpha}, 
\end{align*}
for any coefficient $\alpha = (\alpha_{1}, \alpha_{2})$ such that $|\alpha| = 2$ and
\begin{align*}
    \tilde{M}_{n,j,\alpha,\beta} = \sum_{i\in\mathcal{V}_j} \dfrac{\exp(b_{n,i})}{\alpha! \beta!} (\Delta W_{n,2}W_{n,1ij})^{\alpha+\beta},  
\end{align*}
for any coefficients $\alpha = (\alpha_{1}, \alpha_{2})$ and $\beta = (\beta_{1}, \beta_{2})$ such that $|\alpha| = |\beta| = 1$. Given the definitions of $\tilde{E}(\Xbm; W_{1},W_{2})$ and $\tilde{H}(\Xbm;W_{1},W_{2})$, their partial derivatives are computed as follows: 
\begin{align*}
    \dfrac{\partial \tilde{E}}{\partial {(W_{2}W_{1})^{(u_1v_1)}}}(\Xbm;W_{2}W_{1}) & = \Xbm^{(u_1)}\Xbm^{(v_1)}\exp((W_{2}W_{1}\Xbm)^{\top}\Xbm)  \\
     \dfrac{\partial^{2} \tilde{E}}{\partial {(W_{2}W_{1})^{(u_1v_1)}}\partial {(W_{2}W_{1})^{(u_2v_2)}}}(\Xbm;W_{2}W_{1}) & = \Xbm^{(u_1)}\Xbm^{(v_1)}\Xbm^{(u_2)}\Xbm^{(v_2)}\exp((W_{2}W_{1}\Xbm)^{\top}\Xbm),\\
     \dfrac{\partial \tilde{H}}{\partial {(W_{2}W_{1})^{(u_1v_1)}}}(\Xbm;W_{2}W_{1}) & = \Xbm^{(v_1)}e_{u_1}, \\
    \dfrac{\partial^2 \tilde{H}}{\partial {(W_{2}W_{1})^{(u_1v_1)}}\partial {(W_{2}W_{1})^{(u_2v_2)}}}(\Xbm;W_{2}W_{1}) & =\zerod.
\end{align*}
Putting the above formulations together, the functions $\tilde{A}_{n, 1}(\Xbm)$ and $\tilde{A}_{n,2}(\Xbm)$ can be rewritten as follows:
\begin{align*}
& \tilde{A}_{n, 1}(\Xbm) = \sum_{j:|\mathcal{V}_{j}| = 1} \exp((W_{*,2}W_{*,1j}\Xbm)^{\top}\Xbm) \Big[\sum_{u_1,v_1=1}^{d}M_{n,j,e_{u_1v_1}}\Xbm^{(u_1)}\Xbm^{(v_1)} W_{*,2}W_{*,1j}\Xbm \\
& \hspace{8 cm}+\sum_{u_1,v_1=1}^{d}M_{n,j,e_{u_1v_1}}\Xbm^{(v_1)}e_{u_1}\Big] + \hat{R}_{n,1}(\Xbm), \\
& \tilde{A}_{n, 2}(\Xbm) = \sum_{j:|\mathcal{V}_{j}| > 1} \exp((W_{*,2}W_{*,1j}\Xbm)^{\top}\Xbm) \Big[\sum_{u_1,v_1=1}^{d}M_{n,j,e_{u_1v_1}}\Xbm^{(u_1)}\Xbm^{(v_1)} W_{*,2}W_{*,1j}\Xbm \\
& +\sum_{u_1,v_1=1}^{d}M_{n,j,e_{u_1v_1}}\Xbm^{(v_1)}e_{u_1} + \sum_{u_1,v_1=1}^{d}\sum_{u_2,v_2=1}^{d}M_{n,j,e_{u_1v_1}+e_{u_2v_2}}\Xbm^{(u_1)}\Xbm^{(v_1)}\Xbm^{(u_2)}\Xbm^{(v_2)}W_{*,2}W_{*,1j}\Xbm\\
&\hspace{2cm}+\sum_{u_1,v_1=1}^{d}\sum_{u_2,v_2=1}^{d}M_{n,j,e_{u_1v_1},e_{u_2v_2}}\Xbm^{(u_1)}\Xbm^{(v_1)}\Xbm^{(v_2)}e_{u_2}\Big] + \hat{R}_{n,2}(\Xbm)
\end{align*}
Here, $e_{u}$ denotes the standard basis vector in $\mathbb{R}^d$ with 1 in the $u$-th position and 0 in all other positions for any $1 \leq u \leq d$, while $e_{uv}$ denotes the matrix in $\mathbb{R}^{d\times d}$ whose $uv$-th entry is 1 and all other entries are 0.  
\paragraph{Decomposition of the function $\tilde{B}_n(\Xbm)$.}  Similar to the decomposition of $\tilde{A}_{n}(\Xbm)$, we also separate the Voronoi cells with exactly one element and with more than one element in the decomposition of the function $\tilde{B}_{n}(\Xbm)$. Therefore, we obtain
\begin{align*}
    \tilde{B}_n(\Xbm) &=\sum_{j:|\mathcal{V}_j|=1}\sum_{i\in\mathcal{V}_j}\exp(b_{n,i})\Big[\tilde{E}(\Xbm;W_{n,2}W_{n,1i})-\tilde{E}(\Xbm;W_{*,2}W_{*,1j})\Big]f_{G_n}(\Xbm) \\
    &  +\sum_{j:|\mathcal{V}_j|>1}\sum_{i\in\mathcal{V}_j}\exp(b_{n,i})\Big[\tilde{E}(\Xbm;W_{n,2}W_{n,1i})-\tilde{E}(\Xbm;W_{*,2}W_{*,1j})\Big]f_{G_n}(\Xbm) \\
    &:= \tilde{B}_{n,1}(\Xbm) + \tilde{B}_{n,2}(\Xbm).
\end{align*}
For terms corresponding to Voronoi cells with exactly one component, we use a first-order Taylor expansion. For cells with more than one component, we use a second-order Taylor expansion. This strategy yields the following representations:
\begin{align*}
    \tilde{B}_{n,1}(\Xbm)&= \sum_{j:|\mathcal{V}_j|=1}\sum_{|\alpha|=1} \tilde{M}_{n,j,\alpha} \dfrac{\partial^{|\alpha|}\tilde{E}}{\partial {(W_{2}W_{1})^{\alpha}}}(\Xbm;W_{*,2}W_{*,1j})f_{G_n}(\Xbm)+ \tilde{R}_{n,3}(\Xbm),
    \\
     \tilde{B}_{n,2}(\Xbm)&=\sum_{j:|\mathcal{V}_j|=1}\sum_{1 \leq |\alpha|\leq 2} \tilde{M}_{n,j,\alpha} \dfrac{\partial^{|\alpha|}\tilde{E}}{\partial {(W_{2}W_{1})^{\alpha}}}(\Xbm;W_{*,2}W_{*,1j})f_{G_n}(\Xbm)+ \tilde{R}_{n,4}(\Xbm).
\end{align*}
In these expressions, the functions $\tilde{R}_{n,3}(\Xbm)$ and $ \tilde{R}_{n,4}(\Xbm)$ correspond to the Taylor remainders. Due to the uniform smoothness of the function $\tilde{E}$, we obtain $\tilde{R}_{n,3}(\Xbm)/\mathcal{D}_{2n} \to 0$ and  $\tilde{R}_{n,4}(\Xbm)/\mathcal{D}_{2n} \to 0$ as $n$ approaches infinity. By computing the closed-form expressions for the partial derivatives of the function $\tilde{E}$, both the functions $\tilde{B}_{n,1}(\Xbm)$ and $\tilde{B}_{n,2}(\Xbm)$ can be rewritten as follows:
\begin{align*}
    & \tilde{B}_{n,1}(\Xbm)  = \sum_{j:|\mathcal{V}_{j}| = 1} \exp((W_{*,2}W_{*,1j}\Xbm)^{\top}\Xbm) \Big[\sum_{u_1,v_1=1}^{d} \tilde{M}_{n,j,e_{u_1v_1}}\Xbm^{(u_1)}\Xbm^{(v_1)}\Big]f_{G_n}(\Xbm) + \tilde{R}_{n,3}(\Xbm), \\
    & \tilde{B}_{n,2}(\Xbm)  = \sum_{j:|\mathcal{V}_{j}| > 1} \exp((W_{*,2}W_{*,1j}\Xbm)^{\top}\Xbm) \Big[\sum_{u_1,v_1=1}^{d} \tilde{M}_{n,j,e_{u_1v_1}}\Xbm^{(u_1)}\Xbm^{(v_1)} \\
    &\hspace{4cm}+\sum_{u_1,v_1=1}^{d}\sum_{u_2,v_2=1}^{d} \tilde{M}_{n,j,e_{u_1v_1}}\Xbm^{(u_1)}\Xbm^{(v_1)}\Xbm^{(u_2)}\Xbm^{(v_2)}\Big]f_{G_n}(\Xbm) + \tilde{R}_{n,4}(\Xbm),
\end{align*}
Plugging all of these results together, the function $Q_n(\Xbm)$ can be rewritten as follows:
\begin{align}
    Q_n(\Xbm)&= \sum_{j:|\mathcal{V}_{j}| = 1} \exp((W_{*,2}W_{*,1j}\Xbm)^{\top}\Xbm) \Big[\sum_{u_1,v_1=1}^{d} \tilde{M}_{n,j,e_{u_1v_1}}\Xbm^{(u_1)}\Xbm^{(v_1)} W_{*,2}W_{*,1j}\Xbm+\sum_{u_1,v_1=1}^{d} \tilde{M}_{n,j,e_{u_1v_1}}\Xbm^{(v_1)}e_{u_1}\Big] \nonumber \\
    &  + \sum_{j:|\mathcal{V}_{j}| > 1} \exp((W_{*,2}W_{*,1j}\Xbm)^{\top}\Xbm) \Big[\sum_{u_1,v_1=1}^{d} \tilde{M}_{n,j,e_{u_1v_1}}\Xbm^{(u_1)}\Xbm^{(v_1)} W_{*,2}W_{*,1j}\Xbm+\sum_{u_1,v_1=1}^{d} \tilde{M}_{n,j,e_{u_1v_1}}\Xbm^{(v_1)}e_{u_1}\nonumber\\
&\hspace{2cm}+ \sum_{u_1,v_1=1}^{d}\sum_{u_2,v_2=1}^{d} \tilde{M}_{n,j,e_{u_1v_1}+e_{u_2v_2}}\Xbm^{(u_1)}\Xbm^{(v_1)}\Xbm^{(u_2)}\Xbm^{(v_2)}W_{*,2}W_{*,1j}\Xbm\nonumber\\
&\hspace{2cm}+\sum_{u_1,v_1=1}^{d}\sum_{u_2,v_2=1}^{d} \tilde{M}_{n,j,e_{u_1v_1},e_{u_2v_2}}\Xbm^{(u_1)}\Xbm^{(v_1)}\Xbm^{(v_2)}e_{u_2}\Big] \nonumber \\
    &  - \sum_{j:|\mathcal{V}_{j}| = 1} \exp((W_{*,2}W_{*,1j}\Xbm)^{\top}\Xbm) \Big[\sum_{u_1,v_1=1}^{d} \tilde{M}_{n,j,e_{u_1v_1}}\Xbm^{(u_1)}\Xbm^{(v_1)}\Big]f_{G_n}(\Xbm) \nonumber \\
    &  - \sum_{j:|\mathcal{V}_{j}| > 1} \exp((W_{*,2j}W_{*,1j}\Xbm)^{\top}\Xbm) \Big[\sum_{u_1,v_1=1}^{d} \tilde{M}_{n,j,e_{u_1v_1}}\Xbm^{(u_1)}\Xbm^{(v_1)}\nonumber \\
    &\hspace{4cm}+\sum_{u_1,v_1=1}^{d}\sum_{u_2,v_2=1}^{d} \tilde{M}_{n,j,e_{u_1v_1}}\Xbm^{(u_1)}\Xbm^{(v_1)}\Xbm^{(u_2)}\Xbm^{(v_2)}\Big]f_{G_n}(\Xbm)\nonumber \\
    &  - \sum_{j = 1}^{L} \tilde{N}_{n,j} \exp((W_{*,2}W_{*,1j}\Xbm)^{\top}\Xbm) f_{G_{n}}(\Xbm) \nonumber \\
    &  + \sum_{j = 1}^{L} \tilde{N}_{n,j} \exp((W_{*,2}W_{*,1j}\Xbm)^{\top}\Xbm) W_{*,2}W_{*,1j}\Xbm   \nonumber \\
    &  + \hat{R}_{n,1}(\Xbm) + \hat{R}_{n,2}(\Xbm) - \tilde{R}_{n,3}(\Xbm) - \tilde{R}_{n,4}(\Xbm) 
 \label{eq:main_Equation_expression_linear}
\end{align}   
where $\tilde{N}_{n,j}:=\sum_{i\in\mathcal{V}_j}\exp(b_{n,i})-\exp(b_{*,j})$ for any $j \in [L]$. 

\paragraph{Step 2 - Non-vanishing coefficients.} 
An important insight from Equation~(\ref{eq:main_Equation_expression_linear}) is that the ratio $Q_{n}(\Xbm)/ \mathcal{D}_{1n}$ can be expressed as a linear combination of the following independent functions:
\begin{align*}
& E(\Xbm;W_{*,2}W_{*,1j})\Xbm^{(u_1)}\Xbm^{(v_1)} W_{*,2}W_{*,1j}\Xbm,\quad E(\Xbm;W_{*,2}W_{*,1j})\Xbm^{(v_1)}e_{u_1}, \\
&E(\Xbm;W_{*,2}W_{*,1j})\Xbm^{(u_1)}\Xbm^{(v_1)}\Xbm^{(u_2)}\Xbm^{(v_2)}W_{*,2}W_{*,1j}\Xbm, \quad E(\Xbm;W_{*,2}W_{*,1j})\Xbm^{(u_1)}\Xbm^{(v_1)}\Xbm^{(v_2)}e_{u_2},\\
&E(\Xbm;W_{*,2}W_{*,1j})\Xbm^{(u_1)}\Xbm^{(v_1)}f_{G_n}(\Xbm), \quad E(\Xbm;W_{*,2}W_{*,1j})\Xbm^{(u_1)}\Xbm^{(v_1)}\Xbm^{(u_2)}\Xbm^{(v_2)}f_{G_n}(\Xbm),\\
&E(\Xbm;W_{*,2}W_{*,1j})f_{G_{n}}(\Xbm), \quad  E(\Xbm;W_{*,2}W_{*,1j})W_{*,2j}W_{*,1j}\Xbm,
 \end{align*}
for any $1 \leq j \leq L$, and for $1 \leq u, v,w \leq d$. 
 
\noindent We proceed to demonstrate that not all of the coefficients of these linearly independent functions approach 0 as $n$ goes to infinity. Assume, for the sake of contradiction, that all these coefficients approach 0 as $n$ goes to $\infty$. From the representation of the ratio $Q_{n}(\Xbm)/ \mathcal{D}_{2n}$ in terms of these linearly independent functions, it follows that the ratios $\tilde{M}_{n,j,\alpha}/\mathcal{D}_{2n}$, $\tilde{M}_{n,j,\alpha,\beta}/\mathcal{D}_{2n}$, and $\tilde{N}_{n,j}/\mathcal{D}_{2n}$ approach 0 as $n \to \infty$, for all $\alpha,\beta\in\mathbb{N}^{d\times d}$ such that $1\leq|\alpha|+|\beta|\leq 2$. 
 
\noindent By first considering the condition $\tilde{N}_{n,j}/\mathcal{D}_{2n} \to 0$, it follows that
\begin{align*}
     \frac{|\sum_{i\in\mathcal{V}_j}\exp(b_{n,i})-\exp(b_{*,j})|}{\mathcal{D}_{2n}}= \frac{|\tilde{N}_{n,j}|}{\mathcal{D}_{2n}}  \to 0,
\end{align*}
for any $1 \leq j \leq L$. By varying the index $j$ from 1 to $L$ in these limits and summing them, we find that
\begin{align}
\label{eq:key_limits_first_2_linear}
\frac{\sum_{j = 1}^{L} |\sum_{i\in\mathcal{V}_j}\exp(b_{n,i})-\exp(b_{*,j})|}{\mathcal{D}_{2n}} \to 0. 
\end{align}
Our strategy now is to consider the limits corresponding to Voronoi cells with exactly one element and more than one element separately. In particular, for Voronoi cells with exactly one element, namely, for indices $j \in [L]$ such that their corresponding Voronoi cells have one element, \ie $|\mathcal{V}_j | = 1$, as the ratio $\tilde{M}_{n,j,e_{uv}}/\mathcal{D}_{2n} \to 0$, we have
\begin{align*}
   \frac{\sum_{i \in \mathcal{V}_{j}} \exp(b_{n,i}) \|\Delta W_{n,2}W_{n,1ij}\|_1}{\mathcal{D}_{2n}}=\frac{\sum_{u,v = 1}^{d} |\tilde{M}_{n,j,e_{uv}}|}{\mathcal{D}_{2n}} \to 0. 
\end{align*}
Due to the equivalence between the $\ell_1$ norm and the $\ell_2$ norm, we deduce that
\begin{align}
    \label{eq:linear_loss_linear}
    \frac{\sum_{j: |\mathcal{V}_{j}| = 1} \sum_{i \in \mathcal{V}_{j}} \exp(b_{n,i})\|\Delta W_{n,2}W_{n,1ij}\|}{\mathcal{D}_{2n}} \to 0. 
\end{align}
We now move to the Voronoi cells with more than one element, namely, indices $j \in [L]$ such that $|\mathcal{V}_{j}| > 1$. Similarly, from the assumption that the ratio $\tilde{M}_{n,j,2e_{uv}}/ \mathcal{D}_{2n} \to 0$, we obtain
\begin{align}
    \label{eq:squared_loss_linear}
    \frac{\sum_{i \in \mathcal{V}_{j}} \exp(b_{n,i})\|\Delta W_{n,2}W_{n,1ij}\|^2}{\mathcal{D}_{2n}} =\frac{\sum_{u,v = 1}^{d} \tilde{M}_{n,j,2e_{uv}}}{\mathcal{D}_{2n}} \to 0. 
\end{align}
Combining all the results in Equations~\eqref{eq:key_limits_first_2_linear}, \eqref{eq:linear_loss_linear}, and \eqref{eq:squared_loss_linear}, we obtain
\begin{align*}
    \frac{\mathcal{D}_{2n}}{\mathcal{D}_{2n}} = 1 \to 0, \mathrm{as} \ n \to \infty,
\end{align*}
which is a contradiction. As a consequence, at least one of the coefficients of the linearly independent functions in the expression of the ratio $Q_{n}(\Xbm)/ \mathcal{D}_{2n}$ does not approach 0 as $n$ approaches infinity. 

\paragraph{Step 3 - Application of Fatou’s lemma.} The idea of this step is to divide all of the coefficients of the linearly independent terms in the expression for the ratio $Q_{n}(\Xbm)/ \mathcal{D}_{2n}$, namely, the terms  $\tilde{M}_{n,j,\alpha}/\mathcal{D}_{2n}$, $\tilde{M}_{n,j,\alpha,\beta}/\mathcal{D}_{2n}$, and $\tilde{N}_{n,j}/\mathcal{D}_{2n}$ for all $\alpha,\beta\in\mathbb{N}^{d\times d}$ such that $1\leq|\alpha|+|\beta|\leq 2$, by the maximum of their absolute values. In particular, we first denote $m_n$ as the maximum of the absolute values of those coefficients. As not all of these coefficients go to 0, it follows that $1/m_n$ does not go to infinity as $n \to \infty$.  

From the hypothesis, we have $\normf{f_{G_n}-f_{\widetilde{G}_*}}/\mathcal{D}_{2n} \to 0$ as $n \to \infty$. Since $1/m_n\not\to\infty$,
it follows that $\normf{f_{G_n}-f_{\widetilde{G}_*}}/(m_{n} \mathcal{D}_{2n}) \to 0.$ An application of Fatou’s lemma leads to
\begin{align*}
    0=\lim_{n \to \infty} \dfrac{\normf{f_{G_n}-f_{\widetilde{G}_*}}}{m_n\mathcal{D}_{2n}} \geq  \int \liminf_{n \to \infty} \dfrac{\left| f_{G_n}(\Xbm)-f_{\widetilde{G}_*}(\Xbm)\right|}{m_n\mathcal{D}_{2n}}d\mu(\Xbm) \geq 0.
\end{align*}
Combining these results, we obtain $\liminf_{n \to \infty} \dfrac{\left| f_{G_n}(\Xbm)-f_{\widetilde{G}_*}(\Xbm)\right|}{m_n\mathcal{D}_{2n}} = 0$ for almost surely $\Xbm$. To simplify the presentation, we define the following limits:
\begin{align*}
    \dfrac{\tilde{M}_{n,j,\alpha}}{m_{n}\mathcal{D}_{2n}} \to \lambda_{j,\alpha}, \quad \dfrac{\tilde{M}_{n,j,\alpha,\beta}}{m_n\mathcal{D}_{2n}} \to \xi_{j,\alpha,\beta}, \quad \dfrac{\tilde{N}_{n,j}}{m_n\mathcal{D}_{2n}}\to\tau_{j},
\end{align*}
for any $1 \leq j \leq L$ and $\alpha,\beta\in\mathbb{N}^{d\times d}$ such that $1\leq|\alpha|+|\beta|\leq 2$. By the definition of $m_{n}$, at least one coefficient in the set $\{\lambda_{j,\alpha},\xi_{j,\alpha,\beta},\tau_{j}:j\in[L], \alpha,\beta\in\mathbb{N}^{d\times d}: 1\leq|\alpha|+|\beta|\leq 2\}$ must be non-zero. Then, the equation $\liminf_{n \to \infty} \dfrac{\left| f_{G_n}(\Xbm)-f_{\widetilde{G}_*}(\Xbm)\right|}{m_n\mathcal{D}_{2n}} = 0$, or equivalently, $\liminf_{n \to \infty} \dfrac{\left| Q_n(\Xbm)\right|}{m_n\mathcal{D}_{2n}} = 0$ leads to
\begin{align}
    & \sum_{j:|\mathcal{V}_{j}| = 1} \exp((W_{*,2}W_{*,1j}\Xbm)^{\top}\Xbm) \Big[\sum_{u_1,v_1=1}^{d}\lambda_{j,e_{u_1v_1}}\Xbm^{(u_1)}\Xbm^{(v_1)} W_{*,2}W_{*,1j}\Xbm+\sum_{u_1,v_1=1}^{d}\lambda_{j,e_{u_1v_1}}\Xbm^{(v_1)}e_{u_1}\Big] \nonumber \\
    & + \sum_{j:|\mathcal{V}_{j}| > 1} \exp((W_{*,2}W_{*,1j}\Xbm)^{\top}\Xbm) \Big[\sum_{u_1,v_1=1}^{d}\lambda_{j,e_{u_1v_1}}\Xbm^{(u_1)}\Xbm^{(v_1)} W_{*,2}W_{*,1j}\Xbm+\sum_{u_1,v_1=1}^{d}\lambda_{j,e_{u_1v_1}}\Xbm^{(v_1)}e_{u_1}\nonumber\\
&\hspace{2cm}+ \sum_{u_1,v_1=1}^{d}\sum_{u_2,v_2=1}^{d}\lambda_{j,e_{u_1v_1}+e_{u_2v_2}}\Xbm^{(u_1)}\Xbm^{(v_1)}\Xbm^{(u_2)}\Xbm^{(v_2)}W_{*,2}W_{*,1j}\Xbm\nonumber\\
&\hspace{2cm}+\sum_{u_1,v_1=1}^{d}\sum_{u_2,v_2=1}^{d}\xi_{j,e_{u_1v_1},e_{u_2v_2}}\Xbm^{(u_1)}\Xbm^{(v_1)}\Xbm^{(v_2)}e_{u_2}\Big] \nonumber \\
    &  - \sum_{j:|\mathcal{V}_{j}| = 1} \exp((W_{*,2}W_{*,1j}\Xbm)^{\top}\Xbm) \Big[\sum_{u_1,v_1=1}^{d}\lambda_{j,e_{u_1v_1}}\Xbm^{(u_1)}\Xbm^{(v_1)}\Big]f_{\widetilde{G}_*}(\Xbm) \nonumber \\
    &  - \sum_{j:|\mathcal{V}_{j}| > 1} \exp((W_{*,2}W_{*,1j}\Xbm)^{\top}\Xbm) \Big[\sum_{u_1,v_1=1}^{d}\lambda_{j,e_{u_1v_1}}\Xbm^{(u_1)}\Xbm^{(v_1)}\nonumber \\
    &\hspace{4cm}+\sum_{u_1,v_1=1}^{d}\sum_{u_2,v_2=1}^{d}\xi_{j,e_{u_1v_1},e_{u_2v_2}}\Xbm^{(u_1)}\Xbm^{(v_1)}\Xbm^{(u_2)}\Xbm^{(v_2)}\Big]f_{\widetilde{G}_*}(\Xbm)\nonumber \\
    &  - \sum_{j = 1}^{L} \tau_{j} \exp((W_{*,2}W_{*,1j}\Xbm)^{\top}\Xbm) f_{G_{*}}(\Xbm) \nonumber \\
    &  + \sum_{j = 1}^{L} \tau_{j} \exp((W_{*,2}W_{*,1j}\Xbm)^{\top}\Xbm) W_{*,2}W_{*,1j}\Xbm   =0,
\end{align}   
for almost surely $\Xbm$. 
However, the new equation implies that all the coefficients $\{\lambda_{j,\alpha},\xi_{j,\alpha,\beta},\tau_{j}:j\in[L], \alpha,\beta\in\mathbb{N}^{d\times d}: 1\leq|\alpha|+|\beta|\leq 2\}$ are 0, which is a contradiction. 

\noindent As a consequence, we obtain $$\lim_{\varepsilon\to0} \inf_{G\in\mathcal{G}_{L'}(\Theta): \mathcal{D}_2(G,\widetilde{G}_*)\leq \varepsilon} \normf{f_{G}-f_{\widetilde{G}_*}}/\mathcal{D}_2(G,\widetilde{G}_*) >0,$$
which proves the conclusion of the local part of the inequality~\eqref{eq:param_rate_2}. 

\vspace{-0.3em}
\subsubsection{Global Part}
\vspace{-0.3em}

From the result of the local part of the inequality~\eqref{eq:param_rate_2}, we can find a positive constant $\varepsilon'$ such that the following inequality holds:
$$\inf_{G\in\mathcal{G}_{L'}(\Theta): \mathcal{D}_2(G,\widetilde{G}_*)\leq \varepsilon'} \normf{f_{G}-f_{\widetilde{G}_*}}/\mathcal{D}_2(G,\widetilde{G}_*) >0.$$
To obtain the conclusion of the theorem, we only need to demonstrate that
$$ \inf_{G\in\mathcal{G}_{L'}(\Theta): \mathcal{D}_2(G,\widetilde{G}_*)> \varepsilon'} \normf{f_{G}-f_{\widetilde{G}_*}}/\mathcal{D}_2(G,\widetilde{G}_*) >0.$$
We prove the claim by contradiction. Indeed, by assuming the claim does not hold implies that there exists a sequence of $G'_{n} := \sum_{j' = 1}^{L'} \exp(b_{n,j'}) \delta_{W_{n,1j'}W_{n,2}}$ in the set $\mathcal{G}_{L'}(\Theta)$ such that
$$\left\{\begin{matrix}
 \mathcal{D}_2(G'_n,\widetilde{G}_*) > \varepsilon'\\
 \normf{f_{G'_n}-f_{\widetilde{G}_*}}/\mathcal{D}_2(G'_n,\widetilde{G}_*) \to 0,
\end{matrix}\right.$$
as long as $n$ approaches infinity. This implies that $\normf{f_{G'_n}-f_{\widetilde{G}_*}} \to 0$  as $n$ goes to infinity.

From the hypothesis, the parameter space $\Theta$ is compact. Therefore, a subsequence of $G'_n$'s converges to some mixing measure $G'$ where $G'$ lies in the space $\mathcal{G}_{L'}(\Theta)$. From the hypothesis, we have $\mathcal{D}_2(G'_n,\widetilde{G}_*)>\varepsilon'$. By taking the limit of both sides as $n \to \infty$, we obtain $\mathcal{D}_2(G',G_*) \geq \varepsilon'$.

An application of the Fatou’s lemma leads to the following result:
$$0=\lim_{n \to \infty} \normf{f_{G'_n}-f_{\widetilde{G}_*}} \geq  \int \liminf_{n \to \infty} \left\| f_{G'_n}(\Xbm)-f_{\widetilde{G}_*}(\Xbm)\right\|^2 d\mu(\Xbm).$$
This inequality is only possible if $f_{G'}=f_{\widetilde{G}_*}$ for almost surely $\Xbm$. 

According to the identifiability of the function $f_{G}(\Xbm)$, this equation only holds when $G'\equiv \widetilde{G}_*$. As a consequence, we obtain $\mathcal{D}_2(G',\widetilde{G}_*)=0$. This contradicts the deduction that $\mathcal{D}_1(G',\widetilde{G}_*) \geq \varepsilon'>0$. Hence, the proof of the global part is completed. This achieves the conclusion of the theorem.
\paragraph{Proof for the identifiability property.} The key claim that we aim to show is that if the equation $f_{G}(\Xbm) = f_{\widetilde{G}_*}(\Xbm)$ for almost every $\Xbm$, then $G \equiv  \widetilde{G}_*$, namely, the two mixing measures are identical.

\noindent From the hypothesis, as $f_{G}(\Xbm) = f_{G_*}(\Xbm)$ for almost all $\Xbm$, we have
\begin{align}
    & \sum_{j=1}^{N} \frac{\exp(\Xbm^{\top} A^0_j\Xbm+a^0_j))}{\tilde{D}_{f,G}(\Xbm)}h(\Xbm,\eta^0_j) + \sum_{j' = 1}^{\tilde{L}} \frac{\exp((BW_{2}W_{1j'}\Xbm)^{\top}\Xbm+b_{j'})}{\tilde{D}_{f,G}(\Xbm)} CW_{2}W_{1j'}\Xbm  \nonumber \\
& = \sum_{j=1}^{N} \frac{\exp(\Xbm^{\top} A^0_j\Xbm+a^0_j))}{\tilde{D}_{f,\widetilde{G}_*}(\Xbm)} h(\Xbm,\eta^0_j)  + \sum_{j' = 1}^{L} \frac{\exp((BW_{*,2}W_{*,1j'}\Xbm)^{\top}\Xbm+b_{*,j'})}{\tilde{D}_{f,\widetilde{G}_{*}}(\Xbm)} CW_{*,2}W_{*,1j'}\Xbm,
\label{eq:identify_setting_first_neuralnet_linear}
\end{align}
where $G = \sum_{j = 1}^{\tilde{L}} \exp(b_{j}) \delta_{W_{2}W_{1j}}$. Furthermore, we define
\begin{align*}
\tilde{D}_{f, \widetilde{G}_{*}}(\Xbm) & = \sum_{k = 1}^{N}\exp(\Xbm^{\top}A^0_{k}\Xbm+a^0_{k})+\sum_{j'=1}^{L}\exp((BW_{*,2}W_{*,1j'}\Xbm)^{\top}\Xbm+b_{*,j'}), \\
\tilde{D}_{f,G}(\Xbm) & = \sum_{k = 1}^{N}\exp(\Xbm^{\top}A^0_{k}\Xbm+a^0_{k})+\sum_{j'=1}^{\tilde{L}}\exp((BW_{2}W_{1j'}\Xbm)^{\top}\Xbm+b_{j'}).
\end{align*}
That equation implies that the number of atoms of $G$ and $\widetilde{G}_{*}$ must be identical, namely, we have $L = \tilde{L}$. Therefore, the following result holds
\begin{align*}  \Big\{\frac{\exp((BW_{2}W_{1j'}\Xbm)^{\top}\Xbm+b_{j'})}{\tilde{D}_{f,G}(\Xbm)}:j' \in [L]\Big\} &  =\Big\{\frac{\exp((BW_{*,2}W_{*,1j'}\Xbm)^{\top}\Xbm+b_{*,j'})}{\tilde{D}_{f,\widetilde{G}_{*}}(\Xbm)}:j' \in [L]\Big\},
\end{align*}
for almost surely $\Xbm$. By relabeling the indices of these two sets, we can assume without loss of generality that 
\begin{align*} \frac{\exp((BW_{2}W_{1j'}\Xbm)^{\top}\Xbm+b_{j'})}{\tilde{D}_{f,G}(\Xbm)} &  =\frac{\exp((BW_{*,2}W_{*,1j'}\Xbm)^{\top}\Xbm+b_{*,j'})}{\tilde{D}_{f,\widetilde{G}_{*}}(\Xbm)},
\end{align*}
for any index $j' \in [L]$ and for almost surely $\Xbm$.
From the invariance to translation property of the softmax function, the Equation~(\ref{eq:identify_setting_first_neuralnet_linear}) becomes
\begin{align}
     \sum_{j = 1}^{L}\exp{(b_{j})}\exp((BW_{2}W_{1j}\Xbm)^{\top}\Xbm)CW_{2}W_{1j}\Xbm & \nonumber \\
     & \hspace{- 10 em} = \sum_{j' = 1}^{L}\exp{(b_{*,j})}\exp((BW_{*,2}W_{*,1j'}\Xbm)^{\top}\Xbm)CW_{*,2}W_{*,1j'}\Xbm,
    \label{eq:identify_setting_second_neuralnet_linear}
\end{align}
for almost surely $\Xbm$. This equality suggests that there exists a partition $K_{1},K_{2}, \ldots, K_{m}$ of the set $[L]$ for some $m$ such that we have $\exp(b_{j_{1}}) = \exp(b_{*,j_{2}})$ for any $j_{1}, j_{2} \in K_{i}$ and for any $i \in [m]$. According to this result, Equation~(\ref{eq:identify_setting_second_neuralnet_linear}) can be rewritten as follows: 
\begin{align*}
    \sum_{i = 1}^{m}\sum_{j_{1} \in K_i}\exp{(b_{j_{1}})}\exp{((BW_{2}W_{1j_{1}}\Xbm )^{\top}\Xbm)}CW_{2}W_{1j_{1}}\Xbm & \nonumber \\
& \hspace{-20 em} = \sum_{i = 1}^{m}\sum_{j_{2} \in K_i}\exp{(b_{*,j_{2}})}\exp{((BW_{*,2} W_{*,1j_{2}}\Xbm)^{\top}\Xbm)}CW_{*,2} W_{*,1j_{2}}\Xbm,
\end{align*}
for almost surely $\Xbm$. This equality implies that
\begin{align*}
    \{W_{2} W_{1j_{1}}\Xbm: j_{1} \in K_i\} = \{W_{*,2} W_{*,1j_{2}}\Xbm: j_{2} \in K_i\}, 
\end{align*}
for any $i \in [m]$. This result indicates that 
\begin{align*}
    \{W_{2}W_{1j_{1}}: j_{1} \in K_i\} = \{W_{*,2}W_{*,1j_{2}}: j_{2} \in K_i\}.
\end{align*}
As a consequence, we arrive at the following result:
\begin{align*}
    \sum_{i = 1}^{m}\sum_{j_{1} \in K_i}\exp{(b_{j_{1}})}\delta_{W_{2}W_{1j_{1}}} = \sum_{i = 1}^{m}\sum_{j_{2} \in K_i}\exp{(b_{*,j_{2}})}\delta_{W_{*,2}W_{*,1j_{2}}}.
\end{align*}
This is equivalent to $G \equiv \widetilde{G}_*$. Consequently, the identifiability property of $f_{G}$ is established.  
\end{proof}

\vspace{-0.5em}
\section{Related Work} \label{appendix:related_work}
\vspace{-0.5em}

\textbf{Mixture of Experts.} Building on classical mixture models with adaptive gating mechanisms \citep{Jacob_Jordan-1991, jordan1994hierarchical, xu1994alternative}, the MoE model has been extensively studied and refined over the years. Notably, subsequent works \citep{Eigen_learning_2014, Quoc-conf-2017} introduced the MoE layer as an efficient tool for scaling model capacity. Unlike traditional models that apply uniform parameters to all inputs, the MoE layer applies specific parameter subsets for each input, creating a sparsely activated layer that enables significant capacity growth without proportional increase in computational cost \citep{fedus2022switch, zhou2023brainformers}. This efficiency and scalability have driven its adoption across diverse domains and tasks \citep{riquelme2021scaling, du2022glam, shen2023scaling}.

\noindent \textbf{Theory of Mixture of Experts.} Although MoEs have been widely used to scale up large models, their theoretical foundations remain under active development. For example, \citet{ho2022convergence} focused on input-free gating Gaussian MoEs and showed that under maximum likelihood estimation, the experts’ convergence rates depend on the algebraic independence of the expert functions. Next, \citet{nguyen2023demystifying,nguyen2024temperature} established convergence rates for both density and parameter estimation in Softmax-gating Gaussian MoEs, linking these rates to the solvability of polynomial systems under Voronoi-based loss functions. More recently, \citet{nguyen2024least, nguyen2024sigmoid} employed least-squares estimation to identify conditions under which expert functions are identifiable. Under these conditions, the resulting estimation rates improve substantially.

\noindent \textbf{Parameter-Efficient Fine-Tuning.} Fine-tuning pre-trained foundational models \citep{dosovitskiy2020image, liu2021swin, kirillov2023segment} has become a widely adopted strategy for tackling downstream tasks \citep{xin2024parameter}. While effective, full fine-tuning is \emph{resource-intensive}, requiring updates to all network parameters and the storage of a separate fine-tuned model for each task \citep{han2024facing}. To address these challenges, researchers have increasingly focused on parameter-efficient fine-tuning (PEFT) methods. These methods can be categorized into \emph{partial tuning}, \emph{extra module}, and \emph{prompt tuning}. Partial tuning freezes most of the backbone, fine-tuning only a subset, such as linear heads or a few layers \citep{mahajan2018exploring, chen2021empirical, he2022masked}. Extra module methods add trainable parameters to the backbone, such as side structures \citep{zhang2020side}, residual MLP modules in Transformer layers \citep{houlsby2019parameter, cai2020tinytl}, or low-rank weight updates \citep{hu2021lora}. Despite their promise, partial tuning and extra module methods often face limitations that restrict their applicability. First, they may fail to achieve performance comparable to full fine-tuning \citep{mahajan2018exploring, chen2021empirical, jia2022visual, he2023parameter}. Second, some methods rely on specific architectural modifications \citep{rebuffi2017learning, cai2020tinytl, zhang2020side}, limiting their generalizability across different backbone architectures.

\noindent \textbf{Prompting Methods.} Prompt tuning \citep{lester2021power}, initially proposed for language tasks, offers a simpler yet effective alternative by introducing learnable parameters to the \emph{input sequence} of backbone models, updating only these parameters during fine-tuning \citep{lester2021power, jia2022visual, le2024revisiting, le2024adaptive}. Despite its simplicity, prompt tuning has demonstrated significant performance gains. Recently, visual prompt tuning has emerged as a promising paradigm in PEFT techniques for computer vision. Current advancements in visual prompt tuning focus on engineering improvements, such as minimizing parameter usage \citep{han20232vpt} and broadening applicability to diverse tasks \citep{yao2023visual, sohn2023visual, yao2024cpt}. However, the theoretical foundations of prompt-based methods remain underexplored. For instance, \citet{he2021towards} investigated the relationship between prompt tuning and adapter methods, while \citet{le2024mixture} analyzed these techniques within the context of MoE models. Furthermore, \citet{petrov2023prompting} highlighted the limitations of prompting, showing that it cannot alter relative attention patterns and instead biases attention layer outputs in a fixed direction.

{\noindent \textbf{Adaptive Prompting.} A growing body of work investigates adaptive prompting strategies for visual prompt tuning~\citep{zhou2022conditional}. Several approaches employ meta-learning or clustering techniques to tailor prompts to specific downstream datasets. In particular, \citet{huang2023diversity} and \citet{kim2023one} cluster the downstream data and assign cluster-specific prompts based on the ViT’s initial input representation, $\Xbm^{(0)}$. Similarly, recent prompt-pool methods~\citep{wang2022learning, kim2024open} first forward the input image through an encoder network to obtain a [CLS] representation, which is then used to select or generate prompts at every attention layer. In all these cases, the prompt experts effectively become functions of the first-layer input, $\Xbm^{(0)}$, rather than of the current layer input, $\Xbm^{(l)}$.

Our proposed method, VAPT, instead generates prompts at each layer conditioned on the layer-specific representation, $\Xbm^{(l)}$. This layer-wise adaptation follows directly from our theoretical framework, which establishes a connection between MoE and VPT and implies that both the experts and their gating functions should depend on $\Xbm^{(l)}$. This perspective enables a more principled and analytically grounded examination of model behavior. Importantly, the motivation for VAPT stems from the functional role of prompt experts and their distinction from pre-trained experts, rather than from a purely empirical desire to assign different prompts to different inputs.

Moreover, while some adaptive prompting approaches rely on heuristics or complex image-to-prompt transformations that make their effects difficult to interpret~\citep{huang2024cvpt, xiao2025visual}, VAPT adopts a simple yet effective formulation. This design choice leads to the clean expert definitions in Equations~\eqref{eq:vapt_experts} and~\eqref{eq:vapt_scores}, and supports a rigorous theoretical analysis without sacrificing empirical performance—an aspect largely absent in prior work.

Most recently, \citet{zeng2025mept} explicitly introduce MoE components into the Transformer architecture, including new routing modules for sparse prompt activation. In contrast, our analysis in Section~\ref{sec:motivation} shows that an implicit MoE structure is already embedded in existing visual prompt tuning frameworks. Consequently, VAPT does not require additional architectural components or routing mechanisms. Instead, it refines the formulation of prompt experts within this inherent implicit MoE structure. In essence, VAPT enhances the expressive capacity of the underlying prompt experts while leaving the backbone architecture unchanged.}

\vspace{-0.9em}
\section{Implementation Details} \label{appendix:implementation_details}
\vspace{-0.9em}

\begin{figure}[ht]
\begin{center}
\centerline{\includegraphics[width=0.725\linewidth]{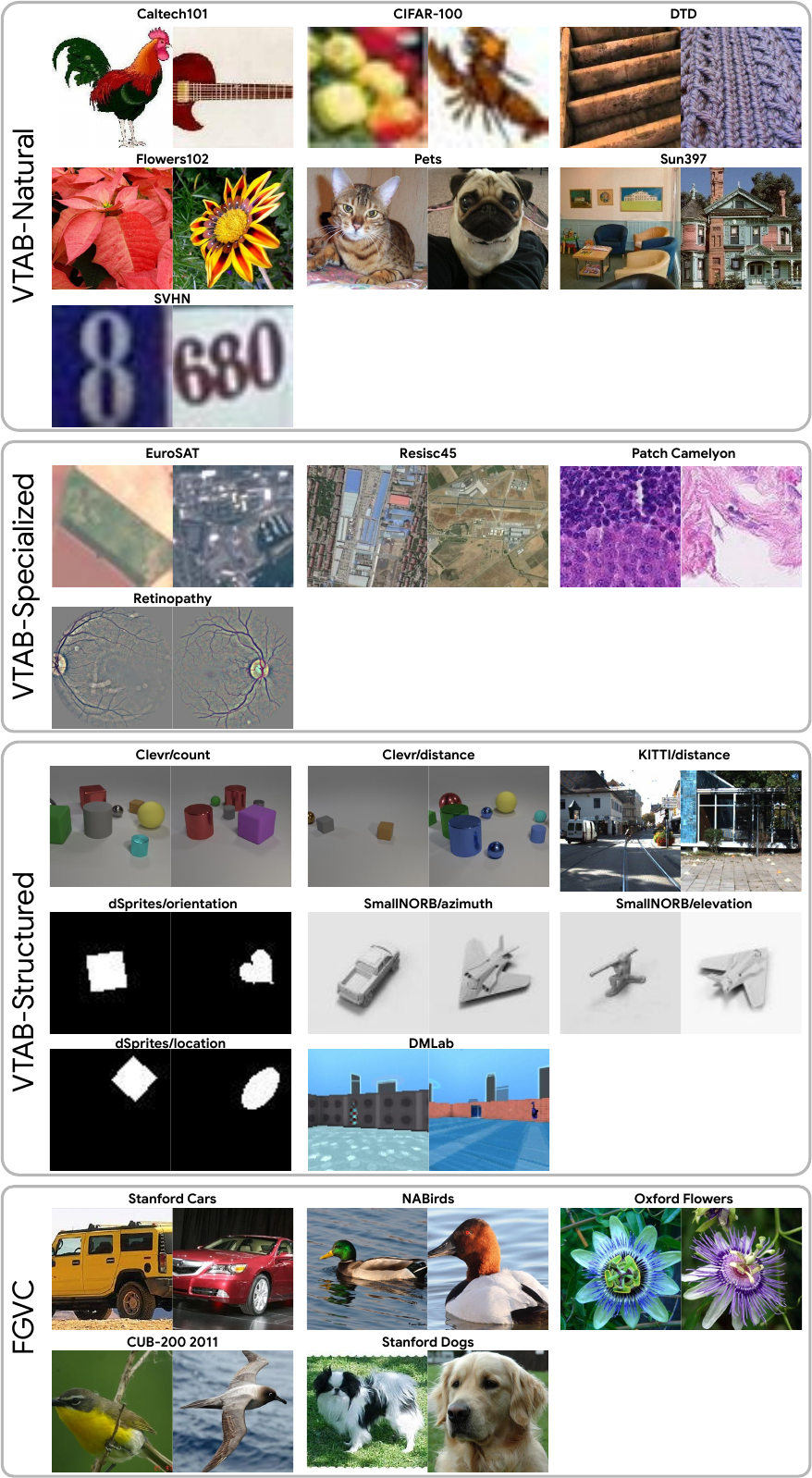}}
\caption{Dataset examples for all classification tasks evaluated}
\label{fig:datasets}
\end{center}
\vskip -0.3in
\end{figure}

\setlength{\tabcolsep}{4pt}
\begin{table}[t]
\caption{Specifications of the datasets used in our experiments, which include 24 datasets from two benchmarks: FGVC and VTAB-1K~\citep{zhai2019large}.}
\label{table:datasets}
\begin{center}
\resizebox{\textwidth}{!}{
\begin{tabular}{@{}llllll@{}}
\toprule
\textbf{Dataset} & \textbf{Description}                         & \textbf{\# Classes} & \textbf{Train}            & \textbf{Validation}         & \textbf{Test} \\ \midrule
Fine-grained visual recognition tasks (FGVC)    &       &                &                    &                 &          \\ \cmidrule(l){2-6} 
\quad CUB-200-2011 \citep{wah2011caltech}    & Fine-grained recognition of bird species        & 200                 & 5,394                     & 600                  & 5,794         \\
\quad NABirds \citep{van2015building}         & Fine-grained recognition of bird species         & 55                  & 21,536                    & 2,393                & 24,633        \\
\quad Oxford Flowers \citep{nilsback2008automated}  & Fine-grained recognition of flower species       & 102                 & 1,020                     & 1,020                & 6,149         \\
\quad Stanford Dogs \citep{khosla2011novel}   & Fine-grained recognition of dog species          & 120                 & 10,800                    & 1,200                & 8,580        \\
\quad Stanford Cars \citep{gebru2017fine}   & Fine-grained recognition of car                  & 196                 & 7,329                     & 815                  & 8,041         \\ \midrule
Visual Task Adaptation Benchmark (VTAB-1K)    &       &                &                    &                 &          \\ \cmidrule(l){2-6}                     
\quad CIFAR-100 \citep{Krizhevsky09learningmultiple}       & \multicolumn{1}{l}{\multirow{7}{*}{Natural}} & 100                 & \multirow{7}{*}{800/1000} & \multirow{7}{*}{200} & 10,000              \\
\quad Caltech101 \citep{fei2006one} & \multicolumn{1}{c}{}                         &  102          &                           &                      &  6,084            \\
\quad DTD \citep{cimpoi2014describing} & \multicolumn{1}{c}{}                         &  47          &                           &                      &  1,880            \\
\quad Flowers102 \citep{nilsback2008automated} & \multicolumn{1}{c}{}                         &  102          &                           &                      &  6,149            \\
\quad Pets \citep{parkhi2012cats} & \multicolumn{1}{c}{}                         &  37          &                           &                      &  3,669            \\
\quad SVHN \citep{netzer2011reading} & \multicolumn{1}{c}{}                         &  10          &                           &                      &  26,032            \\
\quad Sun397 \citep{xiao2010sun}          & \multicolumn{1}{c}{}                         & 397                 &                           &                      &   21,750           \\ \cmidrule(l){2-6} 
\quad Patch Camelyon \citep{veeling2018rotation} & \multirow{4}{*}{Specialized}                 &    2     & \multirow{4}{*}{800/1000}         & \multirow{4}{*}{200}    & 32,768              \\
\quad EuroSAT  \citep{helber2019eurosat}   &                 &    10    &                           &                      &  5,400    \\
\quad Resisc45 \citep{cheng2017remote}   &            &       45         &                           &                      &   6,300            \\
\quad Retinopathy \citep{graham2015kaggle} &             &   5       &                           &                      &   42,670      \\ \cmidrule(l){2-6} 
\quad Clevr/count \citep{johnson2017clevr} & \multirow{8}{*}{Structured}                  &   8        & \multirow{8}{*}{800/1000}         & \multirow{8}{*}{200}    & 15,000   \\
\quad Clevr/distance \citep{johnson2017clevr}   &       &    6      &         &              &  15,000             \\
\quad DMLab \citep{beattie2016deepmind}  &         &     6     &      &          &    22,735  \\
\quad KITTI/distance \citep{geiger2013vision}  &         &    4     &         &             &   711   \\
\quad dSprites/loc \citep{matthey2017dsprites}  &       &   16  &         &              &   73,728           \\
\quad dSprites/ori \citep{matthey2017dsprites} &       &    16    &        &        &  73,728     \\
\quad SmallNORB/azi \citep{lecun2004learning}  &           &   18     &                 &                      &    12,150  \\
\quad SmallNORB/ele \citep{lecun2004learning}  &           &   9    &       &               &   12,150   \\ \bottomrule
\end{tabular}
}
\end{center}
\vskip -0.25in
\end{table}
\setlength{\tabcolsep}{1.4pt}

\textbf{Dataset Specifications.} Table~\ref{table:datasets} provides a summary of the statistics and details of the classification datasets evaluated in this paper. Figure~\ref{fig:datasets} presents image examples from all 24 datasets. Following \citet{jia2022visual}, each FGVC dataset is randomly split into 90\% \texttt{train} and 10\% \texttt{val}, with the \texttt{val} set used for hyperparameter tuning. For VTAB-1K, we use an 800-200 split for tuning, and we train on all available data for the final evaluation.

\noindent \textbf{Pre-trained Backbone Specifications.} This study explores two primary groups of Vision Transformer (ViT) models. The first group comprises conventional ViT architectures \citep{dosovitskiy2020image}, including ViT-Base, ViT-Large, and ViT-Huge, pre-trained on the large-scale ImageNet-21K~\citep{deng2009imagenet} dataset. The second group focuses on self-supervised models, specifically Masked Autoencoders (MAE)~\citep{he2022masked} and Momentum Contrast v3 (MoCo v3)~\citep{chen2021empirical}, both pre-trained on ImageNet-1K. Unless otherwise specified, we use ViT-B/16 with supervised pre-training on ImageNet-21K dataset by default.

\noindent \textbf{Augmentation.} During training, we employ standard image augmentation techniques. For the five FGVC datasets, images are normalized using ImageNet mean and standard deviation, followed by random resized cropping to $224 \times 224$ and random horizontal flipping. For VTAB-1k, images are directly resized to $224 \times 224$.

\noindent \textbf{Hyperparameters.} We use the \texttt{val} set of each dataset to find the optimal prompt length $N_p$, kernel size $K$, and hidden dimension $r$ of the feature projector. Following \citet{jia2022visual}, the search range for $N_p$ is $\{1,5,10,50,100,200\}$. Given the relatively small resolution of our feature map $\Xbm_{\mathrm{img}}$ (\eg $14 \times 14$ for ViT), the kernel size $K$ is selected from $\{2, 3, 4\}$. 
For the hidden dimension $r$, we explore the range $\{4, 8, 16, 32, 64, 128, 256\}$. To identify the optimal learning rate and weight decay, we perform a grid search, consistent with \citet{mahajan2018exploring, jia2022visual}. The learning rate is searched within $\{50, 25, 10, 5, 2.5, 1, 0.5, 0.25, 0.1, 0.05\}$, while the weight decay is chosen from $\{0.01, 0.001, 0.0001, 0.0\}$. We adopt the batch size settings of \citet{jia2022visual}, using values of 64 and 128. The models are optimized with SGD for 100 epochs, employing a cosine decay learning rate schedule with 10 warm-up epochs.

\noindent \textbf{Reproducibility.} Our method, VAPT, is implemented in PyTorch \citep{paszke2019pytorch}. All experimental workflows, including training and evaluation, were conducted on NVIDIA A100-40GB GPUs. The entire implementation will be made openly available to ensure reproducibility and facilitate future research.

\vspace{-0.5em}
\section{Additional Experiments} \label{appendix:add_experiment}
\vspace{-0.5em}

\vspace{-0.3em}
\subsection{Per-task Results for VTAB-1K and FGVC} \label{appendix:per_task_results}
\vspace{-0.4em}

\begin{table*}[ht]
\caption{\textbf{VTAB-1K \textit{Natural} per-task results for ViT-B/16 supervised pre-trained on ImageNet-21K.} ``Number of Wins'' in [$\cdot$] denotes the number of wins relative to full fine-tuning. ``Tuned/Total'' represents the percentage of parameters tuned for each task. The highest accuracy among all approaches except Full is highlighted in \textbf{bold}.}
\label{table:vtab_specific_natural}
\begin{center}
\tabcolsep=0.10cm
\resizebox{\textwidth}{!}{
\begin{tabular}{l|ccccccc|c} 
\toprule
\multicolumn{1}{c|}{ViT-B/16~\citep{dosovitskiy2020image}}  &  \multicolumn{7}{c|}{VTAB-1K~\citep{zhai2019large} \textit{Natural} [7]} & \\
\multicolumn{1}{c|}{(85.8M)}   &   CIFAR-100 & Caltech101 & DTD & Flowers102 & Pets & SVHN & Sun397 & \multirow{-2}{*}{Mean}   \\ 
\midrule
Full~\citep{iofinova2022well} &  68.9 &  87.7 &  64.3 &  97.2 & 86.9 &  87.4 &  38.8 &  75.88   \\
Linear~\citep{iofinova2022well} &  63.4 & 85.0 & 63.2 & 97.0 & 86.3 & 36.6 & 51.0 & 68.93 [1]   \\
Partial-1~\citep{yosinski2014transferable} & 66.8 & 85.9 & 62.5 & 97.3 & 85.5 & 37.6 & 50.6 & 69.44 [2]   \\
MLP-2~\citep{chen2020improved} &  63.2 & 84.8 & 60.5 & 97.6 & 85.9 & 34.1 & 47.8 & 67.70 [2]  \\
MLP-3~\citep{chen2020improved} &  63.8 & 84.7 & 62.3 & 97.4 & 84.7 & 32.5 & 49.2 & 67.80 [2]  \\
MLP-5~\citep{chen2020improved} &  59.3 & 84.4 & 59.9 & 96.1 & 84.4 & 30.9 & 46.8 & 65.98 [1]  \\
MLP-9~\citep{chen2020improved} &  53.1 & 80.5 & 53.9 & 95.1 & 82.6 & 24.4 & 43.7 & 61.90 [1]  \\
\midrule
Sidetune~\citep{zhang2020side} &  60.7 & 60.8 & 53.6 & 95.5 & 66.7 & 34.9 & 35.3 & 58.21 [0]   \\
Bias~\citep{rebuffi2017learning} & 72.8 & 87.0 & 59.2 & 97.5 & 85.3 & 59.9 & 51.4 & 73.30 [3]  \\
Adapter-256~\citep{cai2020tinytl} &  74.1 & 86.1 & 63.2 & 97.7 & 87.0 & 34.6 & 50.8 & 70.50 [4]  \\
Adapter-64~\citep{cai2020tinytl} & 74.2 & 85.8 & 62.7 & 97.6 & 87.2 & 36.3 & 50.9 & 70.65 [4]  \\
Adapter-8~\citep{cai2020tinytl} & 74.2 & 85.7 & 62.7 & 97.8 & 87.2 & 36.4 & 50.7 & 70.67 [4]  \\
\midrule
VPT-Shallow~\citep{jia2022visual} &  77.7 & 86.9 & 62.6 & 97.5 & 87.3 & 74.5 & 51.2 & 76.81 [4]  \\
~~~~~ - Tuned / Total (\%) & 0.18 & 0.10 & 0.04 & 0.27 & 0.08 & 0.19 & 0.36 & 0.17 \\
VPT-Deep~\citep{jia2022visual} &  78.8 & 90.8 & 65.8 & 98.0 & 88.3 & 78.1 & 49.6 & 78.48 [6]  \\
~~~~~ - Tuned / Total (\%) & 0.20 & 0.20 & 0.15 & 0.10 & 0.04 & 0.54 & 0.41 & 0.23\\
E2VPT~\citep{han20232vpt} &  78.6 &  89.4  &  67.8  & 98.2  & 88.5  & 85.3  & 52.3  & 80.01  [6]\\
~~~~~ - Tuned / Total (\%) & 0.22 & 0.19 & 0.12 & 0.11 & 0.05 & 0.24 & 0.43 & 0.19 \\
\midrule
VAPT (Ours) &  \textbf{80.8} $\pm$ (0.15) &  \textbf{91.9} $\pm$ (0.46)   &  \textbf{69.7} $\pm$ (0.63)  & \textbf{98.8} $\pm$ (0.05)  & \textbf{89.2} $\pm$ (0.05) & \textbf{86.7} $\pm$ (0.42) & \textbf{52.9} $\pm$ (0.15) & \textbf{81.43} [6]\\
~~~~~ - Tuned / Total (\%) & 0.15 & 0.16 & 0.11 & 0.13 & 0.07 & 0.20 & 0.42 & 0.18 \\
\bottomrule
\end{tabular}
}
\end{center}
\end{table*}
\begin{table*}[ht]
\caption{\textbf{VTAB-1K \textit{Specialized} per-task results for ViT-B/16 supervised pre-trained on ImageNet-21K.} ``Number of Wins'' in [$\cdot$] denotes the number of wins relative to full fine-tuning. ``Tuned/Total'' represents the percentage of parameters tuned for each task. The highest accuracy among all approaches except Full is highlighted in \textbf{bold}.}
\label{table:vtab_specific_specialized}
\begin{center}
\begin{small}
\tabcolsep=0.10cm
\resizebox{\textwidth}{!}{
\begin{tabular}{l|cccc|c} 
\toprule
\multicolumn{1}{c|}{ViT-B/16~\citep{dosovitskiy2020image}}  &  \multicolumn{4}{c|}{VTAB-1K~\citep{zhai2019large} \textit{Specialized} [4]} & \\
\multicolumn{1}{c|}{(85.8M)}   &   Patch Camelyon & EuroSAT & Resisc45 & Retinopathy &  \multirow{-2}{*}{Mean}   \\ 
\midrule
Full~\citep{iofinova2022well} &  79.7 & 95.7 & 84.2 & 73.9 & 83.36   \\
Linear~\citep{iofinova2022well} &  78.5 & 87.5 & 68.6 & 74.0 & 77.16 [1]   \\
Partial-1~\citep{yosinski2014transferable} &  78.6 & 89.8 & 72.5 & 73.3 & 78.53 [0]   \\
MLP-2~\citep{chen2020improved} &  74.3 & 88.8 & 67.1 & 73.2 & 75.86 [0]  \\
MLP-3~\citep{chen2020improved} &  77.0 & 88.0 & 70.2 & 56.1 & 72.83 [0]  \\
MLP-5~\citep{chen2020improved} &  73.7 & 87.2 & 64.8 & 71.5 & 74.31 [0]  \\
MLP-9~\citep{chen2020improved} &  78.5 & 83.0 & 60.2 & 72.3 & 73.49 [0]  \\
\midrule
Sidetune~\citep{zhang2020side} &  58.5 & 87.7 & 65.2 & 61.0 & 68.12 [0]   \\
Bias~\citep{rebuffi2017learning} &   78.7 & 91.6 & 72.9 & 69.8  & 78.25 [0]  \\
Adapter-256~\citep{cai2020tinytl} & 76.3 & 88.0 & 73.1 & 70.5 & 76.98 [0]  \\
Adapter-64~\citep{cai2020tinytl} & 76.3 & 87.5 & 73.7 & 70.9 & 77.10 [0]  \\
Adapter-8~\citep{cai2020tinytl} &  76.9 & 89.2 & 73.5 & 71.6 & 77.80 [0]  \\
\midrule
VPT-Shallow~\citep{jia2022visual} &  78.2 & 92.0 & 75.6 & 72.9 & 79.66 [0]  \\
~~~~~ - Tuned / Total (\%) & 0.01 &  0.05 & 0.09 & 0.01 & 0.04 \\
VPT-Deep~\citep{jia2022visual} &  81.8 & 96.1 & 83.4 & 68.4 & 82.43 [2]  \\
~~~~~ - Tuned / Total (\%) & 1.06 & 1.07 & 0.15 & 0.02 & 0.57 \\
E2VPT~\citep{han20232vpt} &  82.5  & \textbf{96.8}   & 84.8  & 73.6  & 84.43  [3] \\
~~~~~ - Tuned / Total (\%) & 0.20  & 0.29 &  0.12 & 0.07 & 0.17 \\
\midrule
VAPT (Ours) &  \textbf{84.4} $\pm$ (0.72) & 96.5 $\pm$ (0.09) & \textbf{85.1} $\pm$ (0.46) & \textbf{74.5} $\pm$ (0.32)& \textbf{85.13} [4] \\
~~~~~ - Tuned / Total (\%) & 0.30  & 0.35 &  0.09 & 0.06 & 0.20 \\
\bottomrule
\end{tabular}
}
\end{small}
\end{center}
\end{table*}
\begin{table*}[ht]
\caption{\textbf{VTAB-1K \textit{Structured} per-task results for ViT-B/16 supervised pre-trained on ImageNet-21K.} ``Number of Wins'' in [$\cdot$] denotes the number of wins relative to full fine-tuning. ``Tuned/Total'' represents the percentage of parameters tuned for each task. The highest accuracy among all approaches except Full is highlighted in \textbf{bold}.}
\label{table:vtab_specific_structured}
\begin{center}
\begin{small}
\tabcolsep=0.10cm
\resizebox{\textwidth}{!}{
\begin{tabular}{l|cccccccc|c}
\toprule
\multicolumn{1}{c|}{ViT-B/16~\citep{dosovitskiy2020image}}  &  \multicolumn{8}{c|}{VTAB-1K~\citep{zhai2019large} \textit{Structured} [8]} & \\
      &   Clevr/ & Clevr/ &  & KITTI/ &  dSprites/ & dSprites/ & SmallNORB/ & SmallNORB/ &    \\ 
  \multicolumn{1}{c|}{\multirow{-2}{*}{(85.8M)}}  &   count & distance & \multirow{-2}{*}{DMLab} & distance &  location & orientation & azimuth & elevation & \multirow{-3}{*}{Mean}   \\ 
\midrule
Full~\citep{iofinova2022well} &  56.3 & 58.6 & 41.7 & 65.5 & 57.5 & 46.7 & 25.7 & 29.1 & 47.64   \\
Linear~\citep{iofinova2022well} &  34.3 & 30.6 & 33.2 & 55.4 & 12.5 & 20.0 &  9.6 & 19.2  & 26.84 [0]   \\
Partial-1~\citep{yosinski2014transferable} & 41.5 & 34.3 & 33.9 & 61.0 & 31.3 & 32.8 & 16.3 & 22.4 & 34.17 [0]   \\
MLP-2~\citep{chen2020improved} & 45.2 & 31.6 & 31.8 & 55.7 & 30.9 & 24.6 & 16.6 & 23.3  & 32.47 [0]  \\
MLP-3~\citep{chen2020improved} & 47.8 & 32.8 & 32.3 & 58.1 & 12.9 & 21.2 & 15.2 & 24.8 & 30.62 [0]  \\
MLP-5~\citep{chen2020improved} & 50.8 & 32.3 & 31.5 & 56.4 &  7.5 & 20.8 & 14.4 & 20.4 & 29.23 [0]  \\
MLP-9~\citep{chen2020improved} & 47.5 & 27.9 & 28.9 & 54.0  & 6.2 &  17.7 & 10.8 & 16.2  & 26.15 [0]  \\
\midrule
Sidetune~\citep{zhang2020side} & 27.6 & 22.6 & 31.3 & 51.7 &  8.2 & 14.4 &  9.8 & 21.8 & 23.41 [0]   \\
Bias~\citep{rebuffi2017learning} &  61.5 & 55.6 & 32.4 & 55.9 & 66.6 & 40.0 & 15.7 & 25.1 & 44.09 [2]  \\
Adapter-256~\citep{cai2020tinytl} & 45.7 & 37.4 & 31.2 & 53.2 & 30.3 & 25.4 & 13.8 & 22.1 & 32.39 [0]  \\
Adapter-64~\citep{cai2020tinytl} &  42.9 & 39.9 & 30.4 & 54.5 & 31.9 & 25.6 & 13.5 & 21.4 & 32.51 [0]  \\
Adapter-8~\citep{cai2020tinytl} & 45.2 & 41.8 & 31.1 & 56.4 & 30.4 & 24.6 & 13.2 & 22.0 & 33.09 [0]  \\
\midrule
VPT-Shallow~\citep{jia2022visual} & 50.5 & 58.6 & 40.5 & 67.1 & 68.7 & 36.1 & 20.2 & 34.1 & 46.98 [4]  \\
~~~~~ - Tuned / Total (\%) & 0.10 &  0.18 & 0.09 & 0.09 & 0.10 & 0.10 & 0.19 & 0.19 & 0.13 \\
VPT-Deep~\citep{jia2022visual} &  68.5 & 60.0 & 46.5 & 72.8 & 73.6 & 47.9 & 32.9 & 37.8 & 54.98 [8] \\
~~~~~ - Tuned / Total (\%) & 0.54 & 2.11 & 1.07 & 0.54 & 0.12 & 0.55 & 2.12 & 2.11  & 1.14\\
E2VPT~\citep{han20232vpt} &  71.7  & 61.2   &  47.9 & 75.8  &  80.8  & 48.1  &  31.7   & \textbf{41.9} & 57.39  [8] \\

~~~~~ - Tuned / Total (\%) &  0.34  & 0.65 &  0.44  & 0.36 &  0.10  & 0.38 & 1.14 & 0.66 & 0.51 \\

\midrule
VAPT (Ours) &  \textbf{74.8} $\pm$ (1.70) & \textbf{63.6} $\pm$ (0.36) & \textbf{50.0} $\pm$ (0.74) & \textbf{77.2} $\pm$ (0.72) &  \textbf{86.1} $\pm$ (0.24) & \textbf{48.3} $\pm$ (0.89)&  \textbf{33.8} $\pm$ (0.95) & 40.9 $\pm$ (2.29)& \textbf{59.34} [8] \\
~~~~~ - Tuned / Total (\%) &  0.20  & 0.39 &  0.31  &  0.38 & 0.08   & 0.35 & 0.60 & 0.75 &  0.38\\
\bottomrule
\end{tabular}
}
\end{small}
\end{center}
\end{table*}
\begin{table*}[ht]
\caption{\textbf{FGVC per-task results for ViT-B/16 supervised pre-trained on ImageNet-21K.} ``Number of Wins'' in [$\cdot$] denotes the number of wins relative to full fine-tuning. ``Tuned/Total'' represents the percentage of parameters tuned for each task. The highest accuracy among all approaches except Full is highlighted in \textbf{bold}.}
\label{table:fgvc_specific}
\begin{center}
\begin{small}
\tabcolsep=0.10cm
\resizebox{\textwidth}{!}{
\begin{tabular}{l|ccccc|c} 
\toprule
\multicolumn{1}{c|}{ViT-B/16~\citep{dosovitskiy2020image}}  &  \multicolumn{5}{c|}{FGVC [5]} & \\
\multicolumn{1}{c|}{(85.8M)}   &   CUB-200-2011 & NAbirds & Oxford Flowers & Stanford Dogs & Stanford Cars &  \multirow{-2}{*}{Mean}   \\ 
\midrule
Full~\citep{iofinova2022well} &  87.3 & 82.7 & 98.8 & 89.4 & 84.5 & 88.54   \\
Linear~\citep{iofinova2022well} &  85.3 & 75.9 & 97.9 & 86.2 & 51.3 & 79.32 [0]   \\
Partial-1~\citep{yosinski2014transferable} &  85.6 & 77.8 & 98.2 & 85.5 & 66.2 &  82.63 [0]  \\
MLP-2~\citep{chen2020improved} &  85.7 & 77.2 & 98.2 & 85.4 & 54.9 & 80.28 [0]  \\
MLP-3~\citep{chen2020improved} &  85.1 & 77.3 & 97.9 & 84.9 & 53.8 & 79.80 [0]  \\
MLP-5~\citep{chen2020improved} &  84.2 & 76.7 & 97.6 & 84.8 & 50.2 & 78.71 [0]  \\
MLP-9~\citep{chen2020improved} &  83.2 & 76.0 & 96.2 & 83.7 & 47.6 & 77.31 [0]  \\
\midrule
Sidetune~\citep{zhang2020side} &  84.7 & 75.8 & 96.9 & 85.8 & 48.6 & 78.35 [0]   \\
Bias~\citep{rebuffi2017learning} &   88.4 & 84.2 & 98.8 & 91.2 & 79.4 & 88.41 [3]  \\
Adapter-256~\citep{cai2020tinytl} & 87.2 & 84.3 & 98.5 & 89.9 & 68.6 & 85.70 [2]  \\
Adapter-64~\citep{cai2020tinytl} & 87.1 & 84.3 & 98.5 & 89.8 & 68.6 & 85.67 [2]  \\
Adapter-8~\citep{cai2020tinytl} &  87.3 & 84.3 & 98.4 & 88.8 & 68.4 & 85.46 [1]  \\
\midrule
VPT-Shallow~\citep{jia2022visual} &  86.7 & 78.8 & 98.4 & 90.7 & 68.7 & 84.62 [1]  \\
~~~~~ - Tuned / Total (\%) & 0.31 & 0.54 & 0.23 & 0.20 & 0.26 & 0.31 \\
VPT-Deep~\citep{jia2022visual} &  88.5 & 84.2 & 99.0 & 90.2 & \textbf{83.6} & 89.11 [4]  \\
~~~~~ - Tuned / Total (\%) & 0.29 & 1.02 & 0.14 & 1.17 & 2.27 & 0.98
 \\
E2VPT~\citep{han20232vpt} & 89.1  & \textbf{84.6}  & \textbf{99.1}   & 90.5  & 82.8  & 89.22  [4]  \\
~~~~~ - Tuned / Total (\%) & 0.32 & 0.65 & 0.15  & 0.88 & 1.27 & 0.65 \\
\midrule
VAPT (Ours) & \textbf{89.7} $\pm$ (0.12)  & \textbf{84.6} $\pm$ (0.06) & \textbf{99.1} $\pm$ (0.04)  & \textbf{91.7} $\pm$ (0.05) & 82.8 $\pm$ (0.53) & \textbf{89.58} [4]  \\
~~~~~ - Tuned / Total (\%) & 0.36 & 0.79  & 0.19  & 0.53 & 2.04 & 0.78 \\
\bottomrule
\end{tabular}
}
\end{small}
\end{center}
\end{table*}

\noindent Table~\ref{table:vtab_specific_natural}, Table~\ref{table:vtab_specific_specialized}, Table~\ref{table:vtab_specific_structured}, and Table~\ref{table:fgvc_specific} provide per-task results across 24 classification tasks evaluated in Table~\ref{table:main_vitb}. All results are averaged of three runs using different initialization seeds, with standard deviation error bars included. Compared to VPT and other commonly used PEFT methods, VAPT demonstrates consistently superior performance across a variety of downstream tasks while utilizing fewer parameters.

\vspace{-0.3em}
\subsection{Per-task Results on MAE and MoCo v3} \label{appendix:per_task_results_mae_moco}
\vspace{-0.3em}

\begin{table*}[ht!]
\caption{\textbf{VTAB-1K \textit{Natural} per-task results for ViT-B/16 pre-trained on MAE~\citep{he2022masked}.} ``Number of Wins'' in [$\cdot$] denotes comparisons relative to full fine-tuning. ``Tuned/Total'' is the percentage of parameters tuned for each task. The highest accuracy among all approaches is highlighted in \textbf{bold}.}
\label{table:mae_natural}
\begin{center}
\begin{small}
\tabcolsep=0.10cm
\resizebox{\textwidth}{!}{
\begin{tabular}{l|ccccccc|c} 
\toprule
\multicolumn{1}{c|}{ViT-B/16~\citep{dosovitskiy2020image}}  &  \multicolumn{7}{c|}{VTAB-1K~\citep{zhai2019large} \textit{Natural} [7]} & \\
\multicolumn{1}{c|}{(85.8M)}   &   CIFAR-100 & Caltech101 & DTD & Flowers102 & Pets & SVHN & Sun397 & \multirow{-2}{*}{Mean}   \\ 
\midrule
Full~\citep{iofinova2022well} &  24.6 &  84.2 &  56.9 &  72.7 & \textbf{74.4} &  \textbf{86.6} &  15.8 &  \textbf{59.31}   \\
VAPT (Ours) &  \textbf{34.4}  &  \textbf{89.7} & \textbf{63.1} & \textbf{74.2} & 73.8 & 55.1 & \textbf{24.3} & 59.23 [5] \\
~~~~~ - Tuned / Total (\%) & 0.17 & 0.16 & 0.10 & 0.13 & 0.07 & 0.12 & 0.40 &  0.16 \\
\bottomrule
\end{tabular}
}
\end{small}
\end{center}
\end{table*}
\begin{table*}[ht!]
\caption{\textbf{VTAB-1K \textit{Specialized} Per-Task Results for ViT-B/16 Pre-trained on MAE~\citep{he2022masked}.} ``Number of Wins'' in [$\cdot$] denotes comparisons relative to full fine-tuning. ``Tuned/Total'' is the percentage of parameters tuned for each task. The highest accuracy among all approaches is highlighted in \textbf{bold}.}
\label{table:mae_specialized}
\begin{center}
\begin{small}
\tabcolsep=0.10cm
\resizebox{\textwidth}{!}{
\begin{tabular}{l|cccc|c}
\toprule
\multicolumn{1}{c|}{ViT-B/16~\citep{dosovitskiy2020image}}  &  \multicolumn{4}{c|}{VTAB-1K~\citep{zhai2019large} \textit{Specialized} [4]} & \\
  \multicolumn{1}{c|}{(85.8M)}   &   Patch Camelyon & EuroSAT & Resisc45 & Retinopathy &  \multirow{-2}{*}{Mean}   \\ 
\midrule
Full~\citep{iofinova2022well} &  \textbf{81.8}  & \textbf{94.0} & 72.3 & 70.6 & 79.68   \\
VAPT (Ours) &  78.9 & 91.6  & \textbf{78.7}  & \textbf{73.7} &  \textbf{80.73} [2] \\
~~~~~ - Tuned / Total (\%) & 0.36  &  0.31 &  0.15  & 0.03  & 0.21  \\
\bottomrule
\end{tabular}
}
\end{small}
\end{center}
\end{table*}
\begin{table*}[ht!]
\caption{\textbf{VTAB-1K \textit{Structured} Per-Task Results for ViT-B/16 Pre-trained on MAE~\citep{he2022masked}.} ``Number of Wins'' in [$\cdot$] denotes comparisons relative to full fine-tuning. ``Tuned/Total'' is the percentage of parameters tuned for each task. The highest accuracy among all approaches is highlighted in \textbf{bold}.}
\label{table:mae_structured}
\begin{center}
\begin{small}
\tabcolsep=0.10cm
\resizebox{\textwidth}{!}{
\begin{tabular}{l|cccccccc|c}
\toprule
\multicolumn{1}{c|}{ViT-Base/16~\citep{dosovitskiy2020image}}  &  \multicolumn{8}{c|}{VTAB-1K~\citep{zhai2019large} \textit{Structured} [8]} & \\
      &   Clevr/ & Clevr/ &  & KITTI/ &  dSprites/ & dSprites/ & SmallNORB/ & SmallNORB/ &    \\ 
  \multicolumn{1}{c|}{\multirow{-2}{*}{(85.8M)}}  &   count & distance & \multirow{-2}{*}{DMLab} & distance &  location & orientation & azimuth & elevation & \multirow{-3}{*}{Mean}   \\ 
\midrule
Full~\citep{iofinova2022well} &  \textbf{67.0} & \textbf{59.8} & \textbf{45.2} & \textbf{75.3} & 72.5 & \textbf{47.5} & \textbf{30.2} & 33.0 & \textbf{53.82}   \\
VAPT (Ours) &  57.9  &  57.1 & 37.3 &  68.2   & \textbf{82.6} &  11.5 & 21.6 &  \textbf{41.7}  & 47.24 [2] \\
~~~~~ - Tuned / Total (\%) &  0.18  & 0.65 &  0.33   & 0.35  &  0.09  & 0.36 & 0.66 & 0.67 & 0.41 \\
\bottomrule
\end{tabular}
}
\end{small}
\end{center}
\end{table*}
\begin{table*}[ht!]
\caption{\textbf{VTAB-1K \textit{Natural} Per-Task Results for ViT-B/16 Pre-trained on MoCo v3~\citep{chen2021empirical}.} ``Number of Wins'' in [$\cdot$] denotes comparisons relative to full fine-tuning. ``Tuned/Total'' is the percentage of parameters tuned for each task. The highest accuracy among all approaches is highlighted in \textbf{bold}.}
\label{table:moco_natural}
\begin{center}
\begin{small}
\tabcolsep=0.10cm
\resizebox{\textwidth}{!}{
\begin{tabular}{l|ccccccc|c} 
\toprule
\multicolumn{1}{c|}{ViT-B/16~\citep{dosovitskiy2020image}}  &  \multicolumn{7}{c|}{VTAB-1K~\citep{zhai2019large} \textit{Natural} [7]} & \\
\multicolumn{1}{c|}{(85.8M)}   &   CIFAR-100 & Caltech101 & DTD & Flowers102 & Pets & SVHN & Sun397 & \multirow{-2}{*}{Mean}   \\ 
\midrule
Full~\citep{iofinova2022well} &  57.6 &  91.0 &  64.6 &  91.6 & 79.9 &  \textbf{89.8} &  29.1 &  71.95   \\
VAPT (Ours) &  \textbf{74.5}  &  \textbf{92.0} & \textbf{69.5} & \textbf{93.2} & \textbf{88.2} & 84.6 & \textbf{41.8} & \textbf{77.69} [6] \\
~~~~~ - Tuned / Total (\%) & 0.15 & 0.20 & 0.09 & 0.34 & 0.09 & 0.10 & 0.40 &  0.20 \\
\bottomrule
\end{tabular}
}
\end{small}
\end{center}
\end{table*}
\begin{table*}[ht!]
\caption{\textbf{VTAB-1K \textit{Specialized} Per-Task Results for ViT-B/16 Pre-trained on MoCo v3~\citep{chen2021empirical}.} ``Number of Wins'' in [$\cdot$] denotes comparisons relative to full fine-tuning. ``Tuned/Total'' is the percentage of parameters tuned for each task. The highest accuracy among all approaches is highlighted in \textbf{bold}.}
\label{table:moco_specialized}
\begin{center}
\begin{small}
\tabcolsep=0.10cm
\resizebox{\textwidth}{!}{
\begin{tabular}{l|cccc|c}
\toprule
\multicolumn{1}{c|}{ViT-B/16~\citep{dosovitskiy2020image}}  &  \multicolumn{4}{c|}{VTAB-1K~\citep{zhai2019large} \textit{Specialized} [4]} & \\
  \multicolumn{1}{c|}{(85.8M)}   &   Patch Camelyon & EuroSAT & Resisc45 & Retinopathy &  \multirow{-2}{*}{Mean}   \\ 
\midrule
Full~\citep{iofinova2022well} &  \textbf{85.1} & \textbf{96.4} & 83.1 & 74.2 & \textbf{84.72}   \\
VAPT (Ours) &  80.6 & 95.9  & \textbf{83.9}  & \textbf{75.4} &  83.95 [2] \\
~~~~~ - Tuned / Total (\%) & 0.36  &  0.34 &  0.11  & 0.03  & 0.21  \\
\bottomrule
\end{tabular}
}
\end{small}
\end{center}
\end{table*}
\begin{table*}[ht!]
\caption{\textbf{VTAB-1K \textit{Structured} Per-Task Results for ViT-B/16 Pre-trained on MoCo v3~\citep{chen2021empirical}.} ``Number of Wins'' in [$\cdot$] denotes comparisons relative to full fine-tuning. ``Tuned/Total'' is the percentage of parameters tuned for each task. The highest accuracy among all approaches is highlighted in \textbf{bold}.}
\label{table:moco_structured}
\begin{center}
\begin{small}
\tabcolsep=0.10cm
\resizebox{\textwidth}{!}{
\begin{tabular}{l|cccccccc|c}
\toprule
\multicolumn{1}{c|}{ViT-Base/16~\citep{dosovitskiy2020image}}  &  \multicolumn{8}{c|}{VTAB-1K~\citep{zhai2019large} \textit{Structured} [8]} & \\
      &   Clevr/ & Clevr/ &  & KITTI/ &  dSprites/ & dSprites/ & SmallNORB/ & SmallNORB/ &    \\ 
  \multicolumn{1}{c|}{\multirow{-2}{*}{(85.8M)}}  &   count & distance & \multirow{-2}{*}{DMLab} & distance &  location & orientation & azimuth & elevation & \multirow{-3}{*}{Mean}   \\ 
\midrule
Full~\citep{iofinova2022well} &  55.2 & 56.9 & 44.6 & \textbf{77.9} & 63.8 & 49.0 & 31.5 & 36.9 & 51.98   \\
VAPT (Ours) &  \textbf{74.2}  &  \textbf{65.3} & \textbf{48.4} &  73.8   & \textbf{88.0} &  \textbf{51.4} & \textbf{32.5} &  \textbf{52.3}  & \textbf{60.74} [7] \\
~~~~~ - Tuned / Total (\%) &  0.20  & 0.69 &  0.05   & 0.36  &  0.08  & 0.20 & 0.66 & 0.65 & 0.36 \\
\bottomrule
\end{tabular}
}
\end{small}
\end{center}
\end{table*}

Tables~\ref{table:mae_natural}, \ref{table:mae_specialized}, and \ref{table:mae_structured} provide per-task results for MAE~\citep{he2022masked}, as summarized in Table~\ref{table:mae_moco}. Similarly, Tables~\ref{table:moco_natural}, \ref{table:moco_specialized}, and \ref{table:moco_structured} present per-task results for MoCo v3~\citep{chen2021empirical}, which are also summarized in Table~\ref{table:mae_moco}.

\vspace{-0.3em}
\subsection{Statistical Significance Tests} \label{appendix:stats_test}
\vspace{-0.3em}

\begin{table}[ht]
\caption{
\textbf{Wilcoxon signed-rank test} evaluating whether VAPT significantly outperforms other methods across 19 VTAB tasks. The results show that VAPT is statistically significantly better than other baselines ($p < 0.05$).} 
\label{table:stats_test}
\begin{center}
\begin{small}
\tabcolsep=0.10cm
\resizebox{0.6\columnwidth}{!}{
\begin{tabular}{@{}l|ccccc@{}}
\toprule
Method    & Full   & Bias  & Adapter  & VPT   & E2VPT  \\ \midrule
$p$-value & 5.7e-06 & 1.9e-06 & 1.9e-06 & 1.9e-06 & 0.0001 \\ \bottomrule
\end{tabular}
}
\end{small}
\end{center}
\end{table}

To determine whether VAPT consistently surpasses competing PEFT methods across the 19 VTAB tasks, we conducted a one-tailed paired Wilcoxon signed-rank test~\citep{wilcoxon1992individual} for each VAPT-baseline comparison. The null hypothesis $H_0$ for each comparison was that the median difference in performance scores between VAPT and the baseline method is zero. The alternative hypothesis $H_1$ was that VAPT performs better than the baseline method. As reported in Table~\ref{table:stats_test}, all resulting $p$-values were below the 0.05 significance level, indicating that the observed performance improvements of VAPT are statistically significant.

\vspace{-0.3em}
\subsection{Different Backbone Scales} \label{appendix:backbone_scales}
\vspace{-0.3em}


\begin{figure}[!htb]
    \centering
    \centerline{\includegraphics[width=0.75\textwidth]{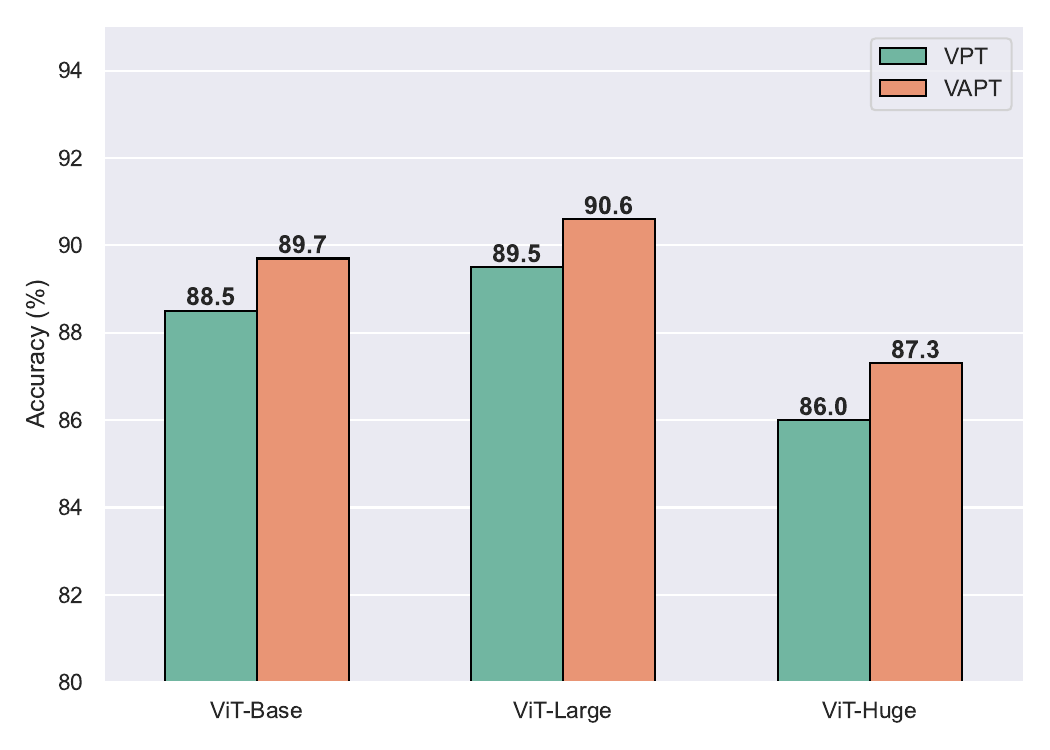}}
    \caption{Comparison of VAPT and VPT on CUB-200-2011 across different backbone scales.}
    \label{fig:backbone_scale}
\end{figure}

In Figure~\ref{fig:backbone_scale}, we compare VAPT and VPT on the CUB-200-2011 dataset~\citep{wah2011caltech} using pre-trained backbones of varying scales (ViT-Base, ViT-Large, and ViT-Huge). The results demonstrate that VAPT consistently surpasses VPT as the model size increases. Notably, with the ViT-Huge backbone, VAPT achieves up to a 1.3\% improvement over VPT. These findings highlight the scalability and effectiveness of VAPT compared to VPT as model size grows.

\vspace{-0.3em}
\subsection{Semantic Segmentation} \label{appendix:semantic_segmentation}
\vspace{-0.3em}

\begin{table}[t]
\caption{\textbf{Semantic segmentation results on ADE20K}. All methods are evaluated with SETR~\citep{zheng2021rethinking} using ViT‑L. The best mIoU scores among all methods but Full are \textbf{bolded}.} 
\label{table:semantic_segmentation}
\begin{center}
\begin{small}
\tabcolsep=0.10cm
\resizebox{0.525\columnwidth}{!}{
\begin{tabular}{@{}l|ccccc@{}}
\toprule
Method    & Full   & Head  & Bias  & VPT   & VAPT  \\ \midrule
mIoU      & 48.31  & 35.12 & 43.40 & 42.24 & \textbf{44.04} \\ \midrule
\# params & 318.31 & 13.18 & 13.46 & 15.60 & 15.29 \\ \bottomrule
\end{tabular}
}
\end{small}
\end{center}
\end{table}

To explore the potential generalizability of VAPT beyond visual classification, we evaluated its performance on the semantic segmentation task. We report mIoU values in Table~\ref{table:semantic_segmentation}. Our observations indicate that VAPT attains a higher mIoU than VPT while introducing fewer additional parameters. Furthermore, VAPT remains competitive with other PEFT approaches. These findings highlight VAPT’s strong generalizability across various computer vision tasks.

\vspace{-0.3em}
\subsection{Ablation Study} \label{appendix:ablation_study}
\vspace{-0.3em}

\begin{table}[!ht]
\caption{\textbf{Impact of Different Components in VAPT.} Experiments were conducted on the VTAB-1K benchmark. We used ViT-B/16 Supervised Pre-trained on ImageNet-21K as the backbone. ``Tuned/Total'' is the percentage of parameters tuned for each task. \textbf{Bold} highlights the best results.}
\label{table:ablation_study}
\begin{center}
\begin{small}
\tabcolsep=0.20cm
\resizebox{\textwidth}{!}{
\begin{tabular}{@{}ccc|c|ccc|c@{}}
\toprule
\multicolumn{3}{c|}{Components}                      & Tuned       & \multicolumn{3}{c|}{VTAB-1K~\citep{zhai2019large}}       & \multirow{2}{*}{Mean Total} \\
Channel-wise & Feature projector & Sharing Projector & / Total(\%) & \textit{Natural} & \textit{Specialized} & \textit{Structured} &                             \\ \midrule
\checkmark            & \checkmark                 & \checkmark                 & 0.27           & \textbf{81.43}       & \textbf{85.13}           & \textbf{59.34}          & \textbf{72.91}                           \\
            & \checkmark                 & \checkmark                 & 0.34           & 80.47       & 84.63           & 58.13          & 71.94                           \\
\checkmark            &                  &                  & 0.26           & 79.43       & 83.60           & 57.72          & 71.17                           \\
\checkmark            & \checkmark                 &                  & 0.42           & 80.85       & 85.10           & 59.04          & 72.56                           \\ \bottomrule
\end{tabular}
}
\end{small}
\end{center}
\vskip -0.1in
\end{table}

\textbf{Impact of Different Components.} We conducted ablation studies on the VTAB-1K benchmark~\citep{zhai2019large} to evaluate the individual contributions of each component in VAPT. The results, presented in Table~\ref{table:ablation_study}, show that incorporating the channel-wise convolution layer increases the overall average performance by 0.97\%. This improvement highlights the benefit of explicitly encoding spatial relationships in the feature map before it is processed by the token-wise projectors. Furthermore, the channel-wise convolution layer reduces the overall parameter count by downsampling the feature map from $H \times W$ to $H' \times W'$, where $H' = H - K + 1$ and $W' = W - K + 1$. Consequently, the dimensionality for each token-wise projector is also reduced, leading to fewer parameters. When the feature projector is removed,  performance decreases; for instance, in VTAB-1K \textit{Natural}, the performance drops from 81.43\% to 79.43\%. This decrease highlights the importance of the feature projector. Furthermore, our sharing mechanism for the feature projector not only reduces the number of parameters but also facilitates knowledge transfer between layers, leading to improved performance. Overall, combining all components yielded the best performance, with an average accuracy of 72.91\% on VTAB-1K.

\begin{table}[ht]
\caption{\textbf{Detailed Analysis of Channel-wise Convolution.} Experiments were conducted on the VTAB-1K benchmark. We used ViT-B/16 Supervised Pre-trained on ImageNet-21K as the backbone. ``Tuned/Total'' is the percentage of parameters tuned for each task. \textbf{Bold} highlights the best results.}
\label{table:convolution}
\begin{center}
\begin{small}
\tabcolsep=0.20cm
\resizebox{0.95\textwidth}{!}{
\begin{tabular}{@{}l|c|ccc|c@{}}
\toprule
\multicolumn{1}{c|}{\multirow{2}{*}{Method}} & Tuned       & \multicolumn{3}{c|}{VTAB-1K~\citep{zhai2019large}}                                  & \multirow{2}{*}{Mean Total} \\
                        & / Total(\%) & \textit{Natural} & \textit{Specialized} & \textit{Structured} &                             \\ \midrule
Channel-wise Convolution                       & 0.27           &\textbf{81.43}       & \textbf{85.13}           & \textbf{59.34}          & \textbf{72.91}                           \\
Standard Convolution                       & 0.33           & 80.79                & 84.81                    & 58.37                   & 72.20                           \\
Average Pooling                       & 0.27           & 81.30                & 85.02                    & 58.41                   & 72.45                           \\ \bottomrule
\end{tabular}
}
\end{small}
\end{center}
\end{table}

\noindent \textbf{Detailed Analysis of the Channel-wise Convolution Layer.} We evaluated the effectiveness of our channel-wise convolution layer by comparing its performance to that of a standard convolution layer. We also explored alternative strategies for modeling spatial relationships, including an average pooling layer \citep{lecun1998gradient}. Notably, average pooling can be considered a special case of our channel-wise convolution layer, where the kernel weights are fixed rather than learned. The comparative results are presented in Table~\ref{table:convolution}. Our channel-wise convolution layer not only reduces the number of parameters compared to a standard convolution layer but also mitigates overfitting, thereby improving overall performance. Furthermore, it surpasses the average pooling layer; for instance, on VTAB-1K \emph{Structured}, we achieve a 0.93\% performance gain. This improvement is attributed to the greater flexibility of our channel-wise convolution layer compared to that of average pooling.

\begin{figure*}[!htb]
\centering
    \subcaptionbox{\small VTAB-1K Sun397}
    {\includegraphics[width = 0.32\textwidth]{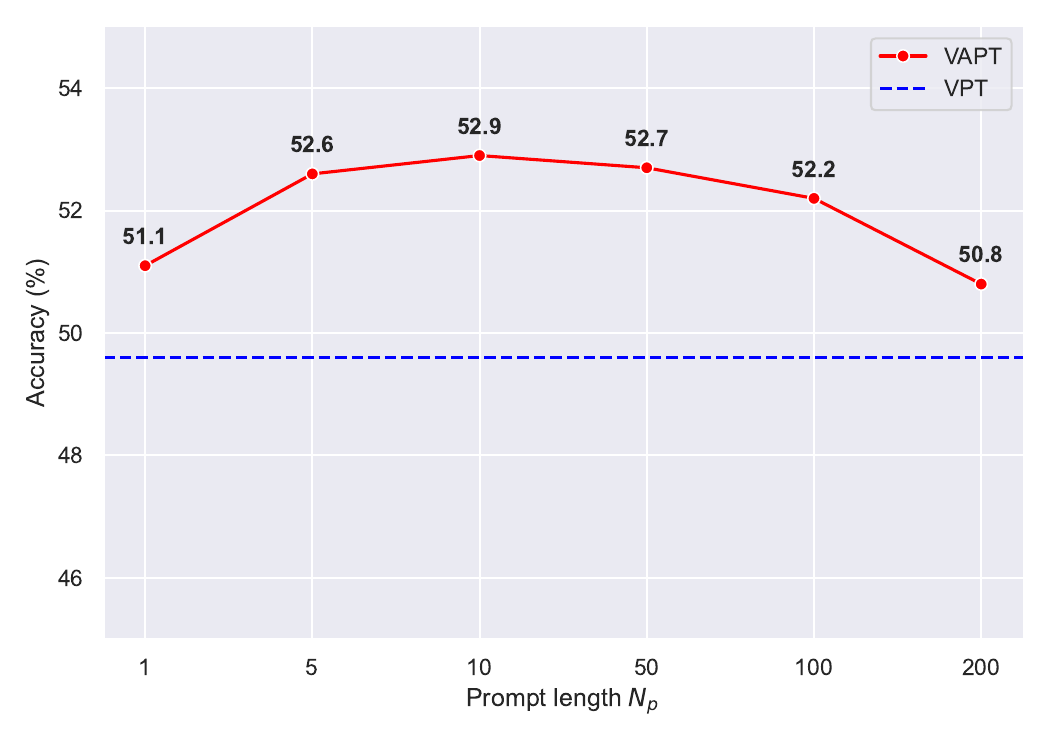}}
    \hfill
    \subcaptionbox{\small VTAB-1K Retinopathy}
    {\includegraphics[width = 0.32\textwidth]{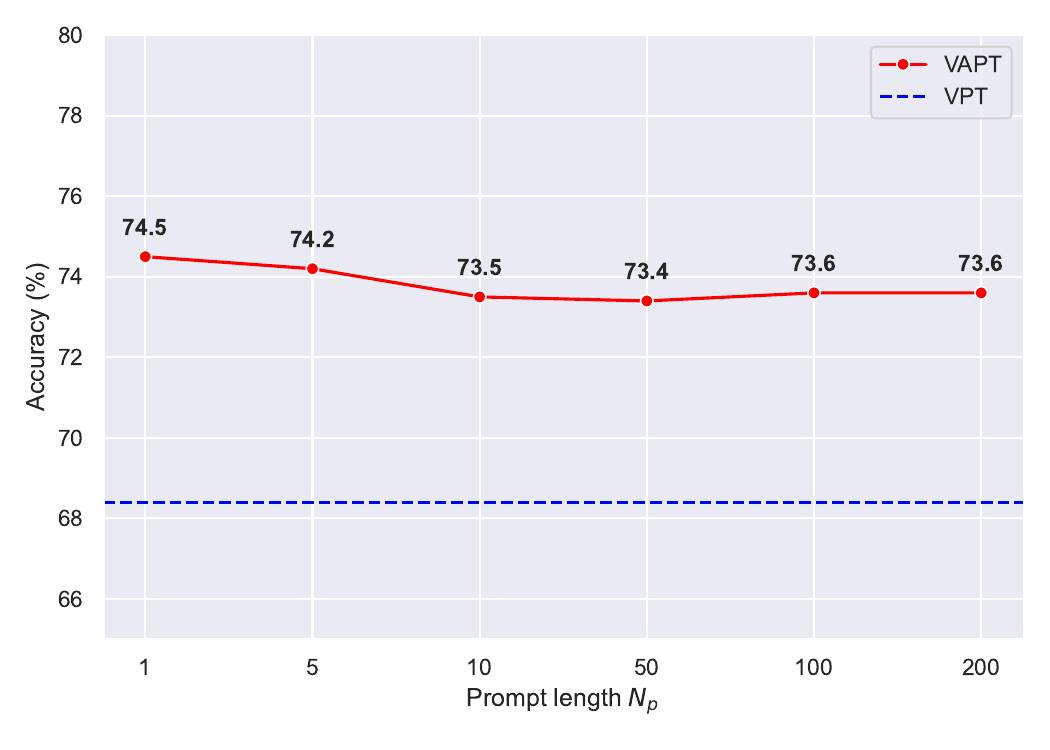}}
    \hfill
    \subcaptionbox{\small VTAB-1K Clevr/distance}
    {\includegraphics[width = 0.32\textwidth]{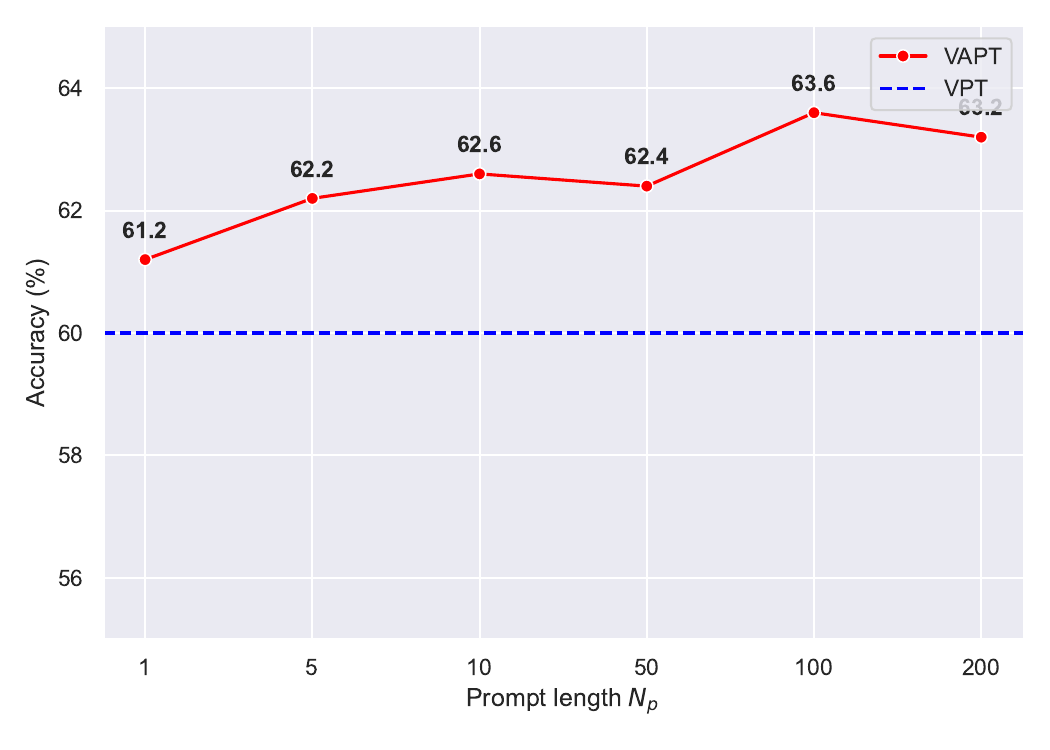}}
\caption{Ablation on prompt length $N_p$. Results are reported on 3 datasets, each corresponding to a distinct VTAB subgroup. The dashed line indicates the best results of VPT.
}
\label{fig:prompt length}
\end{figure*}

\begin{figure*}[!htb]
\vskip 0.2in
\centering
    \subcaptionbox{VTAB-1K Sun397}
    {\includegraphics[width = 0.32\textwidth]{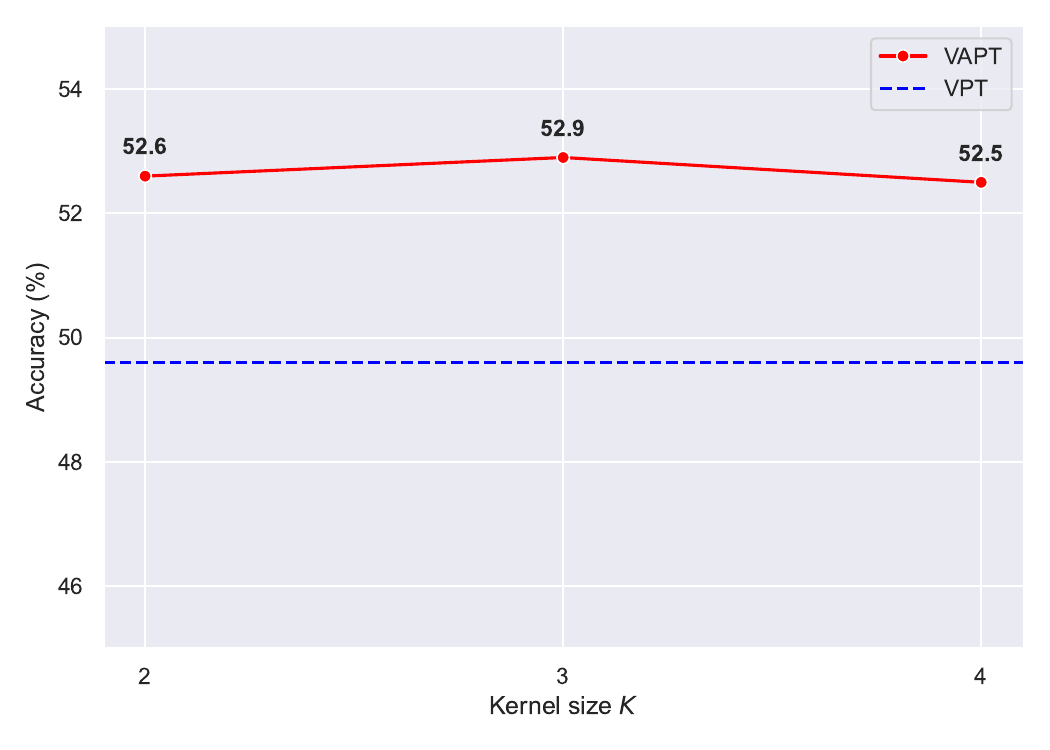}}
    \hfill
    \subcaptionbox{VTAB-1K Retinopathy}
    {\includegraphics[width = 0.32\textwidth]{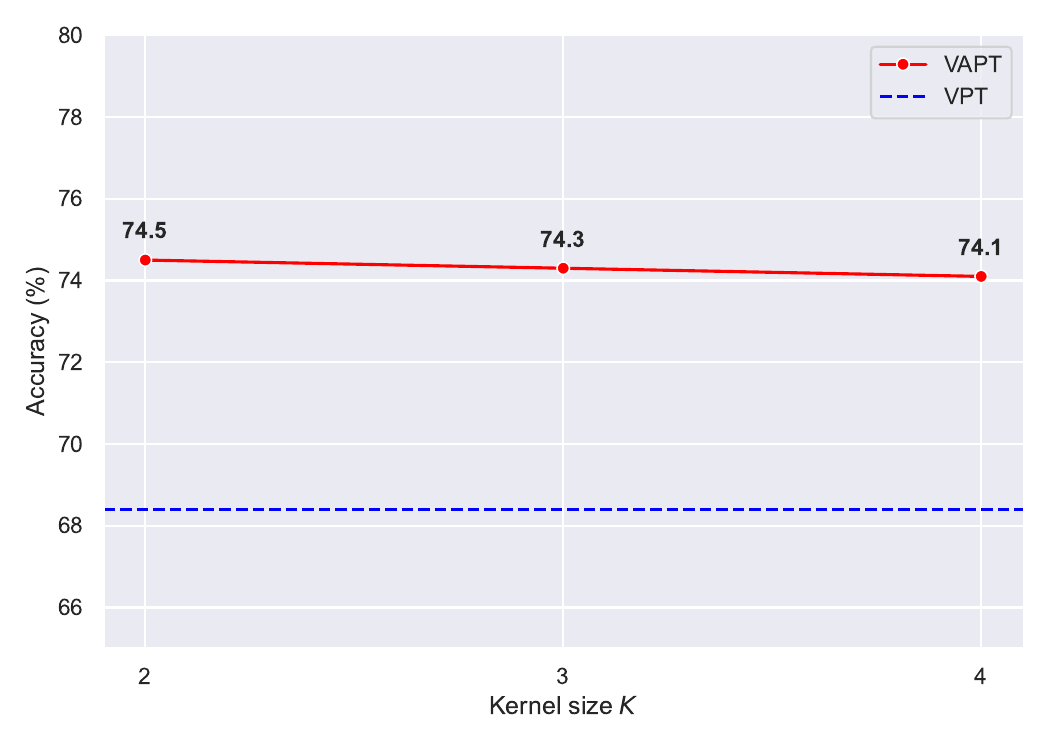}}
    \hfill
    \subcaptionbox{VTAB-1K Clevr/distance}
    {\includegraphics[width = 0.32\textwidth]{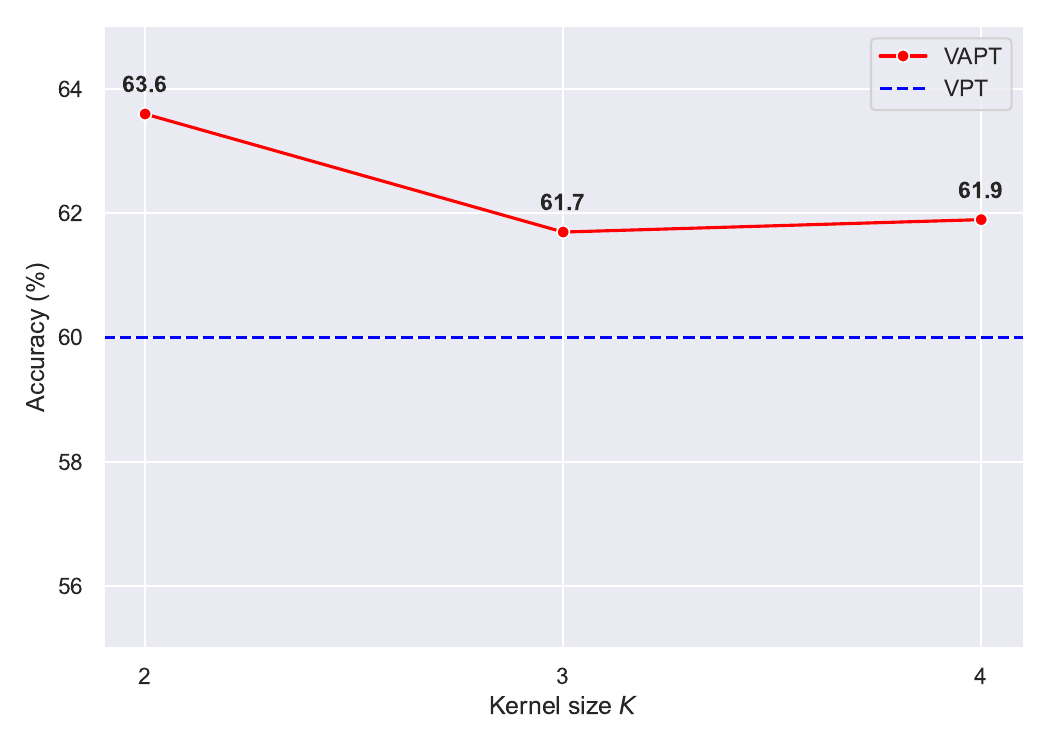}}
\caption{Ablation on kernel size $K$. Results are reported on 3 datasets, each corresponding to a distinct VTAB subgroup. The dashed line indicates the best results of VPT.
}
\label{fig:ablation_kernel}
\end{figure*}

\begin{figure*}[!htb]
\vskip 0.2in
\centering
    \subcaptionbox{VTAB-1K Sun397}
    {\includegraphics[width = 0.32\textwidth]{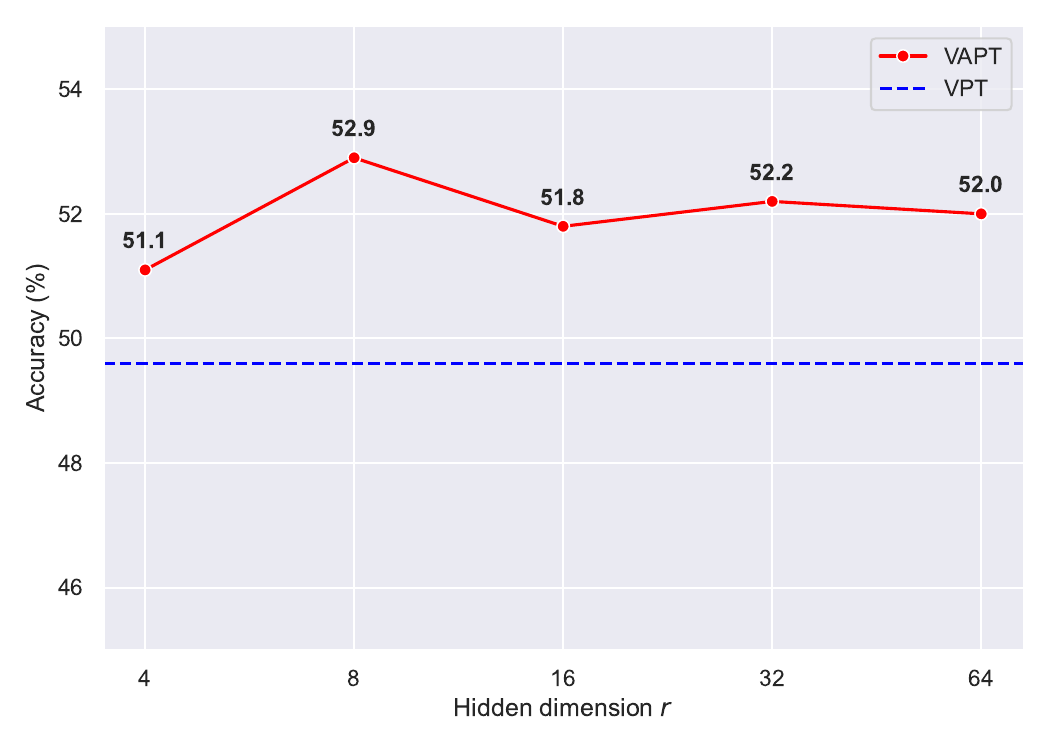}}
    \hfill
    \subcaptionbox{VTAB-1K Retinopathy}
    {\includegraphics[width = 0.32\textwidth]{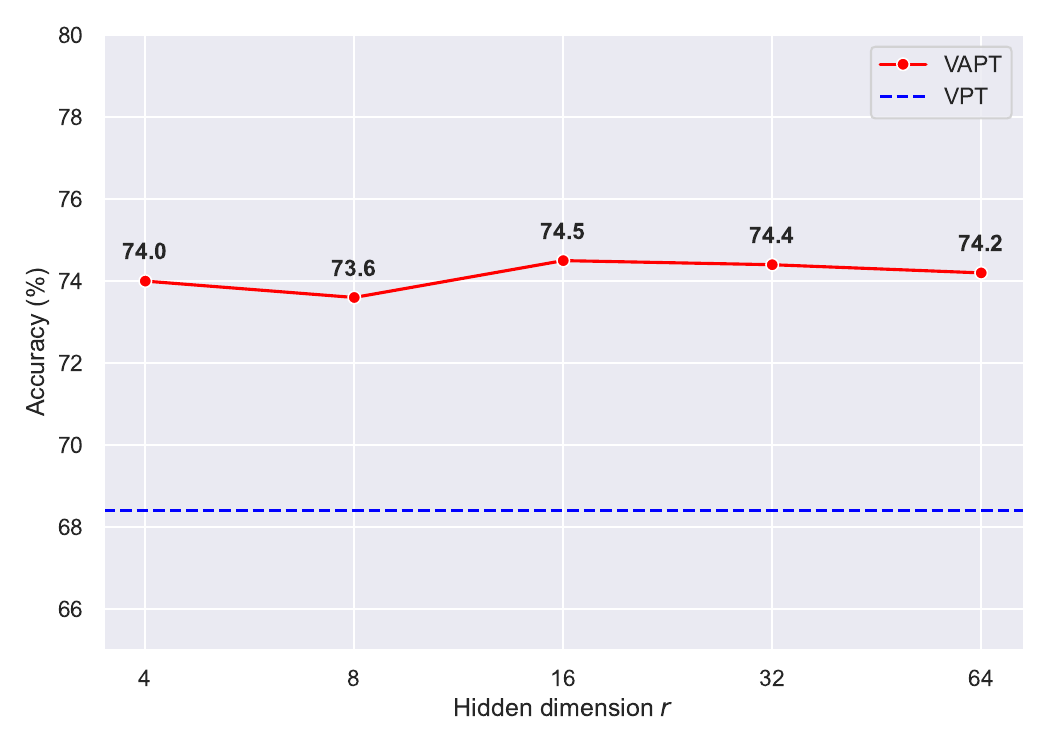}}
    \hfill
    \subcaptionbox{VTAB-1K Clevr/distance}
    {\includegraphics[width = 0.32\textwidth]{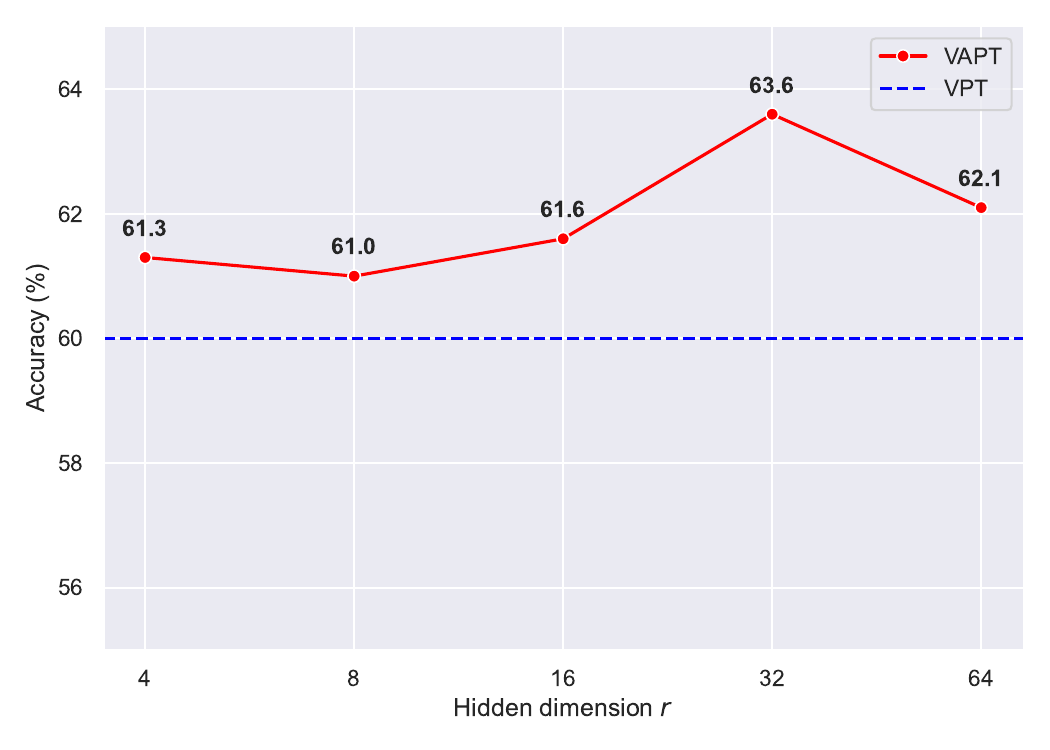}}
\caption{Ablation on hidden dimension $r$. Results are reported on 3 datasets, each corresponding to a distinct VTAB subgroup. The dashed line indicates the best results of VPT.
}
\label{fig:hidden_dim}
\vskip -0.2in
\end{figure*}

\noindent \textbf{Robustness to Different Hyperparameters.} We systematically evaluated the influence of key hyperparameters, including prompt length $N_p$, kernel size $K$, and hidden dimension $r$, by conducting experiments on three representative VTAB-1K tasks: Sun397 (\textit{Natural}), Retinopathy (\textit{Specialized}), and Clevr/distance (\textit{Structured}). The results are shown in Figures~\ref{fig:prompt length},~\ref{fig:ablation_kernel}, and~\ref{fig:hidden_dim}. As depicted, the optimal hyperparameters vary across tasks. Nevertheless, VAPT consistently achieves higher performance than VPT across a range of these hyperparameters. Notably, even with only a single prompt, VAPT maintains its advantage over VPT.

\begin{table}[ht]
\caption{\textbf{Linear activation in the feature projector.} Experiments were conducted on the VTAB-1K benchmark. We used ViT-B/16 Supervised Pre-trained on ImageNet-21K as the backbone. ``Tuned/Total'' is the percentage of parameters tuned for each task. \textbf{Bold} highlights the best results.}
\label{table:activation}
\begin{center}
\begin{small}
\tabcolsep=0.20cm
\resizebox{0.9\textwidth}{!}{
\begin{tabular}{@{}l|c|ccc|c@{}}
\toprule
\multicolumn{1}{c|}{\multirow{2}{*}{Method}} & Tuned       & \multicolumn{3}{c|}{VTAB-1K~\citep{zhai2019large}}                                  & \multirow{2}{*}{Mean Total} \\
                        & / Total(\%) & \textit{Natural} & \textit{Specialized} & \textit{Structured} &                             \\ \midrule
VPT                       & 0.69           & 78.48                & 82.43                    & 54.98                   & 69.43                           \\ 
$\mathrm{VAPT}_{\mathrm{linear}}$                       & 0.27           & 80.89                & 84.93                    & 59.28                   & 72.64                           \\
$\mathrm{VAPT}_{\mathrm{non-linear}}$                       & 0.27           &\textbf{81.43}       & \textbf{85.13}           & \textbf{59.34}          & \textbf{72.91}                           \\
\bottomrule
\end{tabular}
}
\end{small}
\end{center}
\vskip -0.1in
\end{table}

\noindent \textbf{Linear Activation in the Feature Projector.} As shown in Appendix~\ref{appendix:additional_results},  VAPT preserves its optimal sample efficiency even when the activation function $\sigma$ in the feature projector (see Equation~\eqref{eq:feature_projector}) is replaced by a linear identity function. To substantiate this theoretical result, we report the corresponding performance in Table~\ref{table:activation}, where $\sigma$ is removed. The results demonstrate that the linear version of VAPT remains competitive with the non-linear variant. Notably, it still outperforms VPT by a substantial margin (e.g., 4.30\% on the VTAB-1K \textit{Structured} task), confirming both the theoretical and empirical robustness of our approach.

\vspace{-0.3em}
\subsection{Computational Cost} \label{appendix:computational_cost}
\vspace{-0.3em}

\begin{table}[ht]
\caption{\textbf{Comparison of FLOPs and MACs for VAPT and VPT.} The experiments were conducted on FGVC benchmark. We used ViT-B/16 Supervised Pre-trained on ImageNet-21K as the backbone.}
\label{table:flops}
\begin{center}
\begin{small}
\resizebox{\textwidth}{!}{%
\tabcolsep=0.20cm
\begin{tabular}{@{}l|l|c|c|c|c|c@{}}
\toprule
\multicolumn{1}{c|}{\textbf{Metric}}     & \textbf{Method} & \textbf{Stanford Cars} & \textbf{CUB-200-2011} & \textbf{Oxford Flowers} & \textbf{NABirds} & \textbf{Stanford Dogs} \\ \midrule
\multirow{2}{*}{FLOPs (GFLOPS)} 
    & VAPT          &  73.67 ({\color{blue} $\uparrow 0.18\%$})  &  {37.04} ({\color{blue} $\uparrow 0.11\%$}) &  {36.10} ({\color{blue} $\uparrow 0.08\%$})  &  {44.61} ({\color{blue} $\uparrow 0.31\%$})   &  {54.30} ({\color{blue} $\uparrow 0.59\%$})  \\ 
    & VPT           & 73.54           & 37.00          & 36.07           & 44.47            & 53.98           \\ \cmidrule(lr){1-7}
\multirow{2}{*}{MACs (GMACs)} 
    & VAPT          &  {36.80} ({\color{blue} $\uparrow 0.16\%$})  &  {18.51} ({\color{blue} $\uparrow 0.11\%$}) &  {18.03} ({\color{blue} $\uparrow 0.06\%$})  &  {22.29} ({\color{blue} $\uparrow 0.32\%$})   &  {27.13} ({\color{blue} $\uparrow 0.59\%$})  \\ 
    & VPT           & 36.74           & 18.49          & 18.02           & 22.22            & 26.97           \\ \bottomrule
\end{tabular}%
}
\end{small}
\end{center}
\vskip -0.1in
\end{table} 

One primary concern when designing prompts that adapt to the input is the associated computational overhead. Although VAPT outperforms VPT, one notable advantage of VPT is its simplicity, as its prompts remain fixed regardless of the input. To examine these trade-offs, we compare both the performance and computational costs of the two methods. Table~\ref{table:flops} reports these costs, measured in FLOPs (GFLOPs) and MACs (GMACs) for VAPT and VPT across five FGVC datasets. VAPT incurs only a slight increase in computational cost, with FLOPs increasing by 0.18\% for Stanford Cars and ranging between 0.06\% and 0.59\% across other datasets. A similar pattern is observed for MACs. These modest increases are outweighed by the significant performance gains. Notably, VAPT also employs fewer parameters than VPT. Overall, these findings highlight the efficiency of VAPT, delivering robust performance improvements at a minimal additional computational cost.

\subsection{Adversarial Robustness} \label{appendix:adversarial}

\begin{figure}
\begin{center}
\centerline{\includegraphics[width=0.6\linewidth]{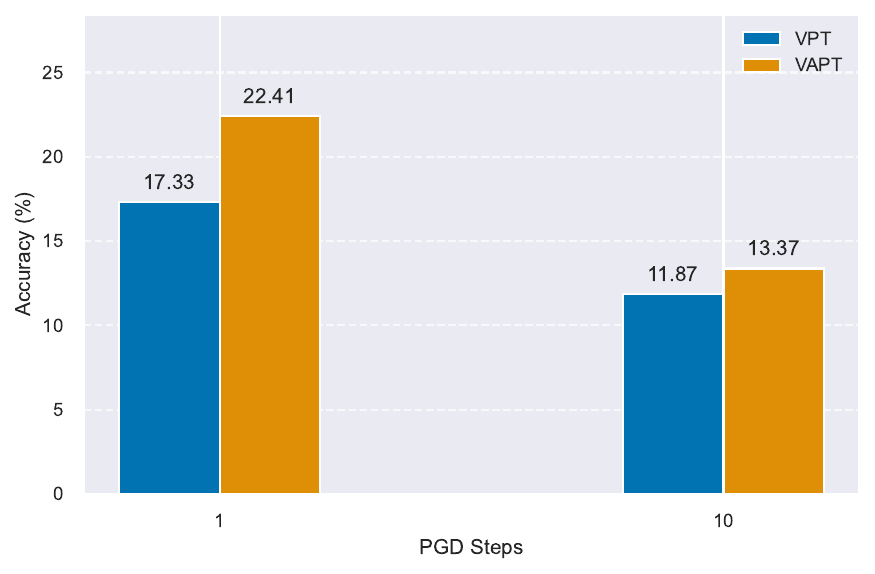}}
\caption{Comparison of VAPT and VPT performance on VTAB-1K CIFAR-100 under PGD adversarial attack.}
\label{fig:adversarial}
\end{center}
\vskip -0.2in
\end{figure}

To evaluate the adversarial robustness of VAPT, we apply the Projected Gradient Descent (PGD) attack~\citep{madry2017towards} to VTAB-1K CIFAR-100 test samples. As illustrated in Figure~\ref{fig:adversarial}, model performance degrades significantly under attack, which is expected given the limited training data and the absence of adversarial-specific training methods (\eg adversarial fine-tuning). Despite this, VAPT consistently outperforms VPT under adversarial conditions, suggesting that it may confer some robustness benefits. However, these findings are preliminary, and further experiments are required to substantiate the observed trends.

\vspace{-0.3em}
\subsection{Interpretive Visualizations} \label{appendix:attention_map}
\vspace{-0.3em}

To facilitate a deeper understanding of our method, we provide visualizations to illustrate the advantages of VAPT. Specifically, we use GradCAM \citep{selvaraju2017grad} to generate attention maps by computing the gradients of a target concept with respect to the model's final layer. Figure~\ref{fig:gradcam}  presents examples from five VTAB-1K datasets, namely Sun397, SVHN, Resisc45, Clevr/count, and KITTI/distance, comparing heatmaps generated by VAPT and VPT. These visualizations enable us to examine the regions of the input data to which each technique directs the model's focus. We observe that VAPT can localize relevant image regions more accurately than VPT. For instance, in the Sun397 dataset, while VPT struggles to capture the complete structure of an object, VAPT succeeds in identifying and highlighting its key features. This enhanced localization indicates VAPT’s stronger ability to capture salient visual patterns. Consequently, VAPT not only improves performance over VPT but also enhances the interpretability of the model by providing more coherent and precise visual explanations.

\begin{figure}[ht]
\vskip -0.2in      
\begin{center}
\centerline{\includegraphics[width=0.95\linewidth]{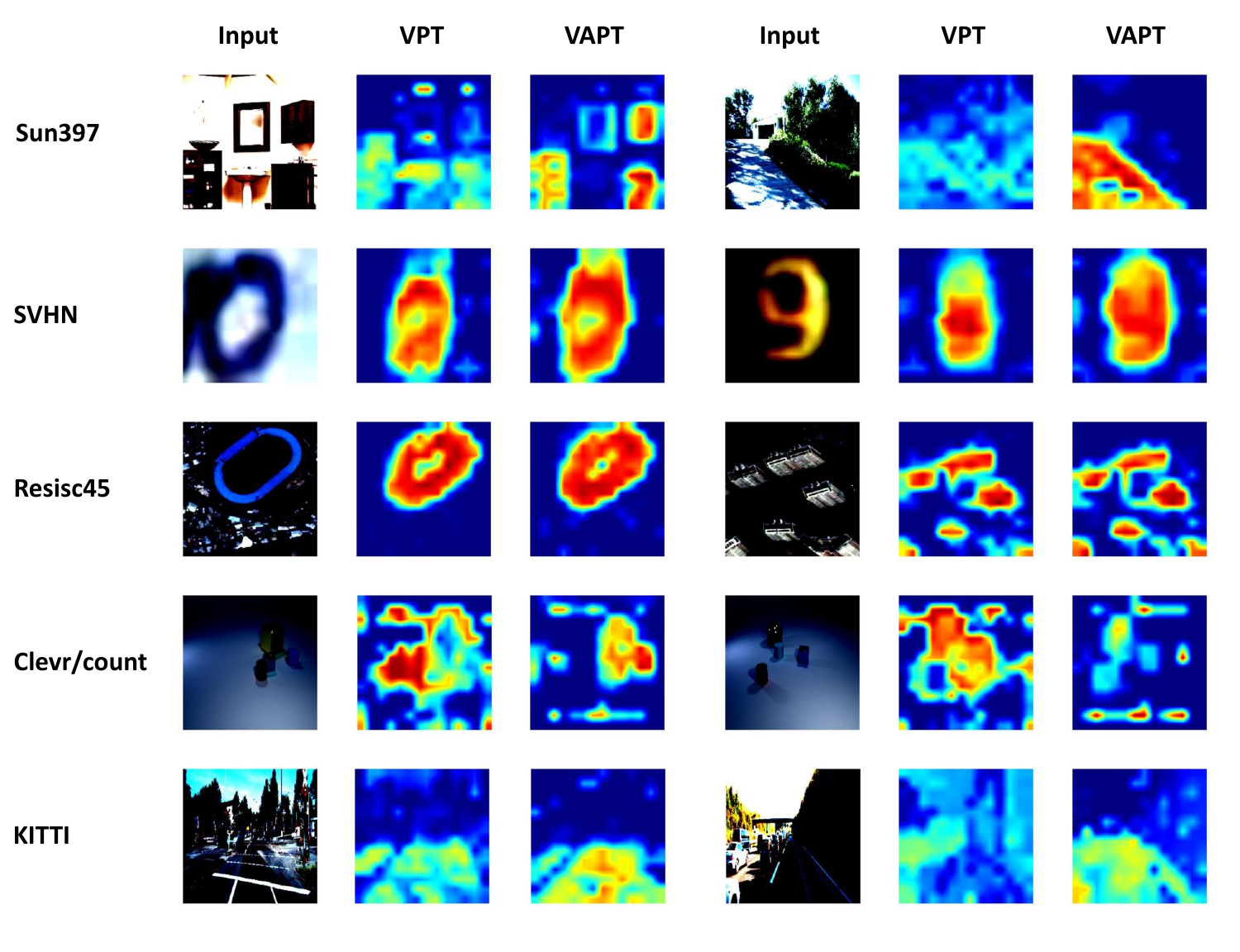}}
\caption{GradCAM visualization of VPT and VAPT on five VTAB-1K datasets. Red regions indicate areas of higher class activation. From left to right: the input image after standard data augmentation, the GradCAM output from VPT, and the GradCAM output from VAPT.}
\label{fig:gradcam}
\end{center}
\vskip -0.3in
\end{figure}

{

\subsection{Multi-Modal Experiments}
\label{app:multimodal}

To investigate the generalizability of Visual Adaptive Prompt Tuning beyond uni-modal visual recognition, we extend our evaluation to a multi-modal setting, specifically image-caption retrieval. This experiment assesses whether the enhanced functional expressiveness of adaptive prompt experts can facilitate better alignment between visual representations and textual semantics within a contrastive learning framework.

\begin{table}[t]
\vspace{-0.2em}
\centering
\caption{\textbf{Image-Caption Retrieval results on ImageNet.} We compare VAPT against VPT using a CLIP ViT-B/32 backbone in a 16-shot setting. The models are evaluated using Recall@100 on the validation set. \textbf{Bold} highlights the best results.}
\label{tab:multimodal}
\setlength{\tabcolsep}{12pt}
\renewcommand{\arraystretch}{1.2}
\resizebox{0.7\columnwidth}{!}{\begin{tabular}{l c c c}
\toprule
\multirow{2}{*}{Method} & \multicolumn{3}{c}{Recall@100 (\%)} \\
\cmidrule(lr){2-4}
 & Image $\to$ Text & Text $\to$ Image & Average \\
\midrule
VPT & 79.05 & 74.80 & 76.93 \\
\rowcolor{green!10} \textbf{VAPT (Ours)} & \textbf{82.66} & \textbf{77.72} & \textbf{80.19} \\
\midrule
\textit{Improvement} & \textit{+3.61} & \textit{+2.92} & \textit{+3.26} \\
\bottomrule
\end{tabular}}
\vspace{-0.5em}
\end{table}

\textbf{Experimental Setup.} We perform image-text retrieval on the ImageNet dataset, enriched with rich natural language descriptions as proposed by \citet{fang2022data}. We employ the pre-trained CLIP ViT-B/32 model \citep{radford2021learning} as the backbone architecture. Consistent with our main experiments, VAPT is applied exclusively to the visual encoder, while the text encoder remains frozen. We adopt a few-shot learning protocol, utilizing 16 samples per class, which amounts to 16,000 total training samples across the 1,000 ImageNet classes. Both the baseline VPT and our VAPT method utilize a prompt length of $N_p = 10$, resulting in a marginal parameter increase of only 0.19\% relative to the backbone. The models are optimized using SGD with a learning rate of 0.01 and a global batch size of 64. The training duration is set to 2 epochs, utilizing the first epoch for warmup, and the objective function is the standard symmetric cross-entropy loss used in CLIP pre-training. Evaluation is conducted on 10,000 samples from the ImageNet validation set, reporting Recall@100 (R@100) for both Image-to-Text (I$\to$T) and Text-to-Image (T$\to$I) retrieval directions.

\textbf{Results.} The empirical results, summarized in Table~\ref{tab:multimodal}, demonstrate that VAPT significantly outperforms VPT in the few-shot cross-modal retrieval task. VAPT achieves an average R@100 of \textbf{80.19\%}, surpassing the VPT baseline by \textbf{3.26\%}. This performance gain is consistent across both retrieval directions, with improvements of \textbf{3.61\%} for Image-to-Text and \textbf{2.92\%} for Text-to-Image retrieval. These findings suggest that the input-conditional nature of VAPT prompt experts allows the visual encoder to dynamically emphasize features that correlate more effectively with natural language descriptions. Consequently, VAPT achieves superior alignment in the multi-modal embedding space compared to static prompting strategies, without necessitating extensive parameter updates or modification of the text encoder.

}

\section{Use of Large Language Models}

Large language models were employed solely for editorial purposes, including grammar correction and spelling refinement. They were not used for content generation, data analysis, or the design of experiments.

\end{document}